\def\eqref#1{equation~\ref{#1}}
\def\1{\bm{1}}
\DeclareMathAlphabet{\mathsfit}{\encodingdefault}{\sfdefault}{m}{sl}
\SetMathAlphabet{\mathsfit}{bold}{\encodingdefault}{\sfdefault}{bx}{n}
\newtheorem{theorem}{Theorem}[section]  
\newcommand{\rebuttal}[1]{#1}
\title{\proj: Towards Generative Modeling Paradigm of Coupled physics}
\author{%
Tianrun Gao\footnotemark[1]~~\footnotemark[2]~~\textsuperscript{1,2}\;
Haoren Zheng\footnotemark[1]~~\footnotemark[2]~~\textsuperscript{1,3}\;
Wenhao Deng\footnotemark[1]~~\textsuperscript{1}\;
Haodong Feng\textsuperscript{1}\;\\
\textbf{Tao Zhang\footnotemark[2]~~\textsuperscript{1,4}\;
Ruiqi Feng\textsuperscript{1}\;
Qianyi Chen\footnotemark[3]~~\textsuperscript{1}\;
Tailin Wu\footnotemark[3]~~\textsuperscript{1}}\\
% \protect\texttt{\{gaotianrun, zhenghaoren, dengwenhao, chenqianyi, wutailin\}@westlake.edu.cn}\\
\textsuperscript{1}School of Engineering, Westlake University;\\
\textsuperscript{2}Department of Geotechnical Engineering, Tongji University;\\
\textsuperscript{3}Global College, Shanghai Jiao Tong University;\\
\textsuperscript{4}State Key Laboratory of Advanced Nuclear Energy Technology, Nuclear Power Institute of China;\\
% \protect\texttt{\{gaotianrun, zhenghaoren, dengwenhao\}@westlake.edu.cn}\\
\protect\texttt{\{gaotianrun, chenqianyi, wutailin\}@westlake.edu.cn}
}
\newcommand{\proj}{\textit{GenCP}\xspace}
\begin{document}

\maketitle

\begin{abstract}

Real-world physical systems are inherently complex, often involving the coupling of multiple physics, making their simulation both highly valuable and challenging. Many mainstream approaches face challenges when dealing with decoupled data. Besides, they also suffer from low efficiency and fidelity in strongly coupled spatio-temporal physical systems. Here we propose \proj, a novel and elegant generative paradigm for coupled multiphysics simulation. By formulating coupled-physics modeling as a probability modeling problem, our key innovation is to integrate probability density evolution in generative modeling with iterative multiphysics coupling, thereby enabling training on data from decoupled simulation and inferring coupled physics during sampling. We also utilize operator-splitting theory in the space of probability evolution to establish error controllability guarantees for this ``conditional-to-joint'' sampling scheme. We evaluate our paradigm on a synthetic setting and three challenging multiphysics scenarios to demonstrate both principled insight and superior application performance of \proj. Code is available at this repo: \href{https://github.com/AI4Science-WestlakeU/GenCP}{github.com/AI4Science-WestlakeU/GenCP}.

\end{abstract}
\begingroup
\renewcommand{\thefootnote}{}%
\footnotetext{$^*$Equal contribution. $^\dagger$Work done as an intern at Westlake University. $^\ddagger$Corresponding author.}%
% \footnotetext{$^*$Equal contribution. $^\dagger$Work done as an intern at Westlake University. $^\ddagger$Corresponding author. Correspondence to: chenqianyi@westlake.edu.cn, wutailin@westlake.edu.cn}%
\addtocounter{footnote}{-1}%
\endgroup
\section{Introduction}
Most real-world physical systems are governed by the intricate interplay of multiple physical processes spanning diverse disciplines \citep{yang2025multiphysics}. As a result, multiphysics problems are widely recognized as both fundamental and practically valuable. They arise across a broad range of applications, including aerospace engineering \citep{malikov2024validation, hu2024fast}, biological engineering \citep{pramanik2024computational, gerdroodbary2025predictive}, and civil engineering \citep{wang2025dynamic, wijesooriya2020uncoupled}. Despite their importance, accurate simulation of coupled physics remains notoriously difficult \citep{mccabe2024multiple}, largely due to strong cross-physics interactions and the resulting high system complexity, which are far more challenging to model than in single-physics settings.

Numerical multiphysics simulation methods include tightly coupled methods, which achieve high fidelity by solving the entire system jointly \citep{knoll2004jacobian, yu2025high}, often incur prohibitive complexity and computational costs in real-world applications \citep{guo2025fully}. Loosely coupled methods, which solve each physics field separately and iteratively exchange information until convergence, are more widely adopted, balancing practicality, efficiency, and acceptable precision \citep{lohner2006extending}. However, both approaches suffer from requirements of extensive interdisciplinary expertise for solver development, and high computation demands \citep{fan2024differentiable}.

Surrogate models \citep{mccabe2024multiple} and neural operators  \citep{li2025multi}, therefore, have emerged as an alternative to accelerate multiphysics simulation, given the limitations of numerical methods. \rebuttal{Most of these models, however, rely heavily on coupled solutions as training data, making data acquisition at least five times more computationally expensive compared to using decoupled data \citep{degroote2008stability, causin2005added}.} To enable training on more accessible data from decoupled physics, surrogate models further borrow ideas from numerical solvers, such as the Gauss–Seidel framework \citep{milaszewicz1987improving} and perform ADMM-like \citep{deng2017parallel} iterative inference to approximate the coupled solution \citep{lyu2025novel, gao2024predicting}.
Although improving efficiency and leveraging decoupled data, surrogate models struggle with complex spatiotemporal dynamics due to their limited ability to capture high-frequency, high-dimensional, and stochastic behaviors \citep{liang2024m}. The defect motivates emerging efforts in generative simulation \citep{fotiadis2025adap}.
Notably, existing generative approaches primarily focus on single-physics problems \citep{sun2023unifying} or directly learn multiphysics solutions from coupled data \citep{park2021heat}, largely overlooking the challenge of learning coupled physics from decoupled training data. A recent study \citep{zhang2025mpde} explored embedding coupling iterations into every denoising step of diffusion models to enable coupled sampling. However, the study lacks a rigorous theoretical foundation or guarantees of reliability. Emerging generative approaches show promise for high-fidelity modeling but typically rely on coupled data or lack theoretical guarantees of reliability. The above challenges raise a critical \textbf{ research question}: \textit{How can we develop a framework that learns coupled physics from decoupled training data while ensuring high fidelity, efficiency, and reliability?}

To address the challenges, we introduce \textbf{Gen}erative \textbf{C}oupled \textbf{P}hysics Simulation (\proj), a novel and principled paradigm for multiphysics simulation, which achieves decoupled training and coupled inference with ``3H'', combining \uwave{high fidelity} from generative modeling, \uwave{high efficiency} from our ``coupling in flow'' design, and \uwave{high reliability} guaranteed by numerical theory.

We begin by clarifying that the essence of solving coupled problems can be reformulated as modeling probability density evolution in functional spaces of physical fields. During training on decoupled data, we employ flow matching \citep{lipman2022flow} to learn the evolution of conditional probability, where each physical field evolves from noise to its target conditioned on the other one. During inference, we apply operator splitting in the functional space at each flow step, thereby merging conditional probability transitions into a coherent joint distribution transition. Our method physically corresponds to iteratively solving the coupled fields in the noisy latent space as they evolve toward the solution. Figure \ref{fig:workflow} illustrates the overall schematic of the \proj paradigm. 

Starting from the continuity equation of the joint distribution, we show that the flow matching \citep{lipman2022flow} sampling process can be reinterpreted with an operator-splitting scheme for probability density functions. We prove in a Hilbert space that our proposed joint inference method enjoys error controllability guarantees. 
On top of the theoretical foundation, we examine \proj on both naive and complex settings. First, we design a synthetic dataset in 2D space to demonstrate the joint distribution sampling effect from learning the conditional distribution. Then, we evaluate \proj on three challenging multiphysics settings and compare it against surrogate-based and other generative paradigms. Across these tasks, \proj consistently outperforms existing approaches with error reducing ranging from 12.54\% to 42.85\%, and even exceeding 65\% on certain metrics, demonstrating its effectiveness and robustness in coupled scenarios.

\begin{figure}
    \centering
    \includegraphics[width=1\linewidth]{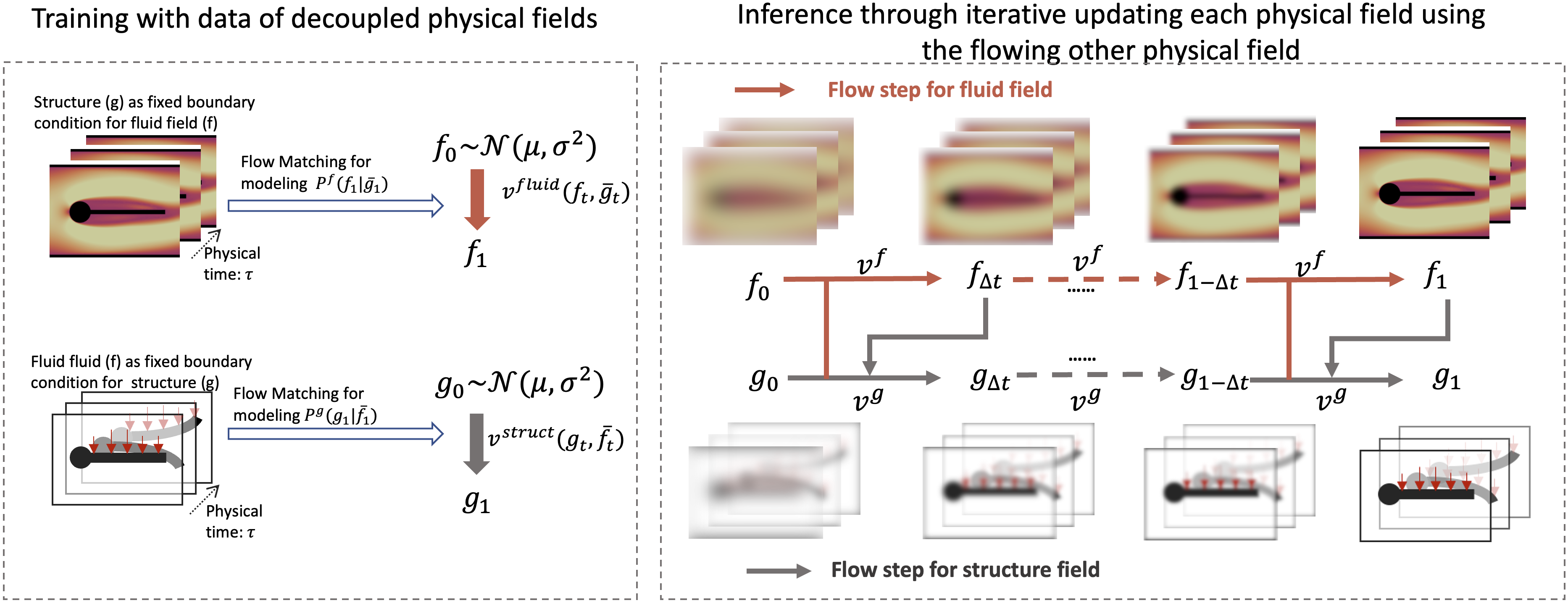}
    \caption{The schematic of \proj. With a pretrained model with decoupled physics, \proj could manage to achieve coupled physics during flow steps. Note that here we only use Lie-Trotter Splitting as a demonstration.}

    \vspace{-3mm}
    \label{fig:workflow}
\end{figure}

The contributions of our work include:
(1) We reformalize generative coupled simulation by modeling it as probability density evolution in the functional space of the physical field, offering a principled view of the problem.
(2) We propose \proj, an efficient generative paradigm that achieve coupled inference with decoupled learning by introducing an operator splitting mechanism into the flow matching process.
(3) We establish a theoretical guarantee by connecting the flow-matching continuity equation with operator splitting, and prove that our scheme enjoys controllable error bounds.
(4) We conduct concrete evaluations on both synthetic setting and challenging multiphysical scenarios, where \proj consistently outperforms surrogate and generative baselines.

\section{Related Work}
In this section, we briefly review the related work on both numerical and learning-based coupling simulation, highlighting key insights as well as the gaps that motivate our approach. A full discussion is provided in Appendix \ref{app: related work}.

\vspace{-1mm}
\paragraph{Numerical simulation.}
Classical numerical solvers are typically categorized into tightly coupled (monolithic) and loosely coupled (partitioned) schemes. Monolithic approaches enforce interface conditions exactly and ensure strong stability, but are computationally prohibitive \citep{ruan2021solid, sun2024immersed}. Partitioned approaches are more efficient and easier to implement, though they are often prone to unstable convergence \citep{MacNamara_Strang_2016, yue2011picard, ha2023semi, fourey2017efficient}. To mitigate this trade-off, techniques such as operator splitting \citep{godunov1959finite, strang1968construction, trotter1959product} offer theoretically grounded compromise between strictly monolithic and partitioned approaches.

\vspace{-1mm}
\paragraph{Learning-based simulation.}
When coupled training data are available, neural operators have been extended to directly model multiphysics PDEs with enhanced cross-field communication, underscoring the importance of functional priors \citep{rahman2024pretraining, li2025multi, kerrigan2023functional}. In settings where only decoupled data are accessible, surrogate models aim to reconstruct coupled solutions via iterative inference, yet often fail to capture high-frequency, high-dimensional dynamics \citep{sobes2021ai, tiba2025machine}. More recently, generative approaches have been explored to improve fidelity, but existing efforts either remain confined to single-physics scenarios or lack rigorous theoretical guarantees, limiting their applicability \citep{zhang2025mpde, sun2023unifying}.

\section{Method}
We organize the methodology into three parts. Section~\ref{sec:prob-theory} formulates the coupled problem, introduces the weak continuity-equation perspective, and explains how decomposing the velocity field naturally leads to operator splitting as the basis for our method. Section~\ref{sec:flowmatch} describes the time-parameterized linear interpolation used to produce instantaneous-velocity targets and the resulting training objectives. Section~\ref{sec:inference} presents coupled inference via operator splitting together with an informal error bound. Finally, Appendix~\ref{app:splitting-theory} provides the precise assumptions and complete technical proofs of stability, consistency, and convergence.

\subsection{Problem reformulation}
\label{sec:prob-theory}

Let \(D\subset\mathbb{R}^d\) be a bounded open domain where the physical fields live. The first field is a function
\(
f: D \to \mathbb{R}^{d_f}, f \in \mathcal{F},
\)
and the second field is
\(
g: D \to \mathbb{R}^{d_g}, g \in \mathcal{G}.
\)
For concreteness and implementation one may take \(\mathcal{F}=L^2(D;\mathbb{R}^{d_f})\) and \(\mathcal{G}=L^2(D;\mathbb{R}^{d_g})\), which are separable Hilbert spaces. The joint state is \(u=(f,g)\) in the product space
\(
\mathcal{U} := \mathcal{F}\times\mathcal{G},
\|u\|_{\mathcal{U}}^2 = \|f\|_{\mathcal{F}}^2 + \|g\|_{\mathcal{G}}^2.
\)
We consider probability laws \(\mu\) on \(\mathcal{U}\) with finite second moment, denoted \(\mu\in\mathcal{P}_2(\mathcal{U})\). The coupled evolution of the joint state is described by a family \(\{\mu_t\}_{t\in[0,1]}\). In flow-matching, we seek a time-dependent velocity vector field
\(
v(t,u):[0,1]\times\mathcal{U}\to\mathcal{U},
\)
such that \(\{\mu_t\}\) is transported by \(v\).

Fully coupled trajectories are typically unavailable. Instead we assume access to decoupled single-field solver data over a unit physical time step, where subscripts \(0\) and \(1\) denote the beginning and end of that solver step in physical time (not to be confused with the normalized interpolation time \(t\in[0,1]\) used later for flow matching):
\(
\mathcal{D}_f=\{(f_0,g_0)\mapsto (f_1,\bar{g})\},
\mathcal{D}_g=\{(f_0,g_0)\mapsto (\bar{f},g_1)\}.
\)
Here $\bar g$ means that the $g$-field is held fixed while $f$ evolves from $f_0$ to $f_1$, and $\bar f$ means that the $f$-field is frozen while $g$ evolves from $g_0$ to $g_1$. The two-field presentation is for clarity; \proj can extend to $m$ fields by learning from decoupled datasets where each field evolves while others are frozen, and composing them in a cyclic order.

\paragraph{Weak continuity equation.}

\rebuttal{We first recall the strong-form continuity equation. If the joint probability law $\mu_t$ admits a density $\rho_t(u)$ on the functional state space $\mathcal{U} = \mathcal{F} \times \mathcal{G}$, and if $\rho_t$ is sufficiently regular, then its evolution under a time-dependent velocity field $v(t,u)$ satisfies the strong-form PDE
\begin{equation}
\partial_t \rho_t(u) + \nabla_u \!\cdot \big( \rho_t(u)\, v(t,u) \big) = 0,
\label{eq:strong}
\end{equation}
expressing conservation of probability mass as it is transported by $v$. In finite dimensions this PDE is well-defined, but for distributions over function spaces several obstacles arise: empirical measures coming from decoupled physics solvers are typically singular and do not admit a density $\rho_t$; the divergence operator $\nabla_u\!\cdot(\rho_t v)$ is not meaningful when $u$ represents functions rather than vectors; and even when densities exist, numerical or learned models cannot reliably approximate derivatives in infinite-dimensional spaces.}

\rebuttal{These limitations motivate the use of the \emph{weak continuity equation} \citep{kerrigan2023functional},
which defines the evolution of $\mu_t$ only through its action on smooth
test functions $\varphi:\mathcal{U}\to\mathbb{R}$:
\begin{equation}
\label{eq:weak}
\int_0^1 \int_{\mathcal{U}} 
\Big( \partial_t \varphi(u) + \langle D\varphi(u),\, v(t,u)\rangle_{\mathcal{U}} \Big)\, d\mu_t(u)\, dt = 0,
\end{equation}
where $v(t,u)$ is the time-dependent velocity vector field. 
Instead of differentiating the measure or its density, the weak form enforces
that \emph{all observables evolve consistently with the velocity field}.
This provides a mathematically well-posed description of measure transport 
in infinite-dimensional settings such as $\mathcal{F} \times \mathcal{G}$.}

\paragraph{Decomposing the weak equation.}
To exploit the decoupled datasets we decompose
Because the weak formulation is linear in $v$, we obtain
\[
\int_0^1 \!\! \int_{\mathcal{U}} 
\big(\partial_t \varphi(u) + (\mathcal{L}_f(t) + \mathcal{L}_g(t))\varphi(u)\big)\, d\mu_t(u)\,dt = 0,
\]
where the component Liouville operators are
\[
\mathcal{L}_f(t)\varphi(u) := \langle v^{(f)}(t,u),D\varphi(u)\rangle_{\mathcal{U}},\qquad
\mathcal{L}_g(t)\varphi(u) := \langle v^{(g)}(t,u),D\varphi(u)\rangle_{\mathcal{U}}.
\]
Thus if we can learn $v_f$ and $v_g$ from $\mathcal{D}_f$ and $\mathcal{D}_g$, their sum recovers the generator of the joint weak evolution. The decomposition enables us to consider the weak continuity equations formed separately for each component velocity field.
For the $f$-component:
$\int_0^1 \int_{\mathcal{U}} 
\Big( \partial_t \varphi(u) + \langle D\varphi(u),\, v^{(f)}(t,u)\rangle_{\mathcal{U}} \Big)\, d\mu_t(u)\, dt = 0.$
For the $g$-component:
$\int_0^1 \int_{\mathcal{U}} 
\Big( \partial_t \varphi(u) + \langle D\varphi(u),\, v^{(g)}(t,u)\rangle_{\mathcal{U}} \Big)\, d\mu_t(u)\, dt = 0.$

\paragraph{Lie--Trotter splitting.}

\rebuttal{
The Lie--Trotter splitting scheme provides a principled mechanism for recombining the two learned partial dynamics during inference.  
Because $v = v^{(f)} + v^{(g)}$, the full coupled evolution can be approximated over a small time step $\tau$ by first evolving $f$ with $g$ held fixed (the flow induced by $v^{(f)}$), and then evolving $g$ with $f$ held fixed (the flow induced by $v^{(g)}$).  
In the limit of small $\tau$, this alternating update is consistent with the joint flow generated by $v^{(f)} + v^{(g)}$, which explains how separately learned conditional velocities can be merged into a coherent coupled inference procedure.
}

\rebuttal{To formalize this, let $S_{t+h\leftarrow t}$ denote the observable propagator
\(
(S_{t+h\leftarrow t}\varphi)(u) := \varphi(\Phi_{t+h\leftarrow t}(u)),
\)
where $\Phi_{t+h\leftarrow t}$ is the flow map of the full velocity field~$v$.
For sufficiently small $h$, the propagator admits the first-order approximation
\[
S_{t+h\leftarrow t} \;\approx\;
S^{(g)}_{t+h\leftarrow t} \circ S^{(f)}_{t+h\leftarrow t},
\]
where $S^{(f)}$ and $S^{(g)}$ correspond to the flows induced by $v^{(f)}$ and $v^{(g)}$, respectively.
By duality, this translates to the measure-level operator splitting
\begin{equation}
\label{eq:operator_splitting}
\mu_{t+h}
\;\approx\;
\big(\Phi^{(g)}_{t+h\leftarrow t}\circ \Phi^{(f)}_{t+h\leftarrow t}\big)_{\#}\,\mu_t.
\end{equation}
Iterating this update with step size $\tau$ yields the classical
Lie--Trotter product formula used during inference 
\citep{trotter1959product,strang1968construction,BATKAI20112163}.  
This justifies recombining the separately learned conditional velocity fields through alternating partial flows, thereby recovering an approximation of the true coupled dynamics.
Rigorous stability, consistency, and convergence results are deferred to Appendix~\ref{app:splitting-theory}.}

\subsection{Time-parameterized linear interpolation and training}
\label{sec:flowmatch}
We now give a fully constructive procedure that produces instantaneous-velocity targets from decoupled data.

\paragraph{Reference distributions.} Choose tractable reference measures \(\pi_{\mathcal{F}}\) on \(\mathcal{F}\) and \(\pi_{\mathcal{G}}\) on \(\mathcal{G}\). In practice, these may be simple Gaussian priors defined on the finite discretization of \(D\).

\paragraph{Learning the velocity for the field.} Sample \((f_1,\bar{g})\sim\mathcal{D}_f\) and independent references \(z_f\sim\pi_{\mathcal{F}}\), \(z_g\sim\pi_{\mathcal{G}}\). For \(t\in[0,1]\) define the linear interpolants \(f_t=(1-t)z_f + t f_1\) and \(g_t=(1-t)z_g + t \bar{g}.\)
The instantaneous derivative along this interpolation is vector
\[
 \frac{df_t}{dt} = v_f = f_1 - z_f.
\]
We use \((f_t,g_t,t)\) as input to the conditional velocity model for \(f\) and take \(v_f\) as target. 
This setup yields a random supervisory signal whose expectation conforms to the conditional velocity stipulated in the weak continuity equation. 
Symmetrically, sample \((\bar{f},g_1)\sim\mathcal{D}_g\) with independent references \(z_f',z_g'\) and set \(f_t=(1-t)z_f' + t \bar{f}\), \(g_t=(1-t)z_g' + t g_1\). The vector is given by
\(
\frac{dg_t}{dt} = v_g = g_1 - z_g'.
\)
Then \((f_t,g_t,t)\) is the input and \(v_g\) is the target for the conditional velocity of \(g\).

\paragraph{Learning objectives.} Parameterize two operator-valued models \(\hat v_f(f,g,t;\theta_f)\) and \(\hat v_g(f,g,t;\theta_g)\) that map inputs \((f_t,g_t,t)\) to elements of \(\mathcal{F}\) and \(\mathcal{G}\), respectively. Minimize the mean-square losses
\begin{align}
\mathcal{L}_f(\theta_f)
&= \mathbb{E}_{t,(f_1,\bar{g})\sim\mathcal{D}_f,\,z_f,z_g}
\big[\| v_f - \hat v_f(f_t,g_t,t;\theta_f)\|_{\mathcal{F}}^2\big],
\\
\mathcal{L}_g(\theta_g)
&= \mathbb{E}_{t,(\bar{f},g_1)\sim\mathcal{D}_g,\,z_f',z_g'}
\big[\| v_g - \hat v_g(f_t,g_t,t;\theta_g)\|_{\mathcal{G}}^2\big].
\end{align}
The two training procedures use different decoupled datasets, which match typical solver outputs. The specific training algorithm can be found in Appendix \ref{app:pseudocode_training}.

\subsection{Coupled inference via operator splitting}
\label{sec:inference}

\paragraph{Coupled inference.} After training we obtain learned conditional velocity estimators \(\hat v_f\) and \(\hat v_g\). We then recombine their induced flows using Lie--Trotter splitting to produce coupled samples. Concretely, let \(\Phi^{(A)}_s\) be the flow map obtained by integrating \(\hat v^{(A)}=(\hat v_f,0)\) and \(\Phi^{(B)}_s\) the flow map for \(\hat v^{(B)}=(0,\hat v_g)\). For a step size \(\tau=1/N\), the Lie--Trotter push-forward update is given by Eq. \ref{eq:operator_splitting}, which at the state level corresponds to the explicit alternating update:

\begin{algorithm}[H]
\caption{Coupled Sampling via Lie--Trotter Splitting}
\label{alg:inference}
\begin{algorithmic}[1]
\State Input: initial $u_0=(f_0,g_0)$, trained operators $\hat v_f,\hat v_g$, steps $N$
\State $u \gets u_0$
\For{$k=0$ to $N-1$}
  \State $t \gets k/N$
  \State $f \gets f + \tau\,\hat v_f(f,g,t;\theta_f)$
  \State $g \gets g + \tau\,\hat v_g(f,g,t;\theta_g)$
  \State $u \gets (f,g)$
\EndFor
\State \Return $u$
\end{algorithmic}
\end{algorithm}

\paragraph{Theoretical guarantee of an acceptable error bound.}
The two error sources of the learned splitting scheme are the splitting discretization and the learning approximation. Under the stability and regularity hypotheses stated precisely in Appendix~\ref{app:splitting-theory}, one can show the following bound.

\begin{theorem}[Informal error bound]
\label{theorem: error bound}
Let \(\mu_1\) denote the true terminal measure at \(t=1\) generated by the unknown joint velocity \(v\). Let \(\mu^{(\tau,\mathrm{learn})}_1\) denote the terminal measure obtained by applying the learned Lie--Trotter composition with step size \(\tau\) using \(\hat v_f,\hat v_g\). If the learned fields uniformly approximate the true conditional velocities on the region visited by the dynamics with errors at most \(\varepsilon_f,\varepsilon_g\), then
\[
W_1\big(\mu^{(\tau,\mathrm{learn})}_1,\mu_1\big)
\le C_{\mathrm{stab}}\big(\tau + \varepsilon_f + \varepsilon_g\big),
\]
where \(W_1\) is the Wasserstein-1 distance and \(C_{\mathrm{stab}}\) depends on Lipschitz and growth constants for the flows. 
\end{theorem}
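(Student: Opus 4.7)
The plan is to split the total error via a triangle inequality and bound each piece independently. Let $\mu_1^{(\tau)}$ denote the terminal measure of the \emph{exact} Lie--Trotter scheme with step size $\tau$ that uses the true conditional velocities $v^{(f)},v^{(g)}$, and let $\mu_1^{(\tau,\mathrm{learn})}$ use the learned surrogates $\hat v_f,\hat v_g$ in their place. Then
\[
W_1\bigl(\mu_1^{(\tau,\mathrm{learn})},\mu_1\bigr)
\le
\underbrace{W_1\bigl(\mu_1^{(\tau,\mathrm{learn})},\mu_1^{(\tau)}\bigr)}_{\text{learning error}}
+
\underbrace{W_1\bigl(\mu_1^{(\tau)},\mu_1\bigr)}_{\text{splitting error}},
\]
and it suffices to bound the two summands by $C_1(\varepsilon_f+\varepsilon_g)$ and $C_2\tau$, respectively.

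For the splitting error I would invoke the classical first-order Lie--Trotter convergence theorem, adapted to the weak continuity equation in $\mathcal{U}=\mathcal{F}\times\mathcal{G}$ through the observable propagators $S_{t+h\leftarrow t}$ introduced in Section~\ref{sec:prob-theory}. The argument is consistency-plus-stability: a Taylor expansion of $S^{(g)}_\tau\circ S^{(f)}_\tau$ against $S_\tau$ yields a local truncation error of size $O(\tau^2)$ whose leading part is a commutator of the Liouville operators $\mathcal{L}_f,\mathcal{L}_g$, while the Lipschitz regularity of $\Phi^{(f)},\Phi^{(g)}$ provides stability of the exact composition on the time interval $[0,1]$. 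Telescoping the $N=1/\tau$ local errors gives a global bound $W_1(\mu_1^{(\tau)},\mu_1)\le C_2\tau$.

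For the learning error I would use a synchronous coupling: draw the same initial sample $u_0$ for both flows and let $u_k,\hat u_k\in\mathcal{U}$ be the iterates of the exact and learned schemes. Writing $e_k=\|u_k-\hat u_k\|_{\mathcal{U}}$ and adding and subtracting $\Phi^{(g)}_\tau\circ\Phi^{(f)}_\tau(\hat u_k)$, the triangle inequality gives
\[
e_{k+1} \le (1+L\tau)\,e_k + \tau\,(\varepsilon_f+\varepsilon_g) + O(\tau^2),
\]
where $L$ is the joint Lipschitz constant of the exact partial flows and the $\varepsilon$-terms come from the hypothesis that $\hat v_f,\hat v_g$ uniformly approximate $v^{(f)},v^{(g)}$ along the visited tube of states. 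A discrete Gronwall step, using $(1+L\tau)^N\le e^L$, produces $e_N\le \tfrac{e^L-1}{L}(\varepsilon_f+\varepsilon_g)$ uniformly in $\tau$. Taking expectation over the coupling and using the dual definition of $W_1$ yields the learning-error bound, and setting $C_{\mathrm{stab}}=\max\{C_2,\tfrac{e^L-1}{L}\}$ finishes the proof.

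The hard part will be making the coupling and Lipschitz arguments rigorous in the infinite-dimensional setting $\mathcal{U}=\mathcal{F}\times\mathcal{G}$. Two issues deserve care: (i) the learned operators act on function-valued inputs, so the uniform approximation bound must hold along both the true and the learned trajectories, which forces an a priori invariant-tube argument to ensure the flows do not leave a common region where $\hat v_f,\hat v_g$ are controlled; and (ii) the commutator estimate in the splitting step and the Lipschitz constants $L$ must be stated relative to norms on $\mathcal{U}$ under which the conditional velocity fields are genuinely Lipschitz, which is where the precise regularity hypotheses in Appendix~\ref{app:splitting-theory} are used. Once these assumptions are in place, the two bounds above combine into the stated estimate.
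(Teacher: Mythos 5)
Your proposal is correct and takes essentially the same route as the paper: a triangle inequality through the intermediate measure $\mu_1^{(\tau)}$ (the exact Lie--Trotter pushforward), a consistency-plus-stability argument (Taylor expansion of the split propagators, commutator-sized local truncation, telescoping over $N=1/\tau$ steps) for the splitting term, and a Gronwall-type/synchronous-coupling recursion for the learning term. The paper additionally presents a preliminary continuous-Gronwall estimate on the full flows in its Step~3, but the bound it actually uses in its Step~4 is the discrete per-step compounding that matches your $e_{k+1}\le(1+L\tau)e_k+\tau(\varepsilon_f+\varepsilon_g)+O(\tau^2)$ recursion, and your closing remarks about the invariant-tube and infinite-dimensional Lipschitz issues mirror the role of Assumptions (A1)--(A3).
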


The term \(\tau\) comes from the first-order nature of Lie--Trotter splitting, meaning the error introduced by time-splitting scales linearly with the step size. The terms \(\varepsilon_f,\varepsilon_g\) capture how well the learned conditional velocities approximate the true conditionals. Thus, improving either time resolution or regression accuracy reduces the total error.

Precise assumptions, the stability lemma for propagator composition, the infinitesimal consistency of the split generator, and the detailed convergence proof are collected in Appendix~\ref{app:splitting-theory}. Those technical statements track which Lipschitz and growth conditions are needed and make the informal bound above rigorous.

\section{Experiments}
In this section, we aim to answer the following 3 questions: 
\textbf{(1)} In the purely mathematical setting of probability distributions, can \proj enable training two separate flow matching models on datasets describing conditional distributions, and then directly combine them at inference to sample from the joint distribution?
\textbf{(2)} Beyond the mathematical case, can \proj maintain strong performance (effectively modeling coupling behaviors at inference with only decoupled training) in complex and high-dimensional coupled simulation problems?
\textbf{(3)} In addition to simulation accuracy, how does the inference efficiency of \proj compare with existing paradigms? 
Next, question (1) is addressed in Section \ref{sec:4.2}, and questions (2) and (3) are answered in Section \ref{sec:4.3} and Section \ref{sec:4.4}.

\subsection{Experimental setup}

We evaluate \proj on four representative cases: a simple 2D distribution, designed to directly illustrate our conditional-to-joint sampling paradigm, two canonical FSI benchmarks (Double-Cylinder and Ture-Hron) and a complex nuclear-thermal setting. To assess the capability of \proj, we compare it against two alternative paradigms for modeling coupled physics, each with two neural operator backbones (CNO and FNO*\footnote{FNO* denotes the FNO framework with SiT as its internal model, making the backbone more expressive for our learning objectives while retaining its functional properties.}). The two alternative paradigms represent (i) a surrogate model–based Picard-iteration approach and (ii) the M2PDE paradigm. 

Across all tasks, training is conducted exclusively on decoupled datasets, following Algorithm \ref{alg:training}. In practice, this means we model the distributional evolution of one variable while treating the other as a known control condition. A subset of the decoupled data is reserved as a validation set to evaluate learning on the decoupled setting, and the coupled data serve as the test set to assess the coupling performance. Further details on the datasets used can be found in Appendix \ref{appen: detailed dataset}.

At inference time, \proj performs coupled simulation by alternately evolving from noise to the target distribution using the learned conditional distributions via operator splitting. Specifically, we adopt the Lie–Trotter splitting method, as shown in Algorithm \ref{alg:inference}, to achieve a balanced trade-off between precision and computational cost.

For comparison among the paradigms, the surrogate-based paradigm trains a neural network that end-to-end maps system inputs to outputs on decoupled data and then applies Picard iteration at inference to approximate the coupled solution. The M2PDE paradigm instead trains a diffusion model on each decoupled dataset, using the DDIM estimation on the denoised result of one field as the condition to denoise the other. Detailed implementation settings and pseudo-code for baseline paradigms are provided in Appendix \ref{appen: implementation details}.

Full implementation details (including backbone architectures, sampling paradigms, and training configurations) are provided in Appendix. As an additional contribution, we also open-source the dataset used in this study to support further method development in coupled simulation.

\subsection{Synthetic setting on 2D distribution}\label{sec:4.2}

\begin{table}[]
\renewcommand{\arraystretch}{1.3}
\centering
\caption{Comparison of statistical distances between the target joint distributions and those generated from different paradigms.}
\label{tab:toy} % 给表格一个唯一的标签，用于引用
\resizebox{0.6\textwidth}{!}{%
\begin{tabular}{@{}ccccc@{}}
\toprule
                     & \multicolumn{2}{c}{Easy Distribution} & \multicolumn{2}{c}{Complex Distribution} \\ \cmidrule(l){2-5} 
                     & \proj             & M2PDE             & \proj            & M2PDE                \\ \midrule
$W_1$ & \textbf{0.4366}            & 0.5177            & \textbf{0.4928}           & 25450.5442           \\
MMD                  & \textbf{0.0095}            & 0.0141            & \textbf{0.0053}           & inf                  \\
Energy Distance      & \textbf{0.0411}            & 0.0625            & \textbf{0.0061}           & 332.3619             \\ \bottomrule
\end{tabular}%
}
\end{table}

To intuitively demonstrate the core idea of \proj, that learning $p(x\mid y)$ and $p(y\mid x)$ on decoupled samples suffices for direct joint sampling of $p(x,y)$ during inference, we first investigate it on the synthetic dataset. We note that M2PDE \citep{zhang2025mpde} also tried to sample on the joint distribution with a generative model based on training with conditional data. So here we compare our performance with M2PDE.

We design two synthetic distributions as synthetic dataset: an “Easy distribution” based on simple Gaussian mixture, and a “Complex distribution” with multiple Gaussian components arranged in a multimodal pattern. To obtain conditional information while avoiding direct joint sampling, we generate two complementary datasets for training that separately focus on $p(y|x)$ and $p(x|y)$ with perturbed marginal for $x$ and $y$, respectively. This setup ensures that models are trained only on conditionals, while the underlying target joint structure remains nontrivial. Further details of the synthetic setting could be seen in Appendix \ref{appen: detailed dataset}.

% 和图片格保持一致的统一高度，可按需要微调
\newcommand{\cellh}{3.6cm}
% 左侧竖排标签：固定高度 \cellh，垂直/水平都居中
\newcommand{\labcell}[1]{%
  \begin{minipage}[c][\cellh][c]{0.5cm}
    \centering \rotatebox[origin=c]{90}{\textbf{#1}}
  \end{minipage}%
}
% 右侧图片格：同样固定高度 \cellh，图片在盒子中居中
\newcommand{\imgcell}[1]{%
  \begin{minipage}[c][\cellh][c]{3.95cm}
    \centering
    \includegraphics[width=\linewidth,height=\cellh,keepaspectratio]{#1}
  \end{minipage}%
}

\begin{figure}[t]
    \centering
    \captionsetup{justification=justified, singlelinecheck=false}
    \setlength{\tabcolsep}{2pt}
    \renewcommand{\arraystretch}{1.0}

    \begin{tabular}{@{}c c c c@{}}
      % 顶部列标题
      & \textbf{Original} & \textbf{M2PDE} & \textbf{\proj} \\
      [-0.5em]

      % Row 1
      \labcell{Samples} &
      \imgcell{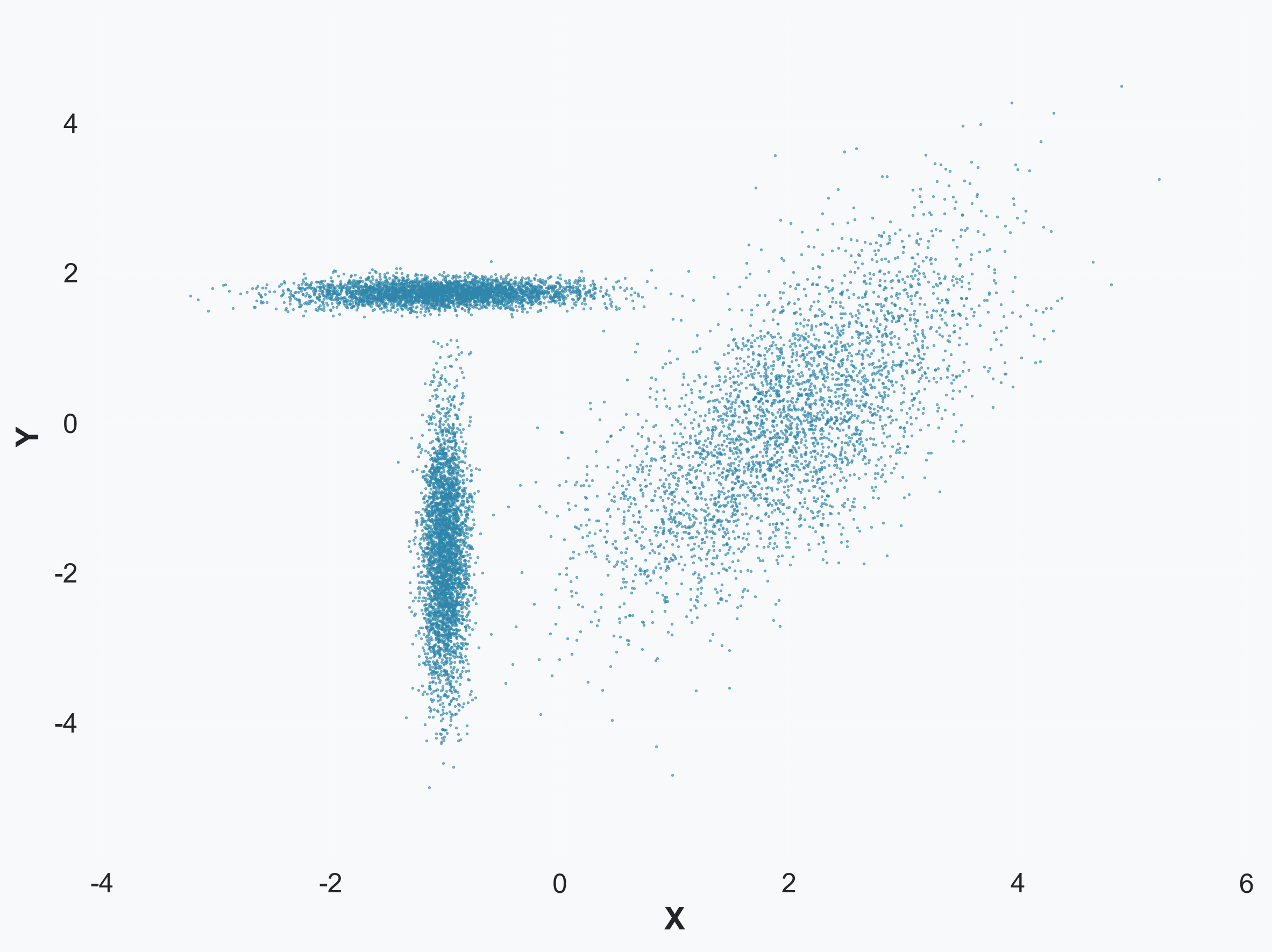} &
      \imgcell{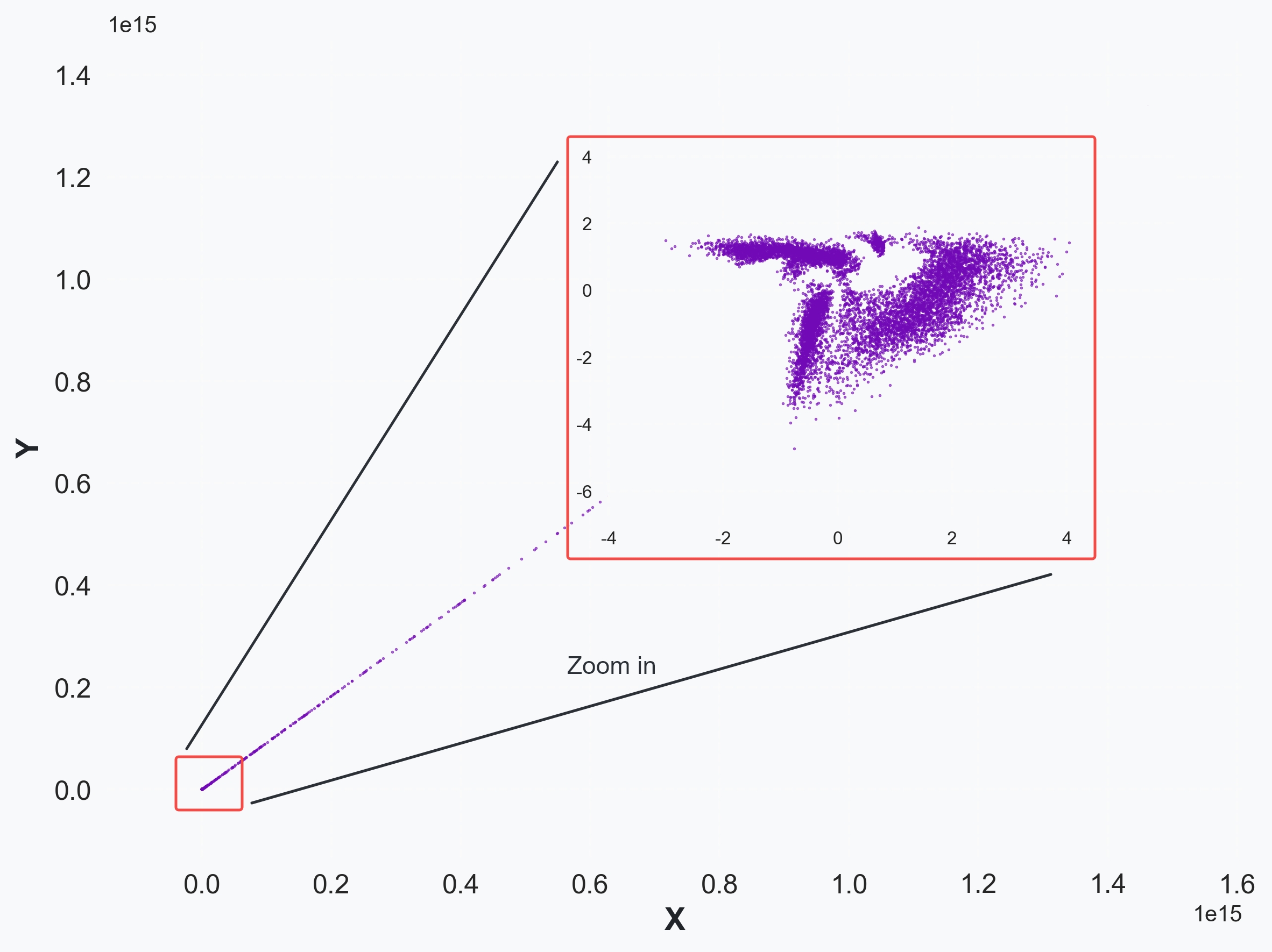} &
      \imgcell{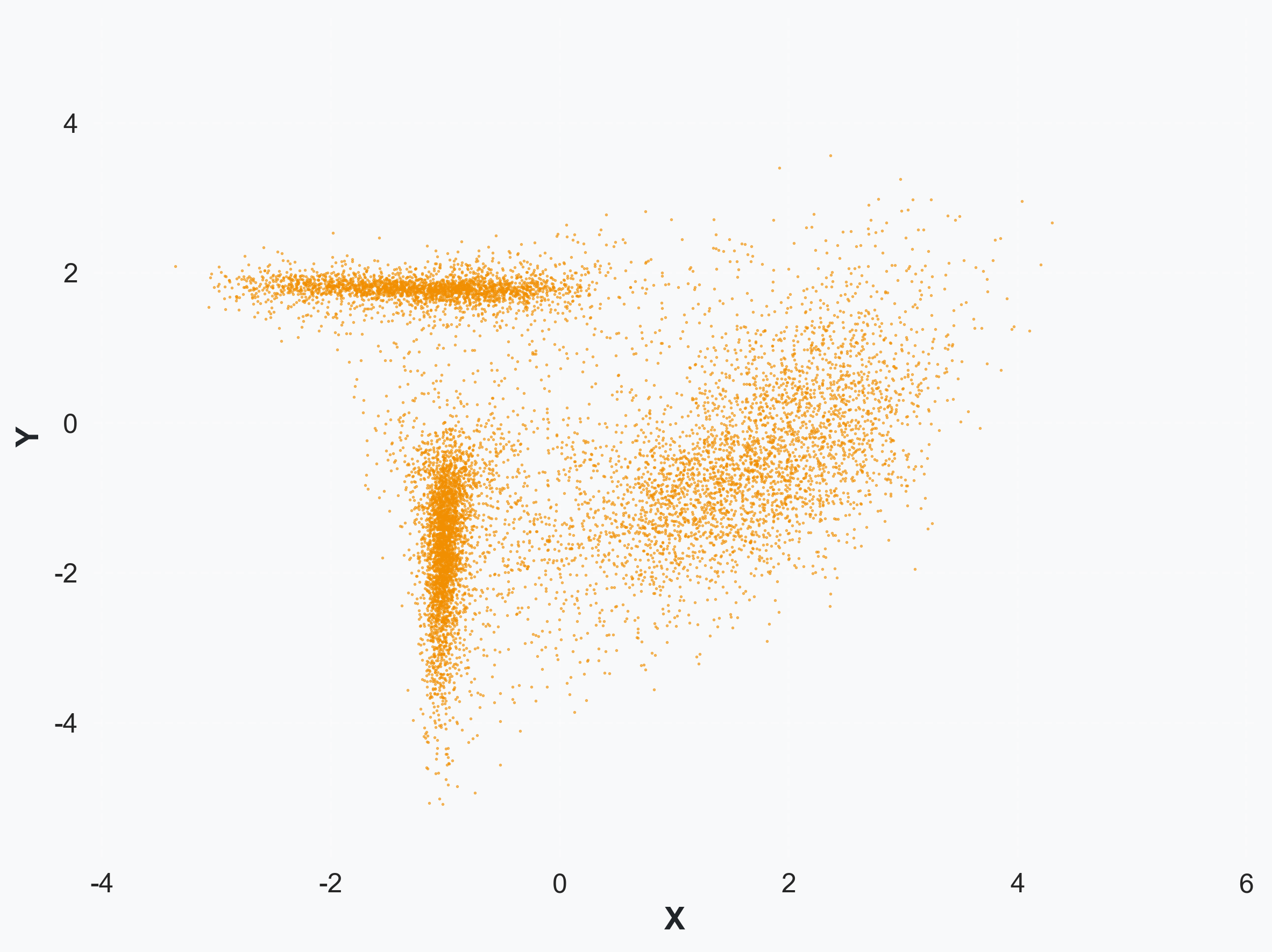} \\
    \end{tabular}

    \vspace{-3mm}
    
    \caption{Visualization of sampling results on complex distribution with different paradigms in the synthetic setting. In order to examine behavior except outlier sampling, the visualized results of M2PDE also include a zoomed-in view on the same coordinate region as the other views.}
    \label{fig:toy_vis}
    \vspace{-3mm}
\end{figure}

We visualize the sampling effect in Figure \ref{fig:toy_vis} and report quantitative metrics, including Wasserstein-1 distance, MMD, and Energy distance between the target distribution and the generated one in Table \ref{tab:toy}. The distributional distance results show that our paradigm slightly outperforms the baseline on simple distribution and achieves a substantial advantage on more complex distribution, maintaining stable performance while the baseline degrades severely. These results demonstrate that \proj can successfully recover the true joint distribution with only conditional training, closely matching the ground truth both visually and in distributional metrics. In contrast, the baseline method performs poorly: it not only fails to capture the main body of the distribution, but also exhibits strong instability that leads to frequent sampling of outliers. This weakness arises from its iterative-to-convergence design, as well as the error accumulation induced from using intermediate estimates as conditions, ultimately causing mode collapse or drift, especially in complex distributions.

% We also report mode coverage, NLL on held-out test points, and sensitivity to the splitting order (Lie vs Strang) and number of coupling substeps.

\subsection{Application to FSI scenarios}\label{sec:4.3}

We next evaluate GenCP on two fluid-structure coupling tasks, which exemplify real multiphysics coupling with strong bidirectional feedback. Both settings are widely used benchmarks for FSI and present significant challenges: strongly coupled fields, high-dimensional and high-frequency features, as well as nonlinear feedback loops. The FSI tasks here are formulated as follows: given 3 known steps of both fields, the goal is to predict the couple dynamics over the next 12 steps, using a model trained only with decoupled data.

In both settings, we describe fluid behavior using velocity fields ($u$, $v$) and a pressure field ($p$), while structural dynamics are represented by a signed distance field (SDF). Thus, the multiphysics representation of the FSI problem can be composed into a four-channel field defined on the same computational domain. Through this careful design, we avoid the common difficulty encountered when representing fluid-structure interface in normal FSI algorithms, as well as the errors at the boundary introduced by methods such as Arbitrary Lagrangian-Eulerian (ALE) \citep{korobenko2018recent} or Immersed Boundary Method (IBM) \citep{tian2014fsi}. This makes solving FSI problems from a data-driven perspective more convenient and accurate.

\newcommand{\methodrow}[3]{%
  \rotatebox{90}{\textbf{#1}} &
  \includegraphics[width=\linewidth]{#2} &
  \includegraphics[width=\linewidth]{#3} \\[0.2em]
}

\paragraph{Turek-Hron.}

Turek-Hron is a classical FSI case: a rigid cylinder is placed in cross-flow, followed by a flexible beam that bends under the influence of vortices shed in the wake. This setup is also widely used as a standard for evaluating numerical FSI solvers, making it a stringent test of fidelity. We conduct training and inference following the procedure described in the previous section.

Table \ref{tab:turek-hron} compares the relative L2 norm errors and inference costs of the three inference paradigms across two backbone models. On both FNO* and CNO backbones, \proj consistently achieves better test accuracy on coupled data than both surrogate-based and M2PDE paradigms. With the FNO* backbone, the average error across four fields is about 26.77\% lower than the best-performing baseline, and with the CNO backbone, the error is about 12.54\% lower. 
\rebuttal{Furthermore, the CNO backbone even achieves results on SDF that are close to those of Joint Training.}
In terms of efficiency, our paradigm demonstrates an extremely pronounced advantage: thanks to the operator-splitting design that embeds coupling directly into the flow matching process, \proj requires only 10 sampling steps to generate accurate coupled solutions. This design allows \proj to simultaneously maintain accuracy while significantly improving inference efficiency compared to surrogate–based explicit iterations or diffusion-based iterative coupling condition on estimated fields.

\begin{table}[hbtp]
\renewcommand{\arraystretch}{1.3}
\centering
\caption{The relative L2 norm error and inference time of different methods on the Turek-Hron setting.} 
\label{tab:turek-hron}
\resizebox{\textwidth}{!}{
\begin{tabular}{@{}cccccccccc@{}}
\toprule
\multicolumn{1}{c}{Rel L2 Norm} & \multicolumn{4}{c}{Validation on Decoupled Data} & \multicolumn{4}{c}{Test on Coupled Data} & \multirow{2}{*}{Inference Time} \\ 
% \cline{1-9}
\multicolumn{1}{c}{Field} & $u$ & $v$ & $p$ & \multicolumn{1}{c}{SDF} & $u$ & $v$ & $p$ & \multicolumn{1}{c}{SDF} & \\ \midrule

\rebuttal{Joint Training}                       & /       & /      & /      & /                               & \rebuttal{0.0088}        & \rebuttal{0.03441}       & \rebuttal{0.0544}       & \rebuttal{0.0079} & /                                                                            \\ \midrule
% 23.72s
M2PDE-FNO*                             & 0.0452        & 0.1582      & 0.1444        & 0.0206                               & 0.0590        & 0.2415       & 0.2474       & 0.2482                                & 277.20s                                          \\
Surrogate-FNO*                            & 0.0181        & 0.0756       & 0.0953      & 0.0087                                & 0.0550        & 0.2257       & 0.2553       & 0.0112                               & 93.20s                                         \\ 
Our \proj-FNO*                            & 0.0091        & 0.0471      & 0.0548        & 0.0069                                & \textbf{0.0396}        & \textbf{0.1678}       & \textbf{0.1897}       & \textbf{0.0081}                               & \textbf{19.50s}                                          \\ \midrule
 
M2PDE-CNO                        & 0.0497        & 0.2075       & 0.3466       & 0.03937                                & 0.05769        & 0.2809       & 0.4087       & 0.0390                              & 347.00s                                        \\
Surrogate-CNO                       & 0.0204        & 0.0994       & 0.1526       & 0.0205                                & 0.0469        & 0.1888       & 0.2278       & 0.0242                               & 300.25s                                        \\
Our \proj-CNO                           & 0.0150        & 0.0626       & 0.1187      & 0.0152                                & \textbf{0.0388}        & \textbf{0.1821}       & \textbf{0.2166}       & \textbf{0.0183}                               & \textbf{16.25s}                                         \\
\bottomrule
\end{tabular}
}
\end{table}

Figure \ref{fig:turek_vis} provides a visual comparison of the coupled solutions for each field across different paradigms using FNO* as the backbone. As shown, our method closely matches the ground truth, with error maps confirming its ability to preserve both global patterns and the high-frequency details induced by fluid-structure interactions, and it clearly outperforms the baseline methods. In particular, examining the reconstructed SDF field highlights that, although the surrogate-based method reports relatively low error values in Table \ref{tab:turek-hron}, it fundamentally fails to model the oscillatory bending dynamics of the beam. By contrast, \uwave{our \proj, while trained only on decoupled data, is the only method that can successfully capture these bending effects, which are intrinsic to true coupling}. 
In fact, the seemingly low error of surrogate models arises from their deterministic end-to-end prediction. In contrast, our probabilistic modeling approach accounts for both mode errors and stochastic noise, explaining why our visual results appear far superior to the baselines, even though the quantitative metrics show modest improvements.

\newcommand{\methodrowA}[3]{%
  \multirow{2}{*}{\raisebox{-2.2cm}{\rotatebox{90}{\textbf{#1}}}} &
  \includegraphics[width=\linewidth]{#2} &
  \includegraphics[width=\linewidth]{#3} \\[0.2em]
}

\newcommand{\methodrowB}[2]{%
  & \includegraphics[width=\linewidth]{#1} &
    \includegraphics[width=\linewidth]{#2} \\[0.2em]
}

\begin{figure}[htbp]
\centering

\resizebox{0.85\textwidth}{!}{
\begin{tabular}{@{} >{\centering\arraybackslash}m{0.2cm}  
                    m{0.48\textwidth}  
                    m{0.48\textwidth} @{}}
  % 第一组
  \methodrowA{Surrogate-FNO*}{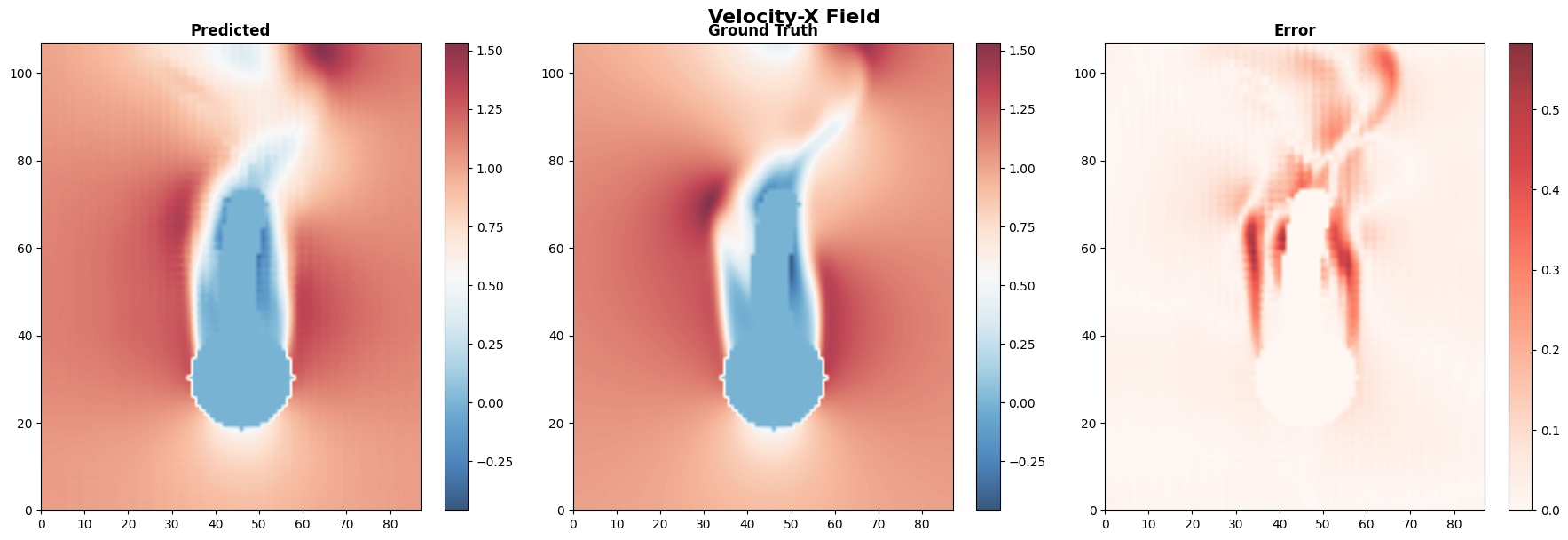}{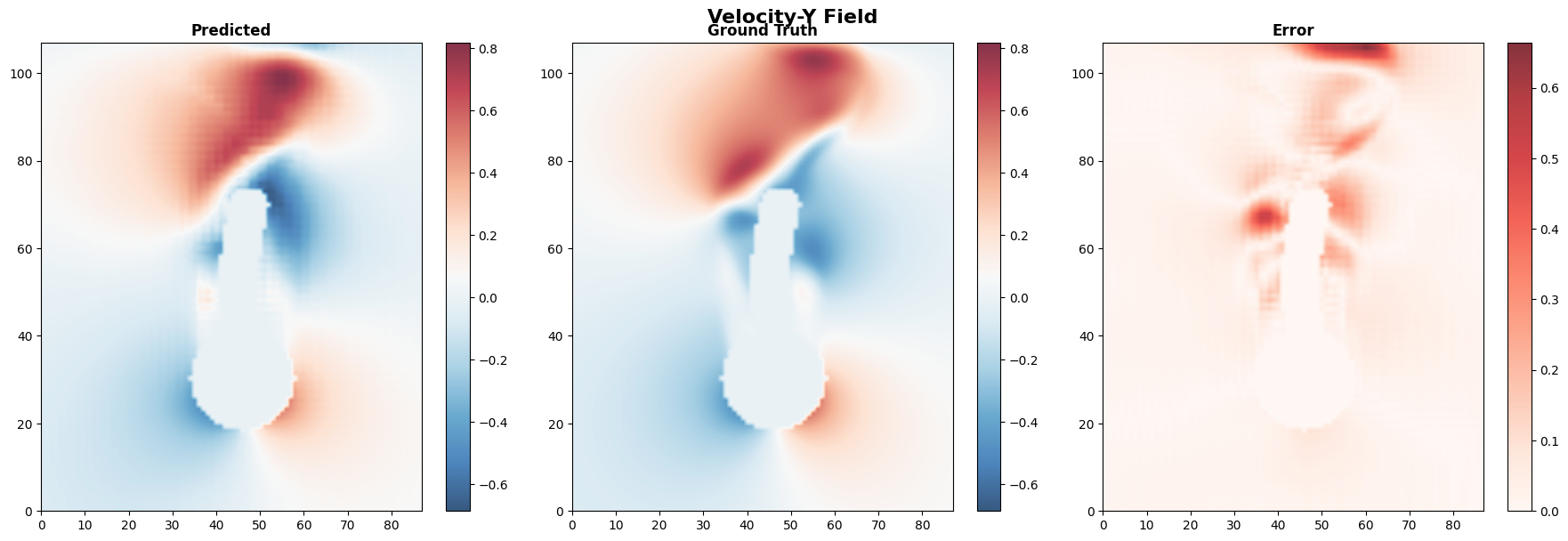}
  \methodrowB{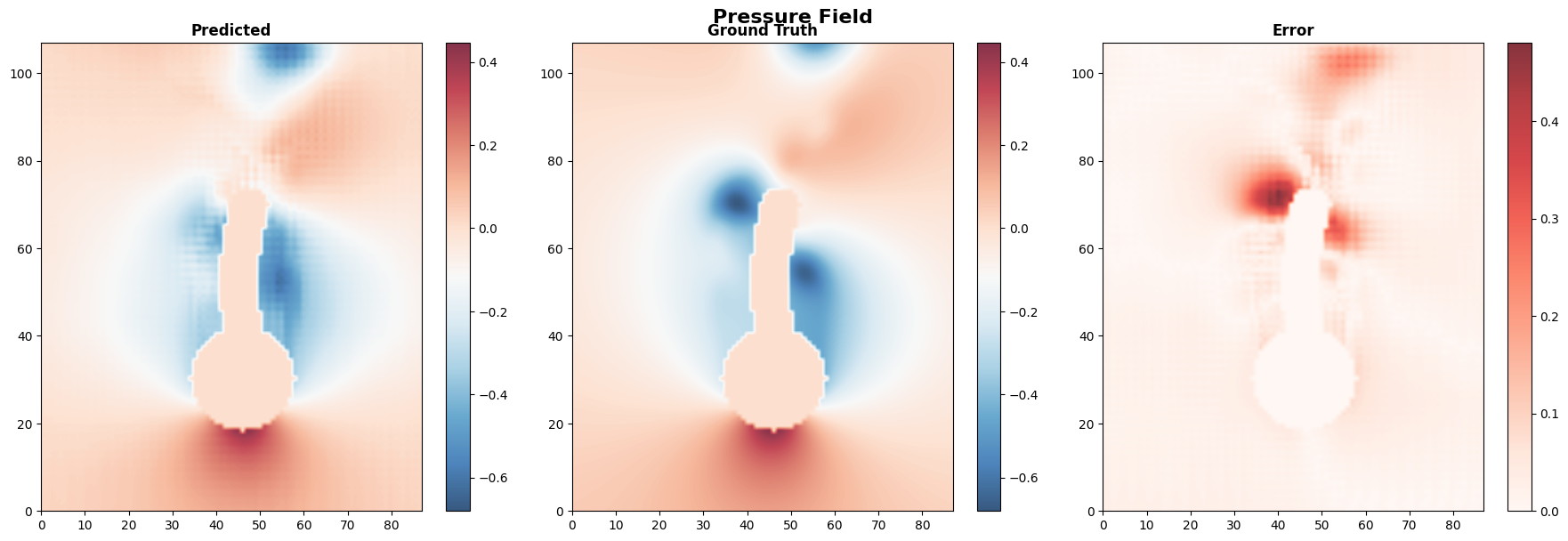}{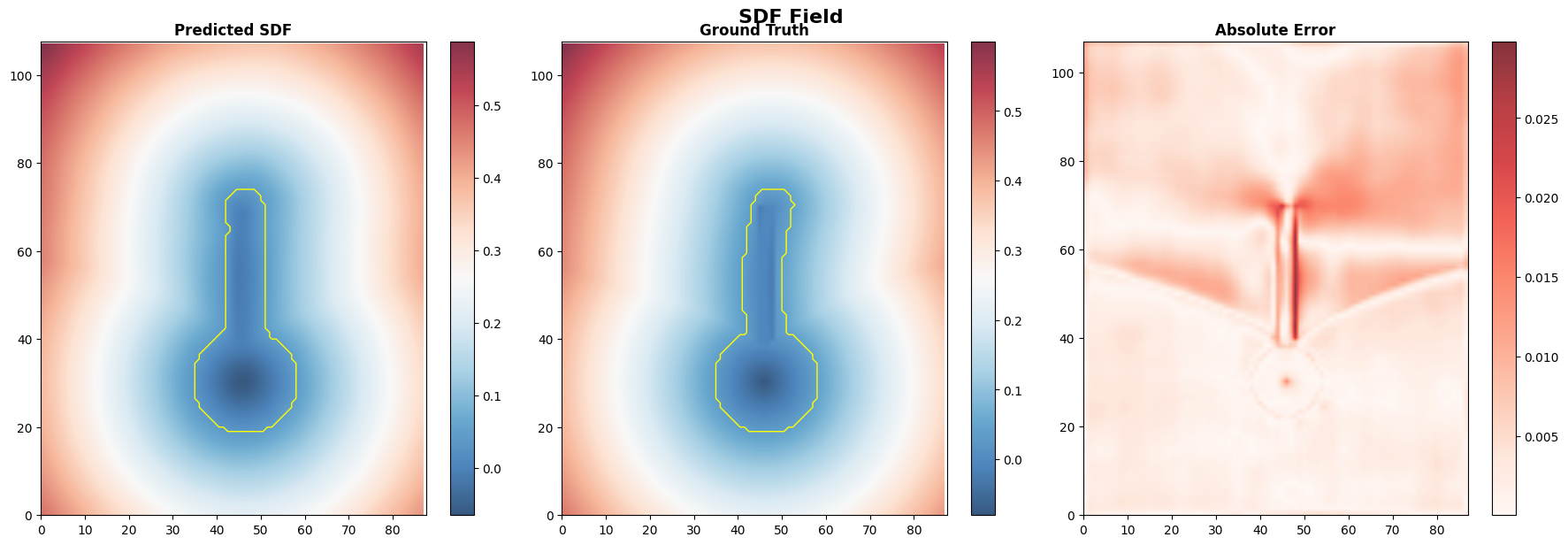}

  % 第二组
  \methodrowA{M2PDE-FNO*}{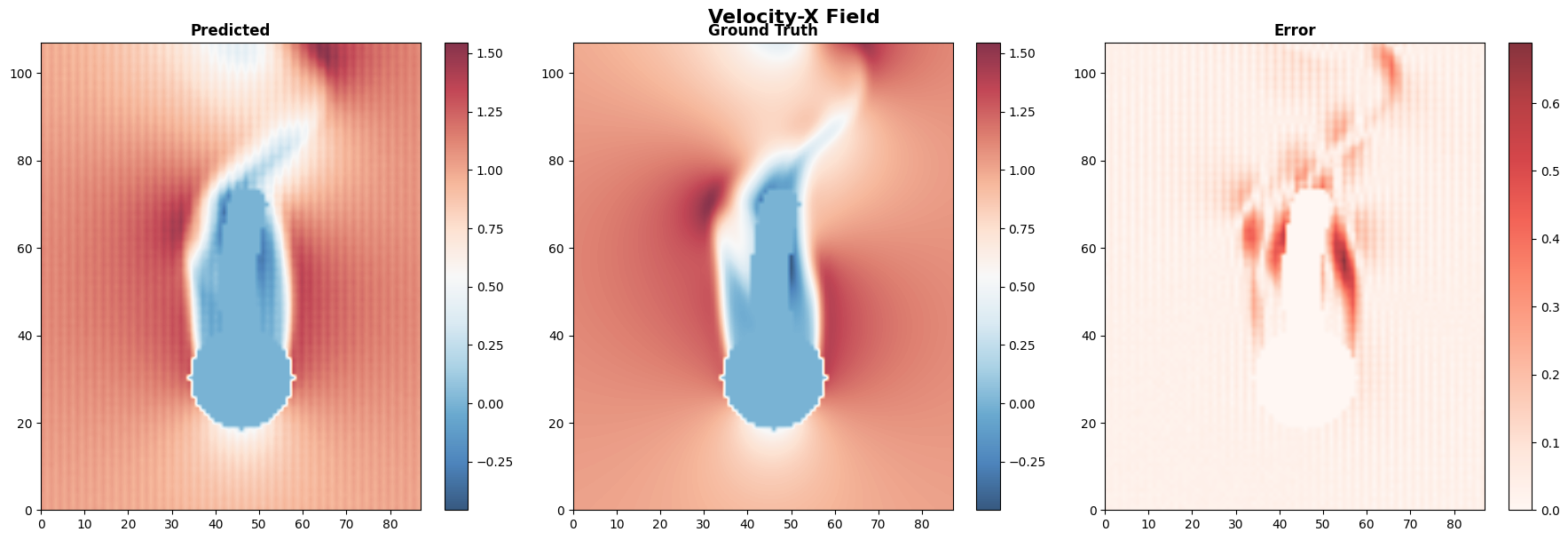}{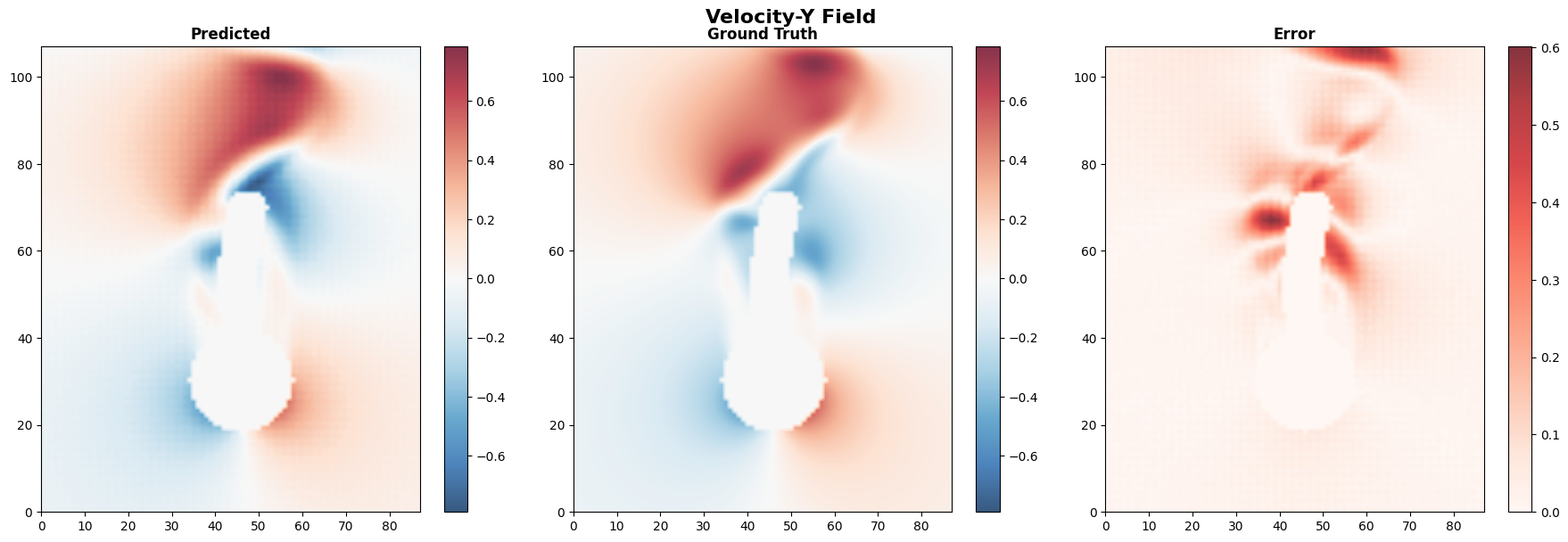}
  \methodrowB{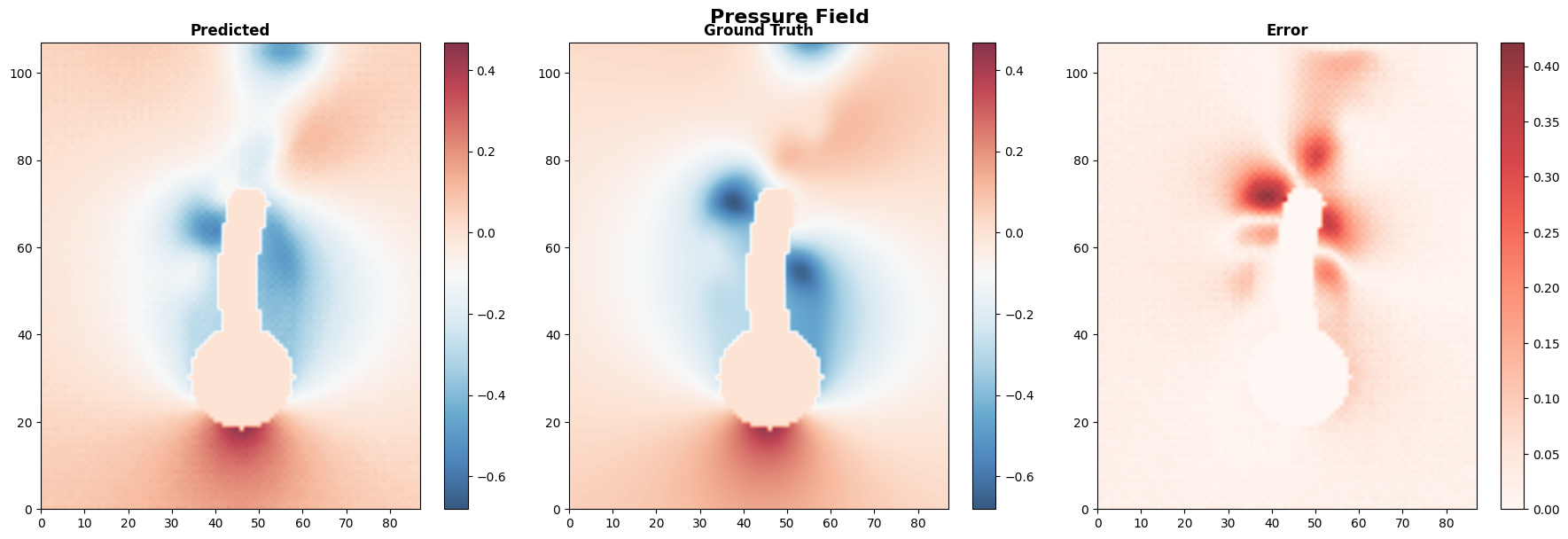}{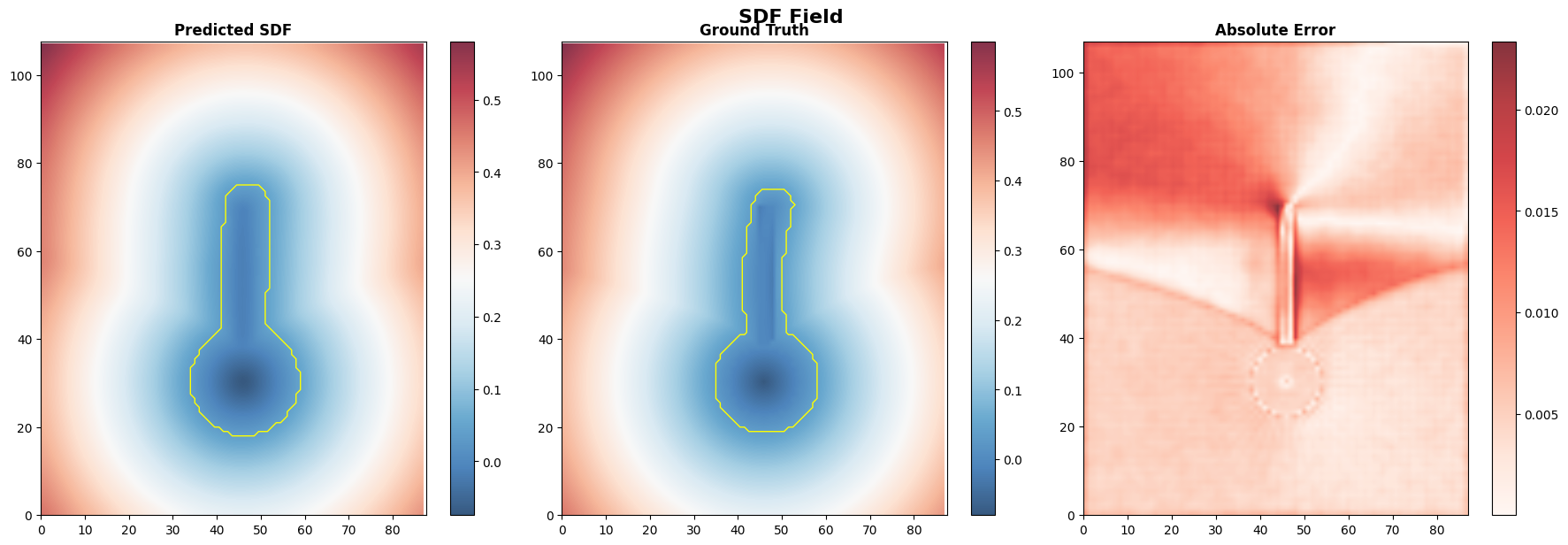}

  % 第三组
  \methodrowA{\proj-FNO*}{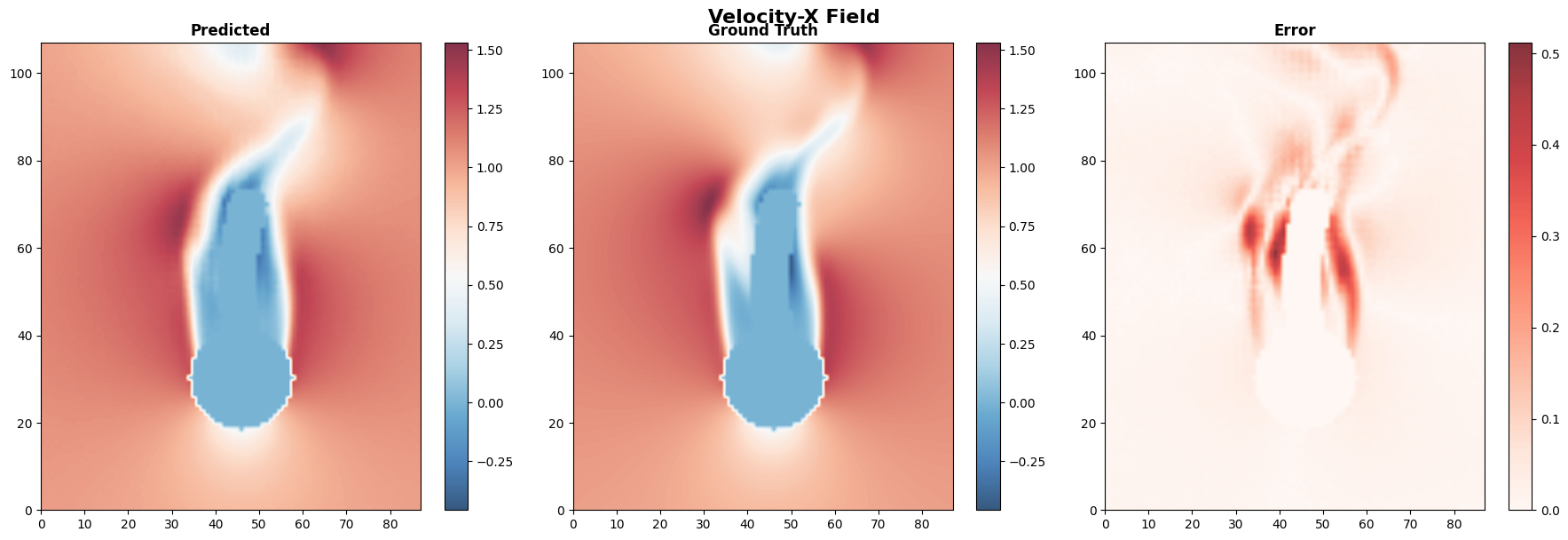}{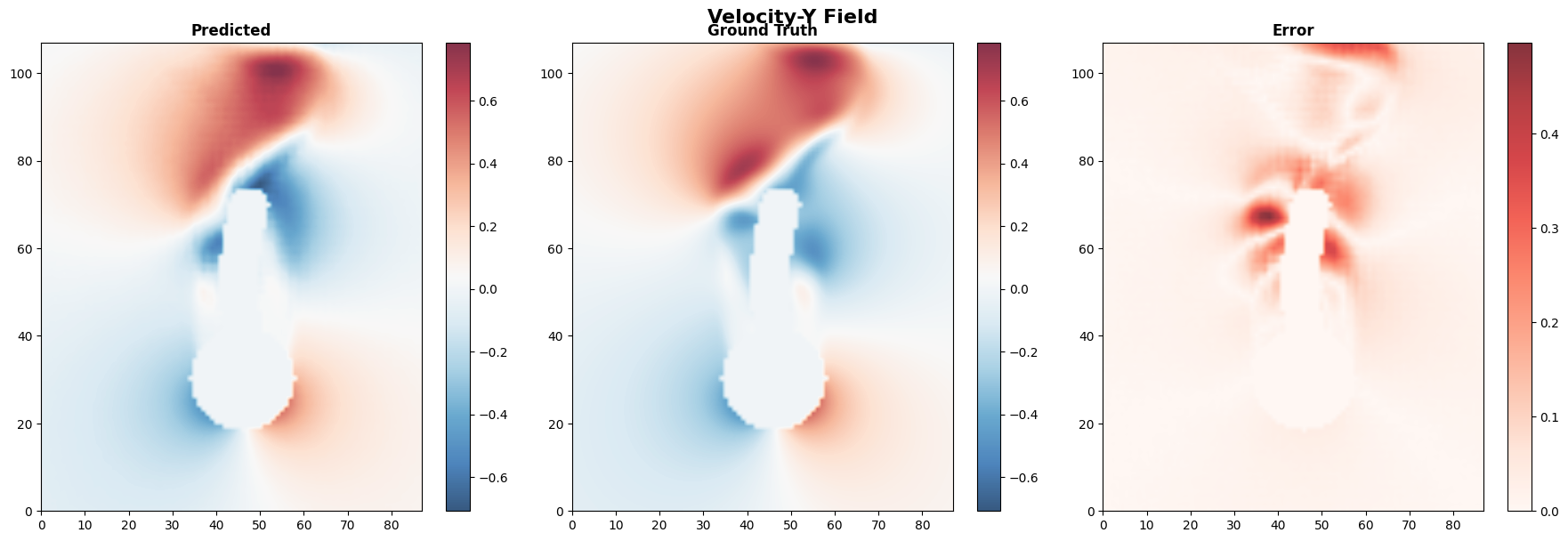}
  \methodrowB{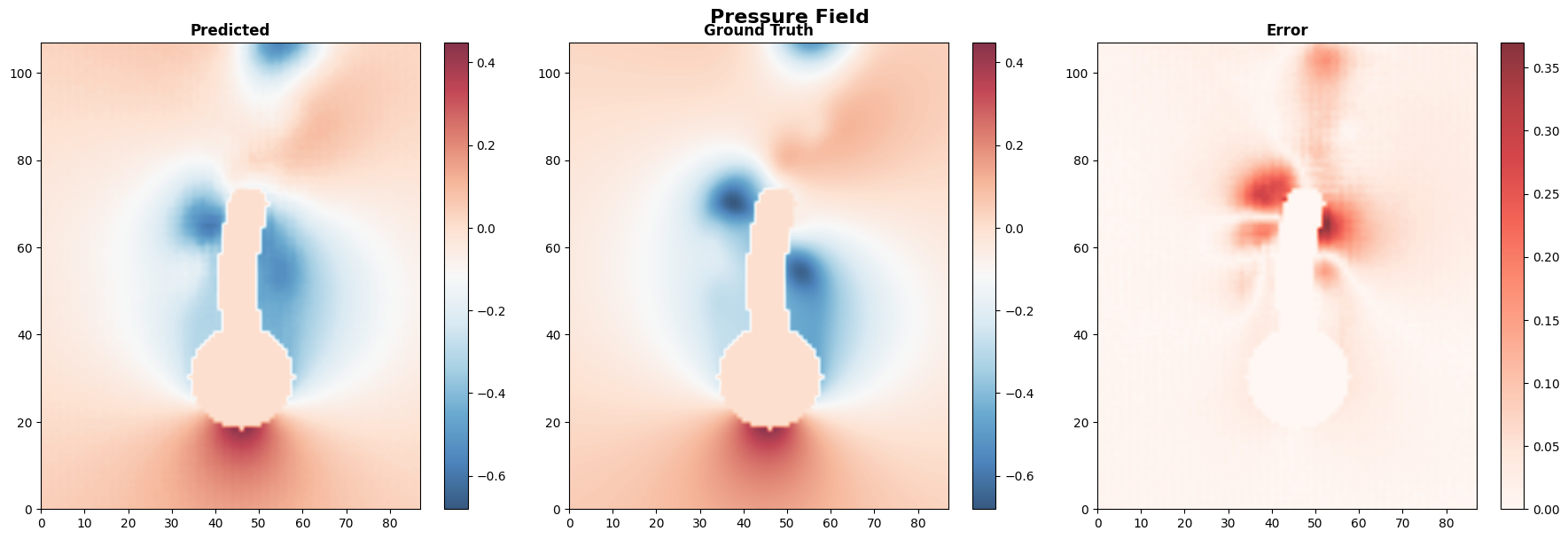}{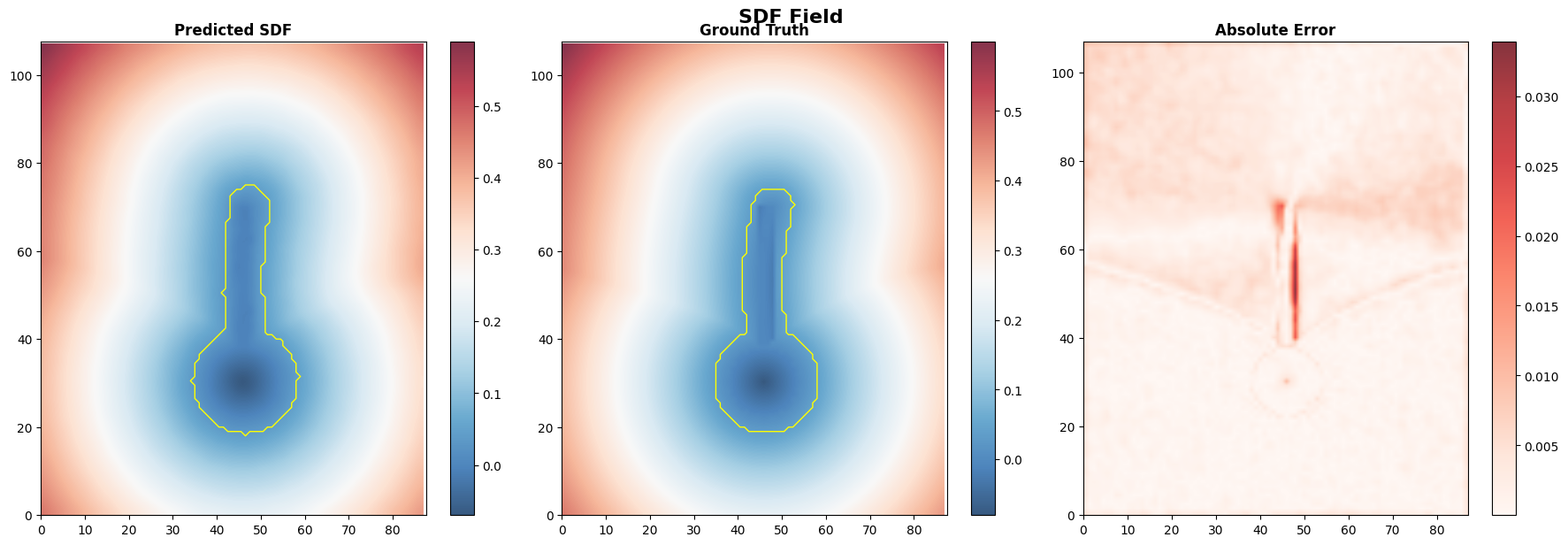}
\end{tabular}
}

\vspace{-1mm}

\caption{The visualization of results from different paradigms using FNO* as backbone on Turek-Hron setting. (Note that the structural boundary shown in the fluid field visualization is masked from the ground truth and does not represent actual structural deformation.)}
\label{fig:turek_vis}
\vspace{-2mm}
\end{figure}
\vspace{-2mm}

\paragraph{Double cylinder.}

\vspace{-2mm}
% \begin{table}[hbtp]
\begin{table}[H]
\renewcommand{\arraystretch}{1.2}
\centering
\caption{The relative L2 norm error of different methods on the Double Cylinder setting.} 
\label{tab:double-cylinder}
\resizebox{\textwidth}{!}{
\begin{tabular}{@{}cccccccccc@{}}
\toprule
\multicolumn{1}{c}{Rel L2 Norm} & \multicolumn{4}{c}{Validation on Decoupled Data} & \multicolumn{4}{c}{Test on Coupled Data} \\
% \midrule% \cline{1-9}
\multicolumn{1}{c}{Field} & $u$ & $v$ & $p$ & \multicolumn{1}{c}{SDF} & $u$ & $v$ & $p$ & \multicolumn{1}{c}{SDF} & \\ \midrule
Joint Training                       & /       & /      & /      & /                               & 0.0110        & 0.0387       & 0.0786       & 0.0034                                                                            \\ \midrule

M2PDE-FNO*                             & 0.0128        & 0.0414      & 0.2238        & 0.0065                               & 0.0571        & 0.2583       & 0.8879       & 0.0152                                                                          \\
Surrogate-FNO*                            & 0.0189        & 0.0625       & 0.0681      & 0.0041                                & 0.0717        & 0.3058       & 0.4979       & 0.0196                                                                        \\ 
Our \proj-FNO*                            & 0.0098        & 0.0441      & 0.0525        & 0.0060                                & \textbf{0.0522}        & \textbf{0.2397}       & \textbf{0.3987}       & \textbf{0.0061}                                                                         \\ \midrule

M2PDE-CNO                        & 0.0295        & 0.2072       & 0.6468       & 0.0156                                & 0.0379        & 0.2694       & 0.9966       & 0.0179                                                                      \\
Surrogate-CNO                       & 0.0364        & 0.1078       & 0.2060       & 0.0067                                & 0.0612        & 0.2058       & 1.0722       & 0.0080                                                                       \\ 
Our \proj-CNO                           & 0.0141        & 0.0474       & 0.1753      & 0.0058                                & \textbf{0.0279}        & \textbf{0.1150}       & \textbf{0.6208}       & \textbf{0.0055}                                                                       \\

\bottomrule
\end{tabular}%
}
\end{table}
\vspace{-2mm}

To evaluate the performance of our method on more strongly coupled problems, we construct a more challenging FSI scenario by increasing the time step. Specifically, we adopt a double-cylinder configuration, in which one cylinder is fixed in a cross-flow while the other is attached to a linear spring and allowed to oscillate vertically with respect to the inlet. This setup induces strong bidirectional feedback between vortex shedding and structural oscillation. The training and inference procedures are identical to those described previously and are not repeated here.

The error analysis of \proj compared with baseline paradigms is quantified in Table~\ref{tab:double-cylinder}. The trends are consistent with those observed in the Turek-Hron case. Evidently, our method significantly outperforms baseline methods across all physical fields, with average error reductions of approximately 34.44\% on FNO* backbone and up to 42.85\% on CNO backbone. These results demonstrate that \proj exhibits even greater advantages over baselines in strongly coupled problems.

In addition, we also report results obtained by training flow matching models directly on coupled data, \emph{i.e.}, learning the joint distribution of both physics. Theoretically, this approach represents the upper bound of our method. In other words, the discrepancy between the results of joint training and those of training on decoupled data reflects the error introduced by ``conditional learning to joint sampling''. Although the error values from joint training remain lower, the performance of \proj is already very close to this upper bound compared to other baselines, highlighting the value of our proposed paradigm.
For additional visualization results and analyses, please refer to Appendix \ref{app: additional visual}.

\subsection{Application to NT Coupling scenarios}\label{sec:4.4}

Our proposed \proj offers a general and unified paradigm for multiphysics simulation. To further demonstrate the scalability and generality of our approach, we include an additional experiment on a Nuclear–Thermal Coupling problem, in comparison with two baseline paradigms. Here we instantiate the multi-field extension with three fields, \proj can be seamlessly applied to this more complex multiphysics scenario, involving scaling to additional physical fields, and stronger or more intricate coupling mechanisms. 
% Crucially, \proj preserves its key capability of training exclusively on decoupled data while still enabling fully coupled multiphysics inference.

The NT coupling setting involves regional and interface coupling, strong and weak coupling, as well as both unidirectional and bidirectional coupling mechanisms. This physical system requires jointly solving neutron diffusion, solid heat conduction, and fluid heat transfer equations, while accounting for (i) the negative thermal feedback between neutron physics and temperature, (ii) the unidirectional influence of the fluid field on the neutron field through temperature, and (iii) the strong interface coupling between the fuel and the coolant. The governing equations for the coupled evolution of the neutron flux $(\phi)$, fuel temperature $(T_s)$, and fluid temperature, velocity, and pressure $(T_f, \vec{u}, p)$ are provided in the following equations.
Across the entire computational domain, the neutron field spans the full [64,20] domain, while the fuel region occupies the left [64,8] subdomain and the fluid region occupies the right [64,12] subdomain. The geometry and coupling configurations of this system are illustrated in Figure \ref{fig:ntcouple_geo}.

In this setting, our goal is to predict the system’s transient evolution of coupling physics under different boundary conditions (neutron flux and fuel temperature measurement at the left side), with learning only on completely decoupled data.

\begin{equation}
\label{eq: nt1}
\left\{
\begin{gathered}
\frac{1}{v} \frac{\partial \phi(x,y,t)}{\partial t} 
= D \Delta \phi + \bigl(v\Sigma_f - \Sigma_a(T)\bigr) \phi,
\quad x \in [0, L_s + L_f],\ y \in [0, L_y],\ t \in [0,5], \\[6pt]
\phi(0,y,t) = f(y,t), \\[4pt]
\phi(L_s + L_f, y, t) = \phi(x,0,t) = \phi(x,L_y,t) = 0.
\end{gathered}
\right.
\end{equation}

\begin{equation}
\label{eq: nt2}
\left\{
\begin{gathered}
\rho c_p \frac{\partial T_s(x,y,t)}{\partial t} 
= \nabla \cdot (k_s \nabla T_s) + A \phi_s,
\quad x \in [0, L_s],\ y \in [0, L_y],\ t \in [0,5], \\[6pt]
\frac{\partial T_s}{\partial y}(x,0,t) 
= \frac{\partial T_s}{\partial y}(x,L_y,t) = 0, \\[4pt]
T_s(L_s,y,t) = T_f(L_s,y,t).
\end{gathered}
\right.
\end{equation}

\begin{equation}
\label{eq: nt3}
\left\{
\begin{gathered}
\nabla \cdot \vec{u} = 0,
\quad x \in [L_s, L_s + L_f],\ y \in [0, L_y],\ t \in [0,5], \\[6pt]
\rho\left( \frac{\partial \vec{u}}{\partial t} + \vec{u}\cdot \nabla \vec{u} \right)
= -\nabla p + \mu \nabla^2 \vec{u} + \vec{f},
\quad x \in [L_s, L_s + L_f],\ y \in [0,L_y],\ t \in [0,5], \\[6pt]
\rho c_p \left( \frac{\partial T_f}{\partial t} + \vec{u}\cdot \nabla T_f \right)
= k_f \nabla^2 T_f,
\quad x \in [L_s, L_s + L_f],\ y \in [0, L_y],\ t \in [0,5], \\[6pt]
k_f \frac{\partial T_f}{\partial x}(L_s,y,t) 
= k_s \frac{\partial T_s}{\partial x}(L_s,y,t).
\end{gathered}
\right.
\end{equation}

During implementation, all backbone architectures, training configurations, and inference settings are kept identical to those used in the FSI experiments, as detailed in Appendix \ref{appen: implementation details}. This consistency further highlights the robustness and versatility of our \proj paradigm across diverse multiphysics tasks.

We evaluated our method along with the M2PDE and Surrogate paradigms. The visualizations of the experimental results, specifically the final frame of the predicted trajectories with three paradigms, are provided in Figure \ref{fig:ntcouple_vis}. The validation error on the decoupled dataset, and the testing errors on the coupled dataset are presented in the Table \ref{tab:ntcouple}. As shown in the table, \proj consistently outperforms both baselines across the neutron, fluid, and solid fields. Specifically, using FNO* as the backbone, \proj achieves an average error reduction of 49.9\% relative to M2PDE and 51.7\% relative to the Surrogate paradigm. With the CNO backbone, the improvements are even more substantial, with error reductions of 58.2\% over M2PDE and 78.8\% over Surrogates. These results demonstrate that our method significantly surpasses existing baselines on “decoupled-training to coupled-inference” tasks in complex multiphysics settings.

To further investigate the source of errors, we also report the validation error on the coupled dataset, obtained by models trained on decoupled data but evaluated using the ground-truth auxiliary fields from the coupled dataset as known conditions. This error corresponds to the approximation error $\epsilon$ in Theorem \ref{theorem: error bound}, while the discrepancy between this prediction and the actual coupled inference result reflects the splitting error $\tau$. The results indicate that, consistent with our theoretical guarantees, the error introduced by our conditional-to-joint sampling method is minimal. This also explains why, even when some baseline models exhibit competitive generalization performance during decoupled inference, they remain far from matching our state-of-the-art coupled inference accuracy. 

\begin{figure}[t]
    \centering

    % --- Neutron ---
    \begin{minipage}[t]{0.3\textwidth}
        \centering
        \textbf{Neutron}\\[1mm]
        \includegraphics[width=\textwidth]{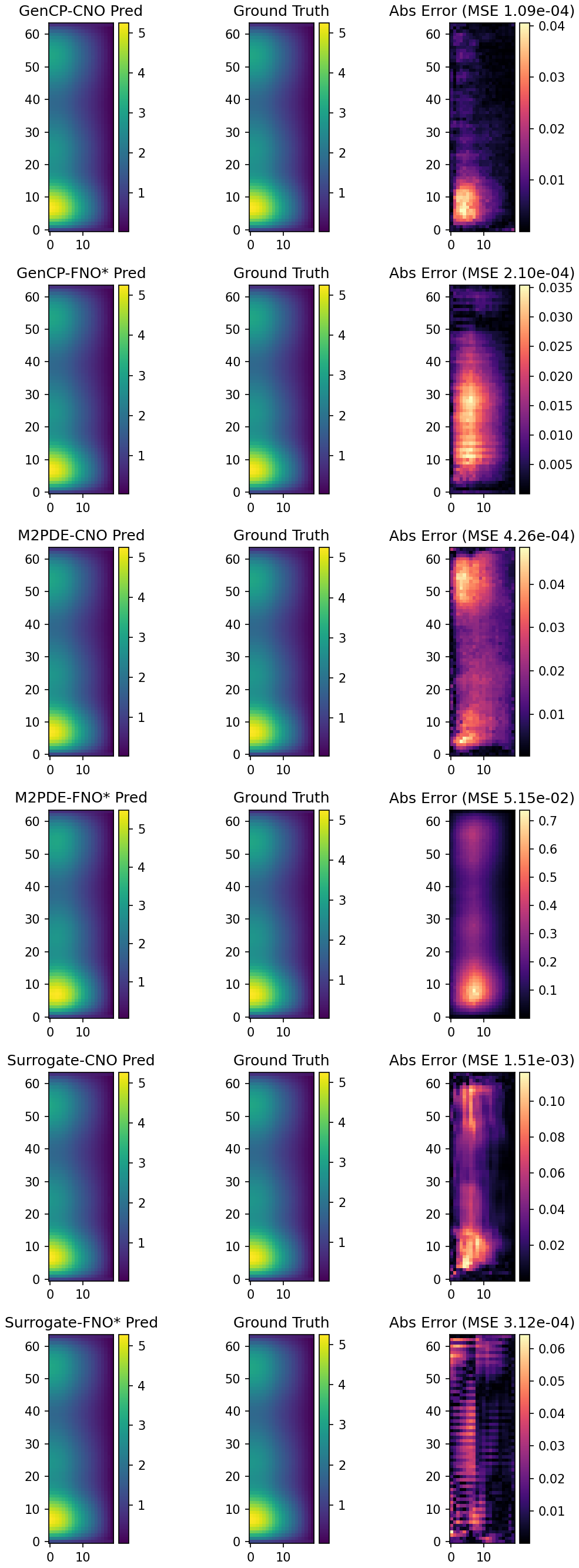}
    \end{minipage}
    \hfill
    % --- Fuel ---
    \begin{minipage}[t]{0.3\textwidth}
        \centering
        \textbf{Fuel}\\[1mm]
        \includegraphics[width=\textwidth]{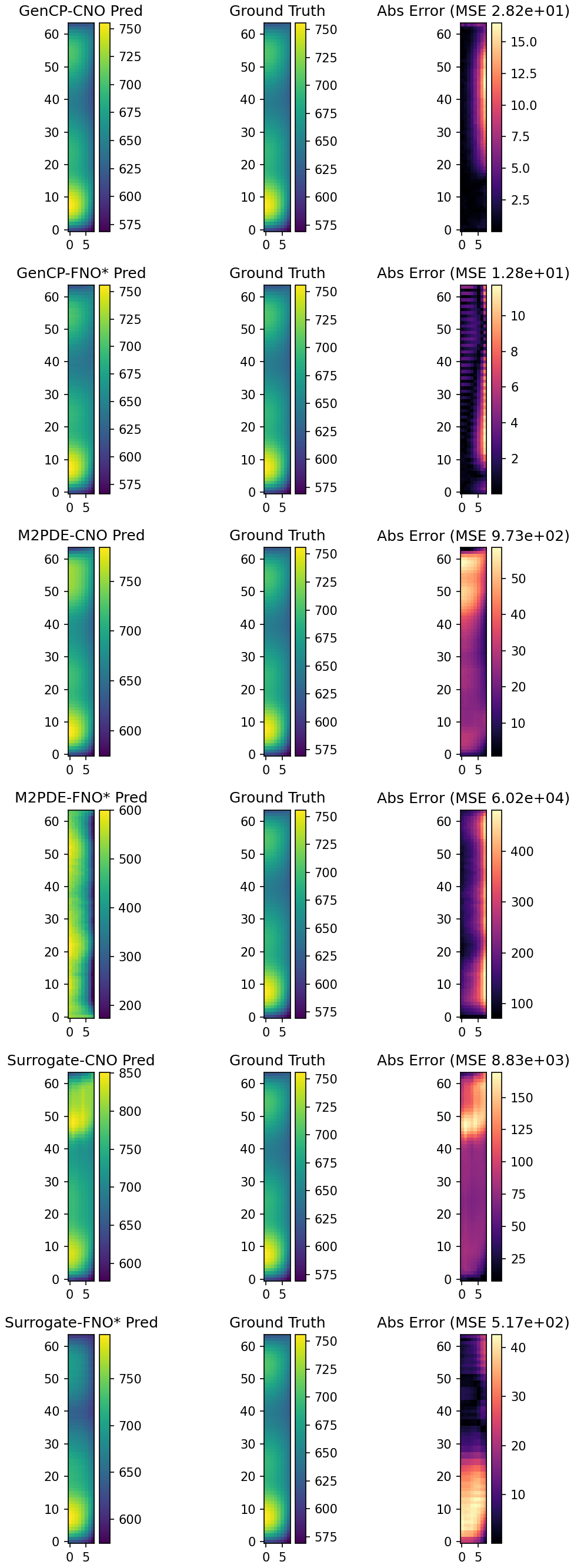}
    \end{minipage}
    \hfill
    % --- Fluid ---
    \begin{minipage}[t]{0.3\textwidth}
        \centering
        \textbf{Fluid}\\[1mm]
        \includegraphics[width=\textwidth]{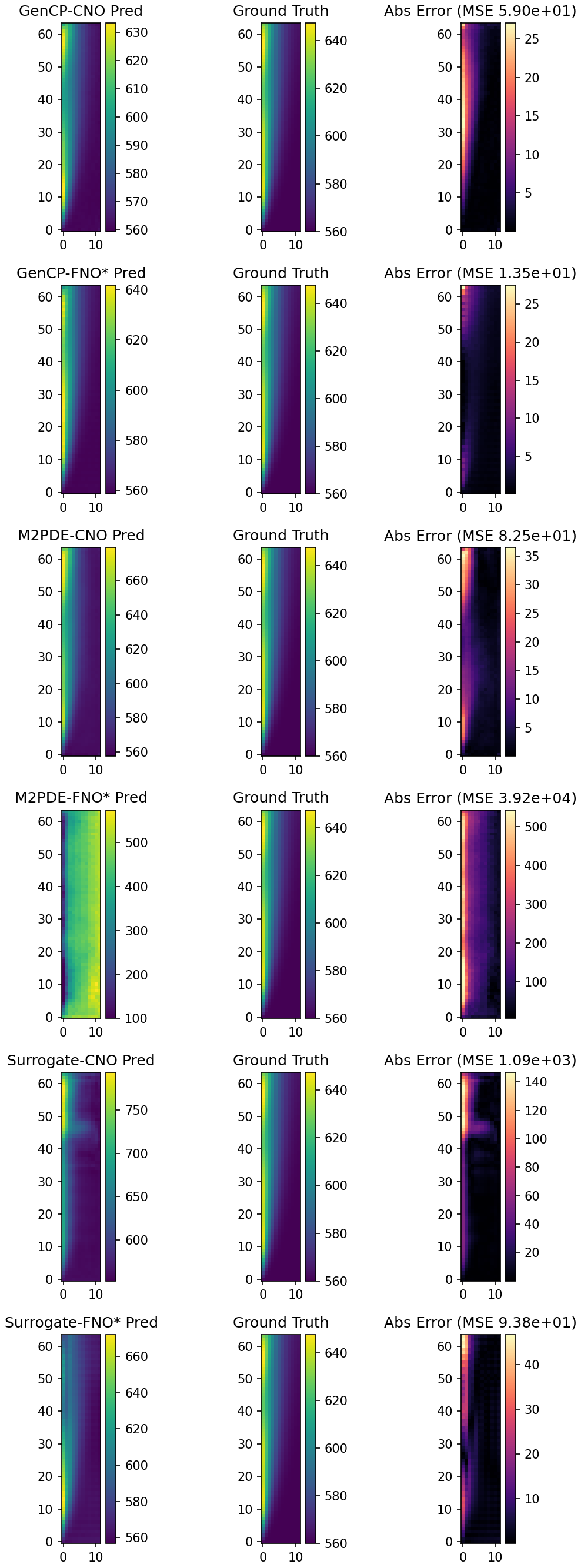}
    \end{minipage}

    \caption{
        \rebuttal{The visualization of the final frame of predicted results from different paradigms on NT Coupling setting. (Here we only represent $T_f$ as fluid field visualization.)}
    }
    \label{fig:ntcouple_vis}
\end{figure}

\begin{table}[hbtp]
\renewcommand{\arraystretch}{1.3}
\centering
\caption{\rebuttal{The relative L2 norm error of different methods on the Nuclear-Thermal Coupling setting.}}
\label{tab:ntcouple}

\resizebox{\textwidth}{!}{
\begingroup
\begin{tabular}{
    >{\centering\arraybackslash}p{2.8cm}
    *{9}{>{\centering\arraybackslash}p{2cm}}
}
\toprule
Rel L2 Norm         & \multicolumn{3}{c}{Decoupled validation on decoupled data}
                    & \multicolumn{3}{c}{Decoupled validation on coupled data}
                    & \multicolumn{3}{c}{Coupled test on coupled data}    \\
Field               & Neutron  & Fuel    & Fluid
                    & Neurton  & Fuel    & Fluid
                    & Neurton  & Fuel    & Fluid \\ \midrule
Our \proj-FNO*       & 0.0022 & 0.0006 & 0.0038 & 0.0081 & 0.0371 & 0.0364 & \textbf{0.0085} & \textbf{0.0364} & \textbf{0.0270} \\
Surrogate-FNO* & 0.0086 & 0.0014 & 0.0032 & 0.0140 & 0.0167 & 0.0767 & 0.0149 & 0.0576 & 0.1095 \\
M2PDE-FNO*     & 0.0052 & 0.0014 & 0.0018 & 0.0082 & 0.0085 & 0.0237 & 0.0136 & 0.1237 & 0.0463 \\ \midrule
Our \proj-CNO       & 0.0024 & 0.0005 & 0.0110 & 0.0047 & 0.0083 & 0.0303 & \textbf{0.0044} & \textbf{0.0105} & \textbf{0.0330} \\
Surrogate-CNO  & 0.0046 & 0.0007 & 0.0082 & 0.0073 & 0.0044 & 0.0567 & 0.0130 & 0.0553 & 0.3086 \\
M2PDE-CNO      & 0.0053 & 0.0016 & 0.0092 & 0.0084 & 0.0017 & 0.0236 & 0.0164 & 0.0646 & 0.0401 \\ 
\bottomrule
\end{tabular}
\endgroup
}
\end{table}

\section{Conclusion}
In this work, we tackle the fundamental challenges of coupled simulation, where efficiency, fidelity, and the ability to leverage decoupled data remain major bottlenecks. To address these challenges, we reformulated the problem from a functional probabilistic evolution perspective, recognizing that the essence of coupling is sampling from the joint distribution via learned conditional distributions. Building on this insight, we have introduced \proj, a principled generative paradigm that enables decoupled training and coupled inference in flow matching, achieving substantially higher accuracy and efficiency than existing methods across three distinct scenarios. Moreover, by applying operator-splitting theory to the probability density evolution process, we established a provable error bound for this conditional-to-joint sampling scheme, providing \proj with a solid theoretical foundation. 

Looking ahead, we believe this work paves the way toward probabilistically principled and practically scalable generative paradigms for coupled simulation, uniting theoretical guarantees with real-world applicability.

\subsubsection*{Ethics Statement}
This work proposes a generative modeling paradigm for coupled simulation using synthetic and simulated data and provides theoretical guarantee for it. All datasets are collected without involving sensitive information.

\subsubsection*{Reproducibility Statement}
The code is available at \href{https://github.com/AI4Science-WestlakeU/GenCP}{github.com/AI4Science-WestlakeU/GenCP}. We provide a unified and modular code framework, together with scripts for reproducing all experiments. We will also release the data and checkpoints with full documentation to ensure transparency and reproducibility.

% \subsubsection*{Author Contributions}
% If you'd like to, you may include  a section for author contributions as is done in many journals. This is optional and at the discretion of the authors.

% \subsubsection*{Acknowledgments}
% Use unnumbered third level headings for the acknowledgments. All acknowledgments, including those to funding agencies, go at the end of the paper.

\bibliography{iclr2026_conference}
\bibliographystyle{iclr2026_conference}

\appendix
% \section{Appendix}
% You may include other additional sections here.
\section{Related work}\label{app: related work}

\subsection{Numerical coupling simulation}

Traditional numerical simulation for coupling physics mainly target at solving major difficulties about ensuring accuracy, stability, and efficiency \citep{felippa2001partitioned, hou2012numerical}. Classical numerical solvers distinguish between tightly coupled (monolithic) and loosely coupled (partitioned) schemes.
Tightly coupled solvers assemble fluid and solid equations into a coupled system, enforcing interface conditions exactly and offering strong stability, but at the expense of high computational cost. \cite{ruan2021solid} introduce a monolithic framework that accurately handles momentum transfer at liquid–solid–air interfaces. \cite{sun2024immersed} proposed an immersed multi-material arbitrary Lagrangian–Eulerian method to realize a monolithic coupling framework.

Loosely coupled solvers, in contrast, partition system into subproblems for more practical solution, while they may suffer from unstable convergence. Besides, equipped with numerical techniques such as operator splitting \citep{MacNamara_Strang_2016} and Picard iteration \citep{yue2011picard}, they typically enjoy theoretical guarantees on acceptable error bounds.
Representative developments include improvements in numerical integration strategies, such as semi-implicit coupling schemes \citep{ha2023semi} and time-adaptive partitioned methods \citep{bukavc2023time}, as well as advances in representation techniques, such as coupling SPH with FEM \citep{fourey2017efficient}, and a partitioned explicit Lagrangian FEM tailored for highly nonlinear multiphysics problems \citep{meduri2018partitioned}. 

More specifically about operator splitting, it provides a \uline{theoretically grounded compromise between strictly monolithic and partitioned approaches}. By decomposing the global operator into sub-operators advanced sequentially, splitting schemes exploit sub-solvers while maintaining control over stability and error \citep{godunov1959finite, strang1968construction, trotter1959product}. Extensive surveys have confirmed that Lie-Trotter and Strang compositions remain central tools in multiphysics applications \citep{Blanes_Casas_Murua_2024, osti_1400891, vcanic2020analysis}.

\subsection{Learning-based simulation}

Existing learning-based approaches for multiphysics simulation have also basically evolved along two categories: tightly coupled (monolithic) and loosely coupled (partitioned) methods. The first category leverages neural operators to directly simulate multiphysics PDEs. CANO \citep{rahman2024pretraining} introduced codomain attention to enhance information exchange across coupled PDE outputs, demonstrating its potential for improving generalizability in multiphysics problems. COMPOL \citep{li2025multi} further incorporated explicit cross-field interaction mechanisms and gained notably increased accuracy. \cite{kobayashi2025network} systematically examined the relationship between operator architectures and the coupling strength of multiphysics. Collectively, these works underscore the importance of embedding functional priors \citep{kerrigan2023functional} for multiphysics simulation.

The second category employs AI models to learn decoupled physics in systems and then recomposes them to obtain coupled solutions \citep{sobes2021ai}. In nuclear engineering, AI methods have been proposed for modeling of complex thermo–hydraulic–neutronic interactions \citep{huang2025research}. For unsteady problems, \cite{tiba2025machine} incorporated reduced-order models for enabling better convergence and efficiency. These studies demonstrate the effectiveness of surrogate models in accelerating coupled simulations while keeping accuracy.
More recently, generative models have also been explored for coupled simulation \citep{zhang2025mpde}, like applying compositional generation to capture joint dynamics across multiple conditional models. However, its lack of theoretical guarantees limits its applicability in strongly coupled scenarios.

\section{Definitions and proofs}
\label{app:splitting-theory}

\subsection{Operator Splitting Scheme: Assumptions and Convergence Guarantees}
\label{app:operator-splitting}

This appendix provides the operator-splitting analysis underlying the error control bound in Section~\ref{sec:inference}. 

\paragraph{Assumptions.}
\label{app:assumption}
To ensure the well-posedness of the dynamics and convergence of the splitting scheme, we impose the following:

\begin{itemize}
    \item[(A1)] \emph{Local Lipschitz and growth.} 
    For every $R > 0$, there exists $L_R > 0$ such that for all $t \in [0,1]$ and $u,\tilde{u} \in \mathcal{U}$ with $\|u\|_{\mathcal{U}},\|\tilde{u}\|_{\mathcal{U}} \le R$,
    \[
    \|v(t,u)-v(t,\tilde{u})\|_{\mathcal{U}} \le L_R\|u-\tilde{u}\|_{\mathcal{U}}, 
    \qquad 
    \|v(t,u)\|_{\mathcal{U}} \le C_1\|u\|_{\mathcal{U}}+C_2.
    \]
    This ensures local existence and uniqueness of solutions.

    \item[(A2)] \emph{Well-posedness of subproblems.} 
    For every initial state $u\in\mathcal{U}$, the A- and B-subproblems (evolving $f$ with $g$ frozen, and $g$ with $f$ frozen) are well-posed, generating evolution families $U_A(t,s)$ and $U_B(t,s)$. They satisfy:
    \begin{itemize}
        \item[(i)] \emph{Existence}: for any starting time $s$, there exists a dense subspace $Y_s$ on which classical solutions exist.
        \item[(ii)] \emph{Uniqueness}: each $y\in Y_s$ generates a unique solution.
        \item[(iii)] \emph{Continuous dependence}: solutions depend continuously on $(s,y)$, uniformly on compact intervals.
        \item[(iv)] \emph{Exponential boundedness}: there exist constants $M\ge1$, $\omega\in\mathbb{R}$ such that
        \[
        \|u_s(t,y)\|\le M e^{\omega(t-s)}\|y\|,\quad \forall\,t\ge s.
        \]
    \end{itemize}

    \item[(A3)] \emph{Accuracy of learned vector fields.} 
    On the bounded state set $\mathcal{B}$ visited during evolution, the learned vector fields $\hat v_f,\hat v_g$ satisfy uniform approximation bounds
    \[
    \sup_{u \in \mathcal{B}} \|v_f(t,u)-\hat v_f(t,u)\|_{\mathcal{F}} \le \varepsilon_f,
    \quad
    \sup_{u \in \mathcal{B}} \|v_g(t,u)-\hat v_g(t,u)\|_{\mathcal{G}} \le \varepsilon_g,
    \]
    and are Lipschitz in $u$ with a uniform constant $L_{\mathrm{model}}$.
\end{itemize}

\paragraph{Lie--Trotter splitting.}
With $v=v^{(A)}+v^{(B)}$, define the Liouville operators from the weak continuity equation
\[
\mathcal{L}_A(t)\varphi(u) = \langle v_f(t,f,g), D_f\varphi(u)\rangle_{\mathcal{F}},\qquad
\mathcal{L}_B(t)\varphi(u) = \langle v_g(t,f,g), D_g\varphi(u)\rangle_{\mathcal{G}},
\]
so that $\mathcal{L}(t)=\mathcal{L}_A(t)+\mathcal{L}_B(t)$. The splitting scheme evolves measures as
\[
\mu^{(\tau)}_{t_{k+1}} = \big(\hat\Phi^{(B)}_{t_{k+1}\leftarrow t_k}\circ \hat\Phi^{(A)}_{t_{k+1}\leftarrow t_k}\big)_{\#}\mu^{(\tau)}_{t_k},\qquad
\tau=1/n,
\]
where $\hat\Phi^{(A)}$ and $\hat\Phi^{(B)}$ denote flows generated by the learned fields $\hat v_f,\hat v_g$. The global approximation over $[0,1]$ is the $n$-fold composition
\[
\mu^{(\tau)}_1 = \big(\hat\Phi^{(B)}_{\tau}\circ\hat\Phi^{(A)}_{\tau}\big)^n_{\#}\mu_0.
\]

\paragraph{Stability.}
Assume there exist constants $M\ge1$, $\omega\in\mathbb{R}$ such that for any partition $\{t_k\}$ and corresponding propagators $\mathcal{S}^{(A)},\mathcal{S}^{(B)}$,
\[
\Big\| \prod_{k=0}^{n-1} \mathcal{S}^{(B)}_{t_{k+1}\leftarrow t_k}\circ\mathcal{S}^{(A)}_{t_{k+1}\leftarrow t_k} \Big\|_{\mathcal{L}(B)}
\le M e^{\omega},
\]
for a Banach space $B$ of observables controlling sup-norms and gradients.

\paragraph{Consistency.}
The scheme is consistent:
\[
\lim_{h\to0}\frac{\mathcal{S}^{(B)}_{t+h\leftarrow t}\circ\mathcal{S}^{(A)}_{t+h\leftarrow t}\varphi-\varphi}{h} = \mathcal{L}(t)\varphi,
\]
for any $\varphi\in\mathcal{T}$. This follows from the Taylor expansion of the A- and B-flows on finite-dimensional projections.

\paragraph{Convergence.}
The local splitting error is
\[
\mathcal{R}_h(\varphi;t,u)
:=\frac{\mathcal{S}^{(B)}_{t+h\leftarrow t}\circ\mathcal{S}^{(A)}_{t+h\leftarrow t}\varphi(u)-\varphi(u)}{h}-\mathcal{L}(t)\varphi(u),
\]
which is $O(h)$ under regularity conditions. Thus, the global error is first order $O(\tau)$ when using exact vector fields. With learned fields, an additional error $\varepsilon_f+\varepsilon_g$ appears. Combining with stability yields the Wasserstein bound
\[
W_1(\mu^{(\tau,\text{learn})}_1,\mu_1)\;\le\;C_{\mathrm{stab}}\big(\tau+\varepsilon_f+\varepsilon_g\big).
\]

\subsection{Proof of Stability}

The goal is to show that the composition of the propagators \(\mathcal{S}^{(A)}_{t_{k+1}\leftarrow t_k}\) and \(\mathcal{S}^{(B)}_{t_{k+1}\leftarrow t_k}\) satisfies the bound:
\[
\left\| \prod_{k=0}^{n-1} \mathcal{S}^{(B)}_{t_{k+1}\leftarrow t_k} \circ \mathcal{S}^{(A)}_{t_{k+1}\leftarrow t_k} \right\|_{\mathcal{L}(B)} \leq M e^{\omega},
\]
where \(B\) is a Banach space of observables (e.g., equipped with the norm \(\|\cdot\|_\infty + \|D\cdot\|_\infty\)), \(M \geq 1\), and \(\omega \in \mathbb{R}\). This ensures that the operator splitting scheme remains stable over multiple time steps.

To ensure the stability of the Lie--Trotter splitting scheme, we need to bound the operator norm of the composition of the propagators \(\mathcal{S}^{(B)}_{t_{k+1}\leftarrow t_k} \circ \mathcal{S}^{(A)}_{t_{k+1}\leftarrow t_k}\) over \(n\) steps, where \(\mathcal{S}^{(A)}\) and \(\mathcal{S}^{(B)}\) are the solution operators (propagators) associated with the flows \(\Phi^{(A)}\) and \(\Phi^{(B)}\) acting on a Banach space of observables \(B\). We define \(B\) as the space of test functions \(\varphi: \mathcal{U} \to \mathbb{R}\) equipped with the norm:
\[
\|\varphi\|_B = \|\varphi\|_\infty + \|D \varphi\|_\infty,
\]
where \(\|\varphi\|_\infty = \sup_{u \in \mathcal{U}} |\varphi(u)|\) and \(\|D \varphi\|_\infty = \sup_{u \in \mathcal{U}} \|D_u \varphi(u)\|_{\mathcal{U}^*}\).

\begin{proof}
We proceed by analyzing the action of the propagators \(\mathcal{S}^{(A)}_{t+h\leftarrow t}\) and \(\mathcal{S}^{(B)}_{t+h\leftarrow t}\) over a single time step \([t, t+h]\) with \(h = \tau\). The propagator \(\mathcal{S}^{(A)}_{t+h\leftarrow t}\) corresponds to the flow \(\Phi^{(A)}_{t+h\leftarrow t}\), which evolves the field \(f\) according to:
\[
\frac{d}{ds} f(s) = v_f(s, f(s), g), \qquad g(s) = g(t), \quad s \in [t, t+h],
\]
and similarly for \(\mathcal{S}^{(B)}\). For a test function \(\varphi \in B\), the propagator is defined as:
\[
\mathcal{S}^{(A)}_{t+h\leftarrow t} \varphi(u) = \varphi( \Phi^{(A)}_{t+h\leftarrow t}(u) ).
\]
We need to bound the operator norm:
\[
\|\mathcal{S}^{(A)}_{t+h\leftarrow t}\|_{\mathcal{L}(B)} = \sup_{\|\varphi\|_B \leq 1} \|\mathcal{S}^{(A)}_{t+h\leftarrow t} \varphi\|_B = \sup_{\|\varphi\|_B \leq 1} \left( \|\mathcal{S}^{(A)}_{t+h\leftarrow t} \varphi\|_\infty + \|D (\mathcal{S}^{(A)}_{t+h\leftarrow t} \varphi)\|_\infty \right).
\]

\subsubsection{Bound on the sup-norm}
Since \(\Phi^{(A)}_{t+h\leftarrow t}\) is a flow map on \(\mathcal{U}\), we have:
\[
\|\mathcal{S}^{(A)}_{t+h\leftarrow t} \varphi\|_\infty = \sup_{u \in \mathcal{U}} |\varphi( \Phi^{(A)}_{t+h\leftarrow t}(u) )| \leq \sup_{u' \in \mathcal{U}} |\varphi(u')| = \|\varphi\|_\infty,
\]
because the flow map simply reparametrizes the domain of \(\varphi\).

\subsubsection{Bound on the gradient norm}
The Fréchet derivative of \(\mathcal{S}^{(A)}_{t+h\leftarrow t} \varphi\) is:
\[
D (\mathcal{S}^{(A)}_{t+h\leftarrow t} \varphi)(u) = D \varphi( \Phi^{(A)}_{t+h\leftarrow t}(u) ) \cdot D \Phi^{(A)}_{t+h\leftarrow t}(u),
\]
where \(D \Phi^{(A)}_{t+h\leftarrow t}(u): \mathcal{U} \to \mathcal{U}\) is the Jacobian of the flow map. We need to bound \(\|D \Phi^{(A)}_{t+h\leftarrow t}(u)\|_{\mathcal{L}(\mathcal{U})}\). The flow \(\Phi^{(A)}\) satisfies the ODE:
\[
\frac{d}{ds} \Phi^{(A)}_{s\leftarrow t}(u) = v^{(A)}(s, \Phi^{(A)}_{s\leftarrow t}(u)), \quad \Phi^{(A)}_{t\leftarrow t}(u) = u.
\]
The Jacobian \(D \Phi^{(A)}_{s\leftarrow t}(u)\) evolves according to the variational equation:
\[
\frac{d}{ds} D \Phi^{(A)}_{s\leftarrow t}(u) = D v^{(A)}(s, \Phi^{(A)}_{s\leftarrow t}(u)) \cdot D \Phi^{(A)}_{s\leftarrow t}(u), \quad D \Phi^{(A)}_{t\leftarrow t}(u) = I.
\]

By assumption (A1), \(v^{(A)}(t,u) = (v_f(t,f,g), 0)\) satisfies the Lipschitz condition:
\[
\|v^{(A)}(t,u) - v^{(A)}(t,\tilde{u})\|_{\mathcal{U}} = \|v_f(t,f,g) - v_f(t,\tilde{f},\tilde{g})\|_{\mathcal{F}} \leq L_R \|u - \tilde{u}\|_{\mathcal{U}},
\]
for \(u, \tilde{u}\) in a ball of radius \(R\). Thus, the operator norm of the Fréchet derivative \(D v^{(A)}(t,u)\) is bounded by \(L_R\). Applying Grönwall’s inequality to the variational equation:
\[
\|D \Phi^{(A)}_{t+h\leftarrow t}(u)\|_{\mathcal{L}(\mathcal{U})} \leq e^{\int_t^{t+h} L_R \, ds} = e^{L_R h}.
\]

Therefore:
\[
\|D (\mathcal{S}^{(A)}_{t+h\leftarrow t} \varphi)\|_\infty = \sup_{u \in \mathcal{U}} \| D \varphi( \Phi^{(A)}_{t+h\leftarrow t}(u) ) \cdot D \Phi^{(A)}_{t+h\leftarrow t}(u) \|_{\mathcal{U}^*} \leq \|D \varphi\|_\infty e^{L_R h}.
\]

Combining both bounds:
\[
\|\mathcal{S}^{(A)}_{t+h\leftarrow t} \varphi\|_B \leq \|\varphi\|_\infty + e^{L_R h} \|D \varphi\|_\infty \leq e^{L_R h} ( \|\varphi\|_\infty + \|D \varphi\|_\infty ) = e^{L_R h} \|\varphi\|_B,
\]
so:
\[
\|\mathcal{S}^{(A)}_{t+h\leftarrow t}\|_{\mathcal{L}(B)} \leq e^{L_R h}.
\]

Similarly, for \(\mathcal{S}^{(B)}_{t+h\leftarrow t}\), we obtain:
\[
\|\mathcal{S}^{(B)}_{t+h\leftarrow t}\|_{\mathcal{L}(B)} \leq e^{L_R h},
\]
since \(v^{(B)}\) satisfies the same Lipschitz bound by (A1).

\subsubsection{Composition over one step}
For the composition \(\mathcal{S}^{(B)}_{t+h\leftarrow t} \circ \mathcal{S}^{(A)}_{t+h\leftarrow t}\):
\[
\|\mathcal{S}^{(B)}_{t+h\leftarrow t} \circ \mathcal{S}^{(A)}_{t+h\leftarrow t}\|_{\mathcal{L}(B)} \leq \|\mathcal{S}^{(B)}_{t+h\leftarrow t}\|_{\mathcal{L}(B)} \cdot \|\mathcal{S}^{(A)}_{t+h\leftarrow t}\|_{\mathcal{L}(B)} \leq e^{L_R h} \cdot e^{L_R h} = e^{2 L_R h}.
\]

\subsubsection{Composition over \(n\) steps}
Over \(n = 1/\tau\) steps covering \([0,1]\), with \(h = \tau = 1/n\), the total composition is:
\[
\left\| \prod_{k=0}^{n-1} \mathcal{S}^{(B)}_{t_{k+1}\leftarrow t_k} \circ \mathcal{S}^{(A)}_{t_{k+1}\leftarrow t_k} \right\|_{\mathcal{L}(B)} \leq \prod_{k=0}^{n-1} e^{2 L_R \tau} = e^{2 L_R \cdot n \cdot \tau} = e^{2 L_R \cdot 1} = e^{2 L_R}.
\]

Thus, we set \(M = 1\) and \(\omega = 2 L_R\), satisfying the required bound:
\[
\left\| \prod_{k=0}^{n-1} \mathcal{S}^{(B)}_{t_{k+1}\leftarrow t_k} \circ \mathcal{S}^{(A)}_{t_{k+1}\leftarrow t_k} \right\|_{\mathcal{L}(B)} \leq e^{2 L_R}.
\]
\end{proof}

\subsection{Proof of Consistency}
Consistency means the local error (difference between the exact flow and the split flow on one subinterval) vanishes as $h \to 0$. For every $t \in [0,1)$ and $\varphi \in \mathcal{T}$,
\[
\lim_{h \to 0} \frac{ \left( \mathcal{S}^{(B)}_{t+h \leftarrow t} \circ \mathcal{S}^{(A)}_{t+h \leftarrow t} \varphi - \varphi \right) }{h} = \mathcal{L}(t) \varphi
\]
uniformly on bounded subsets of $\mathcal{U}$.

\begin{proof}
Fix a bounded set $K \subset \mathcal{U}$ and $u \in K$. We expand $\mathcal{S}^{(A)} \varphi(u) = \varphi(\Phi^{(A)}(u))$ using Taylor's theorem for Fréchet differentiable functions. For small $h$, the A-flow satisfies:
\[
\Phi^{(A)}_{t+h \leftarrow t}(u) = u + h v^{(A)}(t,u) + R_A(h,u),
\]
where the remainder $R_A(h,u)$ satisfies $\frac{\|R_A(h,u)\|}{h} \to 0$ as $h \to 0$ (uniformly on $K$, by local Lipschitzness of $v$).

Expanding $\varphi$ around $u$:
\[
\varphi(\Phi^{(A)}(u)) = \varphi(u) + \langle D_u \varphi(u), h v^{(A)}(t,u) + R_A(h,u) \rangle + o(\|h v^{(A)} + R_A\|).
\]
Dividing by $h$:
\[
\frac{1}{h} \left( \mathcal{S}^{(A)} \varphi(u) - \varphi(u) \right) = \langle D_u \varphi(u), v^{(A)}(t,u) \rangle + \frac{1}{h} \langle D_u \varphi(u), R_A(h,u) \rangle + o\left( \frac{\|h v^{(A)} + R_A\|}{h} \right).
\]
As $h \to 0$, the second and third terms vanish (uniformly on $K$), so:
\[
\frac{1}{h} \left( \mathcal{S}^{(A)} \varphi(u) - \varphi(u) \right) \to \langle D_u \varphi(u), v^{(A)}(t,u) \rangle \quad \text{uniformly on } K.
\]

Now apply the B-flow to $\mathcal{S}^{(A)} \varphi$. Expanding $\mathcal{S}^{(B)} \mathcal{S}^{(A)} \varphi(u) = \varphi(\Phi^{(B)}(\Phi^{(A)}(u)))$ similarly:
\[
\Phi^{(B)}_{t+h \leftarrow t}(\Phi^{(A)}(u)) = \Phi^{(A)}(u) + h v^{(B)}(t, \Phi^{(A)}(u)) + R_B(h,u),
\]
with $\frac{\|R_B(h,u)\|}{h} \to 0$ uniformly on $K$. Expanding $\varphi$ around $\Phi^{(A)}(u)$:
\[
\varphi(\Phi^{(B)}(\Phi^{(A)}(u))) = \varphi(\Phi^{(A)}(u)) + h \langle D_u \varphi(\Phi^{(A)}(u)), v^{(B)}(t, \Phi^{(A)}(u)) \rangle + o(h).
\]
Substituting the A-step expansion:
\[
\mathcal{S}^{(B)} \mathcal{S}^{(A)} \varphi(u) = \varphi(u) + h \langle D_u \varphi(u), v^{(A)}(t,u) \rangle + h \langle D_u \varphi(u), v^{(B)}(t,u) \rangle + o(h),
\]
since $D_u \varphi(\Phi^{(A)}(u)) = D_u \varphi(u) + o(1)$ (continuity of $D_u \varphi$) and $v^{(B)}(t, \Phi^{(A)}(u)) = v^{(B)}(t,u) + o(1)$ (continuity of $v$).

Combining terms and dividing by $h$:
\[
\frac{1}{h} \left( \mathcal{S}^{(B)} \mathcal{S}^{(A)} \varphi(u) - \varphi(u) \right) = \langle D_u \varphi(u), v(t,u) \rangle + o(1) = - \mathcal{L}(t) \varphi(u) + o(1).
\]
Taking $h \to 0$ gives the result.
\end{proof}

\subsection{Proof of Convergence}

Below, we provide a complete and rigorous proof of the convergence result stated in the paragraph. The proof is constructed based on standard results from operator splitting theory for ODEs in Hilbert spaces, error estimates for Lie-Trotter methods, and bounds on Wasserstein distances for pushforward measures. It assumes the conditions from Assumptions (A1)--(A3) as defined in the document. The proof is divided into steps for clarity: first, bounding the splitting error with exact vector fields, then incorporating the learning error, and finally combining them via the triangle inequality.

\subsubsection{Preliminaries}
Recall that the coupled evolution is governed by the flow map \(\Phi_t: \mathcal{U} \to \mathcal{U}\), satisfying the ODE
\[
\frac{d}{dt} \Phi_t(u) = v(t, \Phi_t(u)), \quad \Phi_0(u) = u, \quad u = (f, g) \in \mathcal{U},
\]
where \( v(t, u) = (V_f(t, f, g), V_g(t, f, g)) \) is the velocity field from the weak continuity equation Eq. \ref{eq:weak}. The true measure at time \(1\) is \(\mu_1 = \Phi_1 \# \mu_0\) (pushforward of \(\mu_0\) under \(\Phi_1\)).

The Lie-Trotter splitting approximates this flow with component flows \(\Phi^{(A)}_h\) and \(\Phi^{(B)}_h\), where:
- For \(\Phi^{(A)}_t\): \(\frac{d}{dt} \Phi^{(A)}_t(u) = v^{(A)}(t, \Phi^{(A)}_t(u))\) with \( v^{(A)}(t, u) = (V_f(t, f, g), 0) \),
- For \(\Phi^{(B)}_t\): \(\frac{d}{dt} \Phi^{(B)}_t(u) = v^{(B)}(t, \Phi^{(B)}_t(u))\) with \( v^{(B)}(t, u) = (0, V_g(t, f, g)) \).

The split flow over one time step \( h \) is \(\Phi^{(split)}_h = \Phi^{(B)}_h \circ \Phi^{(A)}_h\). Over the interval \([0,1]\) with \( N = 1/\tau \) (number of steps) and step size \( h = \tau \), the approximate measure from exact splitting is \(\mu_1^{(\tau)} = \left( \Phi^{(split)}_\tau \right)^N \# \mu_0\) (pushforward under \(N\)-fold composition of \(\Phi^{(split)}_\tau\)).

For learned velocity fields \(\hat{v}_f\) and \(\hat{v}_g\), the learned component flows are \(\hat{\Phi}^{(A)}_h\) (governed by \(\hat{v}^{(A)} = (\hat{v}_f, 0)\)) and \(\hat{\Phi}^{(B)}_h\) (governed by \(\hat{v}^{(B)} = (0, \hat{v}_g)\)). The learned split flow is \(\hat{\Phi}^{(split)}_h = \hat{\Phi}^{(B)}_h \circ \hat{\Phi}^{(A)}_h\), and the corresponding learned approximate measure is \(\mu_1^{(\tau, \text{learn})} = \left( \hat{\Phi}^{(split)}_\tau \right)^N \# \mu_0\).

We assume \(\mu_0 \in \mathcal{P}_2(\mathcal{U})\) (probability measures on \(\mathcal{U}\) with finite second moment) and use the Wasserstein-1 distance:
\[
W_1(\mu, \nu) = \sup_{\|\phi\|_{\text{Lip}} \leq 1} \left| \int_{\mathcal{U}} \phi \, d\mu - \int_{\mathcal{U}} \phi \, d\nu \right|,
\]
where \(\|\phi\|_{\text{Lip}} = \sup_{\substack{u, \tilde{u} \in \mathcal{U} \\ u \neq \tilde{u}}} \frac{|\phi(u) - \phi(\tilde{u})|}{\|u - \tilde{u}\|_{\mathcal{U}}}\) denotes the Lipschitz constant of \(\phi\).

\subsubsection{Step 1: Local Splitting Error with Exact Fields}
Under Assumption (A1) (local Lipschitz continuity and polynomial growth of \(v\)), the flows \(\Phi_t\), \(\Phi^{(A)}_t\), and \(\Phi^{(B)}_t\) exist and are unique for all \( t \in [0,1] \) (Picard-Lindelöf theorem for ODEs in Banach spaces).

To analyze the local error (error over one step \(h\)), we use Taylor expansions of the flows around \(u\):
- For the true flow: \(\Phi_h(u) = u + h v(t, u) + \frac{h^2}{2} \frac{d}{dt} v(t, u) + O(h^3)\),
- For the \(A\)-component flow: \(\Phi^{(A)}_h(u) = u + h v^{(A)}(t, u) + \frac{h^2}{2} \frac{d}{dt} v^{(A)}(t, u) + O(h^3)\),
- For the \(B\)-component flow: \(\Phi^{(B)}_h(\tilde{u}) = \tilde{u} + h v^{(B)}(t, \tilde{u}) + \frac{h^2}{2} \frac{d}{dt} v^{(B)}(t, \tilde{u}) + O(h^3)\) (substitute \(\tilde{u} = \Phi^{(A)}_h(u)\)).

Substituting \(\Phi^{(A)}_h(u)\) into \(\Phi^{(B)}_h\), the split flow expands to:
\[
\Phi^{(split)}_h(u) = u + h \left(v^{(A)} + v^{(B)}\right) + \frac{h^2}{2} \left( \frac{d}{dt} v^{(A)} + \frac{d}{dt} v^{(B)} + [v^{(A)}, v^{(B)}] \right) + O(h^3),
\]
where \([v^{(A)}, v^{(B)}] = D v^{(B)} \cdot v^{(A)} - D v^{(A)} \cdot v^{(B)}\) is the Lie bracket (commutator) of \(v^{(A)}\) and \(v^{(B)}\) (with \(D v\) denoting the Fréchet derivative of \(v\)).

Since \(v = v^{(A)} + v^{(B)}\) and \(\frac{d}{dt} v = \frac{d}{dt} v^{(A)} + \frac{d}{dt} v^{(B)}\) (component-wise additivity), the local error of the split flow is:
\[
\Phi_h(u) - \Phi^{(split)}_h(u) = \frac{h^2}{2} [v^{(A)}, v^{(B)}](t, u) + O(h^3).
\]
Under Assumption (A1), the Lie bracket \([v^{(A)}, v^{(B)}]\) is bounded by a constant \(K_R > 0\) on any ball \(B_R = \{u \in \mathcal{U} \mid \|u\|_{\mathcal{U}} \leq R\}\) (local boundedness follows from local Lipschitz continuity of \(v\)). Thus, the local error in the \(\|\cdot\|_{\mathcal{U}}\)-norm satisfies:
\[
\|\Phi_h(u) - \Phi^{(split)}_h(u)\|_{\mathcal{U}} = O(h^2).
\]

For observable propagators (acting on test functions \(\varphi\)), define \(\mathcal{S}^{(A)}_h \varphi(u) = \varphi(\Phi^{(A)}_h(u))\) and \(\mathcal{S}^{(B)}_h \varphi(u) = \varphi(\Phi^{(B)}_h(u))\). The local splitting error for propagators is:
\[
\mathcal{R}_h(\varphi; t, u) = h^{-1} \left( \mathcal{S}^{(B)}_{t+h \leftarrow t} \circ \mathcal{S}^{(A)}_{t+h \leftarrow t} \varphi(u) - \varphi(u) \right) - \mathcal{L}(t) \varphi(u) = O(h),
\]
where \(\mathcal{L}(t) = v(t, \cdot) \cdot D\) is the generator of the true flow. This holds for cylindrical test functions \(\varphi \in \mathcal{T}\) with bounded Fréchet derivatives (Assumption (A2) on well-posedness of the continuity equation).

\subsubsection{Step 2: Global Splitting Error with Exact Fields}
To extend the local error to the global error over \([0,1]\), consider the \(N\)-fold composition of the split flow: \(\left( \Phi^{(split)}_\tau \right)^N = \Phi^{(split)}_\tau \circ \Phi^{(split)}_\tau \circ \cdots \circ \Phi^{(split)}_\tau\) (N times), with \(N\tau = 1\).

By the Lady Windermere's Fan Lemma (a standard tool for operator splitting\citep{hairer2006oscillatory}) or a telescoping sum argument for splitting methods, the global error of the flow map satisfies:
\[
\|\Phi_1(u) - \left( \Phi^{(split)}_\tau \right)^N(u)\|_{\mathcal{U}} \leq C e^{L} \tau,
\]
where:
- \(C > 0\) depends on the commutator bound \(K_R\) (from Step 1) and the time horizon \(T=1\),
- \(L > 0\) is the local Lipschitz constant of \(v\) (from Assumption (A1)).

The bound assumes trajectories stay in a bounded ball \(B_R\) (justified by the polynomial growth of \(v\) and finite second moment of \(\mu_0\), which prevents trajectories from escaping to infinity).

To translate this flow error to a measure error (Wasserstein-1 distance), use the definition of \(W_1\) and Lipschitz continuity of test functions:
\[
W_1(\mu_1, \mu_1^{(\tau)}) = W_1(\Phi_1 \# \mu_0, \left( \Phi^{(split)}_\tau \right)^N \# \mu_0) \leq \sup_{\|\phi\|_{\text{Lip}} \leq 1} \int_{\mathcal{U}} \left| \phi(\Phi_1(u)) - \phi\left( \left( \Phi^{(split)}_\tau \right)^N(u) \right) \right| d\mu_0(u).
\]
By Lipschitz continuity of \(\phi\), \(|\phi(\Phi_1(u)) - \phi\left( \left( \Phi^{(split)}_\tau \right)^N(u) \right)| \leq \|\Phi_1(u) - \left( \Phi^{(split)}_\tau \right)^N(u)\|_{\mathcal{U}}\). Taking the supremum over \(\phi\) and using the flow error bound:
\[
W_1(\mu_1, \mu_1^{(\tau)}) \leq \int_{\mathcal{U}} \|\Phi_1(u) - \left( \Phi^{(split)}_\tau \right)^N(u)\|_{\mathcal{U}} d\mu_0(u) \leq C_{\text{stab}} \tau,
\]
where \(C_{\text{stab}} = C e^L\) incorporates stability constants (from Section \ref{sec:inference}) and the time horizon. This is a Chernoff-type convergence result in the weak sense, extended to \(W_1\) via Lipschitz bounds.

\subsubsection{Step 3: Learning Error in Flows}
Under Assumption (A3) (accuracy of learned velocity fields), the learned fields \(\hat{v}_f, \hat{v}_g\) satisfy uniform bounds on bounded sets \(B_R\):
\[
\|V_f(t, u) - \hat{v}_f(t, u)\|_{\mathcal{F}} \leq \varepsilon_f, \quad \|V_g(t, u) - \hat{v}_g(t, u)\|_{\mathcal{G}} \leq \varepsilon_g,
\]
for all \(t \in [0,1]\) and \(u \in B_R\). Define the total learning error as \(\varepsilon_{\text{tot}} = \varepsilon_f + \varepsilon_g\), so:
\[
\|v(t, u) - \hat{v}(t, u)\|_{\mathcal{U}} \leq \varepsilon_{\text{tot}} \quad \forall t \in [0,1], u \in B_R.
\]

Let \(\hat{\Phi}_t\) denote the learned flow (governed by \(\hat{v}\)): \(\frac{d}{dt} \hat{\Phi}_t(u) = \hat{v}(t, \hat{\Phi}_t(u))\) with \(\hat{\Phi}_0(u) = u\). Define the flow error \(e_t(u) = \Phi_t(u) - \hat{\Phi}_t(u)\); its time derivative satisfies:
\[
\frac{d}{dt} e_t(u) = v(t, \Phi_t(u)) - \hat{v}(t, \hat{\Phi}_t(u)) = \underbrace{v(t, \Phi_t(u)) - v(t, \hat{\Phi}_t(u))}_{T_1} + \underbrace{v(t, \hat{\Phi}_t(u)) - \hat{v}(t, \hat{\Phi}_t(u))}_{T_2}.
\]

Bounding each term:
- \(T_1\): By local Lipschitz continuity of \(v\) (Assumption (A1)), \(\|T_1\|_{\mathcal{U}} \leq L_R \|e_t(u)\|_{\mathcal{U}}\),
- \(T_2\): By Assumption (A3), \(\|T_2\|_{\mathcal{U}} \leq \varepsilon_{\text{tot}}\).

Combining these, the error ODE becomes:
\[
\frac{d}{dt} \|e_t(u)\|_{\mathcal{U}} \leq L_R \|e_t(u)\|_{\mathcal{U}} + \varepsilon_{\text{tot}}.
\]
Applying Gronwall's Inequality (with initial condition \(\|e_0(u)\|_{\mathcal{U}} = 0\)):
\[
\|e_t(u)\|_{\mathcal{U}} \leq \varepsilon_{\text{tot}} t e^{L_R t}.
\]
For \(t=1\) (final time), this simplifies to:
\[
\|\Phi_1(u) - \hat{\Phi}_1(u)\|_{\mathcal{U}} \leq \varepsilon_{\text{tot}} e^{L_R}.
\]

Translating to a measure error (via \(W_1\)), similar to Step 2:
\[
W_1(\Phi_1 \# \mu_0, \hat{\Phi}_1 \# \mu_0) \leq \varepsilon_{\text{tot}} e^{L_R}.
\]

\subsubsection{Step 4: Total Error for Learned Splitting}
We now analyze the total error of the learned splitting scheme, where the learned approximate measure is \(\mu_1^{(\tau, \text{learn})} = \left( \hat{\Phi}^{(split)}_\tau \right)^N \# \mu_0\). Here, \(\hat{\Phi}^{(split)}_\tau = \hat{\Phi}^{(B)}_\tau \circ \hat{\Phi}^{(A)}_\tau\) denotes the learned split flow: \(\hat{\Phi}^{(A)}_\tau\) evolves under the learned field \(\hat{v}^{(A)} = (\hat{v}_f, 0)\), and \(\hat{\Phi}^{(B)}_\tau\) evolves under \(\hat{v}^{(B)} = (0, \hat{v}_g)\).

First, we bound the splitting error of the learned flow (analogous to Step 2 for exact fields). Under Assumption (A3) (learned fields are locally Lipschitz with constant \(L_{\text{model}} \approx L_R\), matching the Lipschitz regularity of the true fields), the key properties of the split flow are preserved:

\begin{itemize}
  \item The Lie bracket of the learned components \([\hat{v}^{(A)}, \hat{v}^{(B)}]\) is bounded on bounded sets (by \(K_R' \approx K_R\), since Lipschitz continuity implies bounded Fréchet derivatives).
  \item The global splitting error for the learned flow \(\left(\hat{\Phi}^{(\text{split})}_\tau\right)^N\) (relative to the learned full flow \(\hat{\Phi}_1\)) is \(O(\tau)\), with stability constant \(\hat{C}_{\text{stab}} \approx C_{\text{stab}}\) (the smallness of \(\varepsilon_{\text{tot}}\) ensures \(L_{\text{model}}\) and \(K_R'\) remain close to \(L_R\) and \(K_R\)).
\end{itemize}

Next, we bound the error between the exact split measure \(\mu_1^{(\tau)}\) and the learned split measure \(\mu_1^{(\tau, \text{learn})}\). This error arises from the discrepancy between the exact split flow \(\left( \Phi^{(split)}_\tau \right)^N\) and the learned split flow \(\left( \hat{\Phi}^{(split)}_\tau \right)^N\). Using an argument analogous to Step 3 (Gronwall's inequality for split flows):
- For each step \(k \in \{0, 1, \dots, N-1\}\), the error between \(\Phi^{(split)}_\tau\) and \(\hat{\Phi}^{(split)}_\tau\) is bounded by \(O(\varepsilon_{\text{tot}} \tau)\) (local error),
- Compounding this over \(N = 1/\tau\) steps gives a global flow error of \(O(\varepsilon_{\text{tot}})\), with constant \(\hat{C}\) (depending on \(L_{\text{model}}\) and \(T=1\)).

Translating this to the Wasserstein-1 distance (via Lipschitz continuity of test functions and finite moments of \(\mu_0\)):
\[
W_1\left( \mu_1^{(\tau)}, \mu_1^{(\tau, \text{learn})} \right) \leq \hat{C} \varepsilon_{\text{tot}}.
\]

Finally, we combine the two error components using the triangle inequality for the Wasserstein-1 distance:
\[
W_1\left( \mu_1^{(\tau, \text{learn})}, \mu_1 \right) \leq W_1\left( \mu_1^{(\tau, \text{learn})}, \mu_1^{(\tau)} \right) + W_1\left( \mu_1^{(\tau)}, \mu_1 \right).
\]
Substituting the bounds from Step 2 (\(W_1(\mu_1^{(\tau)}, \mu_1) \leq C_{\text{stab}} \tau\)) and the above learned split error:
\[
W_1\left( \mu_1^{(\tau, \text{learn})}, \mu_1 \right) \leq \hat{C} \varepsilon_{\text{tot}} + C_{\text{stab}} \tau.
\]
Defining \(C_{\text{tot}} = \max\{C_{\text{stab}}, \hat{C}\}\) (a constant depending on the Lipschitz constants \(L_R, L_{\text{model}}\), commutator bounds \(K_R, K_R'\), stability constants \(M, \omega\) from Assumption (A3), and time horizon \(T=1\)), this simplifies to:
\[
W_1\left( \mu_1^{(\tau, \text{learn})}, \mu_1 \right) \leq C_{\text{tot}} \left( \tau + \varepsilon_{\text{tot}} \right).
\]

This completes the convergence proof. A key implicit assumption—trajectories remain in a bounded set \(B_R\) where the uniform Lipschitz and learning error bounds apply—is justified by:
1. The polynomial growth condition on \(v\) (Assumption (A1)), which prevents trajectories from escaping to infinity in finite time,
2. The finite second moment of \(\mu_0\) (hence finite first moment), which ensures the bulk of trajectories stay within \(B_R\) for some \(R > 0\).

For infinite-dimensional spaces \(\mathcal{U}\) (e.g., function spaces), the weak formulation of the continuity equation (Assumption (A2)) ensures that all integrals and pushforward measures are well-defined (measurable and integrable), as cylindrical test functions avoid issues with infinite-dimensional norms.

\section{Explanations in the FFM Framework}

\subsection{Overview of Functional Flow Matching}
\label{app:FFM}
Functional Flow Matching (FFM) is a function-space generative model that generalizes the recently-introduced Flow Matching model to operate in infinite-dimensional spaces \citep{kerrigan2023functional}. The key motivation for developing models in functional spaces stems from the fact that many real-world data types, such as time series, solutions to partial differential equations (PDEs), and audio signals, are inherently infinite-dimensional and naturally represented as functions. Traditional generative models that operate on finite-dimensional discretizations (e.g., grids) are often tied to specific resolutions, making them ill-posed in the limit of zero discretization error and difficult to transfer across different discretizations. In contrast, functional approaches like FFM are discretization-invariant, allowing generation of functions that can be evaluated at any arbitrary points, ensuring consistency and flexibility for infinite-dimensional data.

FFM works by defining a path of probability measures that interpolates between a fixed Gaussian measure and the data distribution, then learning a vector field on the underlying space of functions that generates this path. Unlike density-based methods, FFM is purely measure-theoretic, avoiding issues with non-existent densities in infinite dimensions (e.g., no analogue of Lebesgue measure exists in Banach spaces). It enables simulation-free training via a regression objective, making it efficient and suitable for function spaces.

Let \(X \subset \mathbb{R}^d\) and consider a real separable Hilbert space \(\mathcal{F}\) of functions \(f: X \to \mathbb{R}\) equipped with the Borel \(\sigma\)-algebra \(\mathcal{B}(\mathcal{F})\). The goal is to sample from a data distribution \(\nu\) on \(\mathcal{F}\). FFM constructs a path of measures \((\mu_t)_{t \in [0,1]}\), where \(\mu_0\) is a fixed reference measure (e.g., Gaussian) and \(\mu_1 \approx \nu\).

A time-dependent vector field \(v: [0,1] \times \mathcal{F} \to \mathcal{F}\) generates the flow \(\phi: [0,1] \times \mathcal{F} \to \mathcal{F}\) via the ODE:
\[
\partial_t \phi_t(g) = v_t(\phi_t(g)), \quad \phi_0(g) = g.
\]
The path is \(\mu_t = [\phi_t]_{\#} \mu_0\), the pushforward of \(\mu_0\).

\subsection{Mathematical justification: marginal-to-joint construction and splitting}

We recall the marginalization identity established in functional flow matching (Theorem~1 of \cite{kerrigan2023functional}): if for each \(f\) the conditional path \(\mu_t^f\) is generated by a vector field \(v_t^f\) and is absolutely continuous with respect to the marginal \(\mu_t\), then the averaged vector field
\[
\bar v_t(u) \;=\; \int v_t^f(u)\,\frac{d\mu_t^f}{d\mu_t}(u)\,d\nu^f(f)
\]
generates the marginal path \(\mu_t = \int \mu_t^f\,d\nu^f(f)\), and the pair \((\bar v_t,\mu_t)\) satisfies the weak continuity equation.

In our split setting, we apply this construction coordinate-wise. For almost every joint sample \((f,g)\) from \(\nu\), let \(v_t^{(f\mid g)}\) be the conditional velocity for \(f\) with \(g\) frozen, and let \(w_t^{(g\mid f)}\) be the conditional velocity for \(g\) with \(f\) frozen. Lifting these to the product space by zero-padding, we define
\[
 v^{(A)}_t(f,g) := \Big( \int v_t^{(f'\mid g)}(f)\,\frac{d\mu_t^{f'}(f)}{d\mu_t(f)}\,d\nu^f(f'),\; 0 \Big),
\]
and analogously
\[
v^{(B)}_t(f,g) := \Big( 0,\;\int w_t^{(g'\mid f)}(g)\,\frac{d\mu_t^{g'}(g)}{d\mu_t(g)}\,d\nu^g(g') \Big).
\]
Under absolute continuity and integrability, these define the A- and B- component fields, yielding generators
\[
\mathcal L_A(t) = \langle v^{(A)}_t, D_u\cdot\rangle,\qquad 
\mathcal L_B(t) = \langle v^{(B)}_t, D_u\cdot\rangle,
\]
and hence the full Liouville operator \(\mathcal L = \mathcal L_A+\mathcal L_B\). This provides a principled justification for learning \(\hat v_t^f\) and \(\hat w_t^g\) separately and composing their lifted actions in the joint space.

\subsection{Equivalence to paired-training schemes via bijective reparametrizations}

Several applied works, such as the marginal-to-PDE framework of \cite{zhang2025mpde}, train conditional models with mixed parameterizations. For example, one may train \(v_t^f\) using pairs \((f_t,g_1)\) and \(v_t^g\) using \((f_1,g_t)\), where \(f_1,g_1\) denote prescribed boundary or terminal states. Despite appearing different, these paired schemes are mathematically equivalent to the decoupled training described above, once we account for bijective reparametrizations between the complementary coordinates (e.g., mapping \(g_t\) to \(g_1\) along the frozen B-flow).

\begin{theorem}[Equivalence under bijective coordinate reparametrization.]
Let \(\mathcal R_g:\mathcal G\to\mathcal G\) be a measurable bijection such that for each frozen \(f\), the reparametrization between the complementary coordinates in the two training schemes is given by \(\mathcal R_g\) (and analogously \(\mathcal R_f\) for the \(f\)-coordinate). Assume \(\mathcal R_g,\mathcal R_f\) and their inverses are Lipschitz on bounded supports and preserve conditional path measures. Then training \(v_t^f\) with pairs \((f_t,g_1)\) is equivalent, up to reparametrization, to training with pairs \((f_t,g_t)\); similarly for \(v_t^g\). Consequently, inference using \(\hat v_t^f,\hat w_t^g\) trained under either scheme yields the same joint pushforward after mapping representations appropriately.
\end{theorem}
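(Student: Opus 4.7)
The plan is to establish the equivalence in two stages: first at the level of training objectives (showing the loss functionals coincide after a measurable change of variables), and then at the level of Lie--Trotter inference (showing the joint pushforwards agree modulo the global reparametrization). Throughout, the main tool is the change-of-variables formula combined with the assumed measure-preservation properties of $\mathcal{R}_f$ and $\mathcal{R}_g$, together with the Lipschitz hypothesis that keeps all constants in the stability analysis of Appendix~\ref{app:splitting-theory} finite.

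For the training equivalence, I would first write out the two supervised objectives for the $f$-velocity side by side. The decoupled-training scheme of Section~\ref{sec:flowmatch} regresses $\hat v_f(f_t,g_t,t;\theta_f)$ against the interpolant derivative $f_1-z_f$ drawn from $\mathcal{D}_f$, whereas the paired scheme of \citep{zhang2025mpde} regresses a model $\tilde v_f(f_t,g_1,t;\theta_f)$ against the same target but with the complementary coordinate evaluated at the terminal state. I would define the reparameterized estimator $\tilde v_f(f_t,g_1,t;\theta):=\hat v_f(f_t,\mathcal{R}_g^{-1}(g_1),t;\theta)$ and apply the change-of-variables formula to the inner expectation. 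Because $\mathcal{R}_g$ is assumed to preserve the conditional path measure on $\mathcal{G}$, the pushforward of the training distribution under $\mathrm{id}_{\mathcal{F}}\times\mathcal{R}_g$ coincides (coordinate by coordinate) with the other scheme's training distribution, so the two $L^2$ losses differ only by composition with $\mathcal{R}_g^{-1}$ on the conditioning input. Consequently the Bayes-optimal regressors are related by $\tilde v_f^{\star}(\cdot,g_1,t)=\hat v_f^{\star}(\cdot,\mathcal{R}_g^{-1}(g_1),t)$, and an analogous statement holds for $\hat w_t^g$ via $\mathcal{R}_f$.

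For the inference equivalence, I would invoke the Lie--Trotter construction of Section~\ref{sec:inference}. Let $\hat\Phi^{(A)}_\tau,\hat\Phi^{(B)}_\tau$ denote the one-step learned flows for the first scheme, and $\tilde\Phi^{(A)}_\tau,\tilde\Phi^{(B)}_\tau$ those for the second. The conjugation identity from the training step lifts to the flow level: $\tilde\Phi^{(A)}_\tau = (\mathrm{id}_{\mathcal{F}}\times\mathcal{R}_g)^{-1}\circ \hat\Phi^{(A)}_\tau\circ(\mathrm{id}_{\mathcal{F}}\times\mathcal{R}_g)$ and symmetrically for $B$. Pushing forward $\mu_0$ by the $N$-fold composition and applying the functorial property $(\Psi\circ\Phi)_{\#}=\Psi_{\#}\circ\Phi_{\#}$ of pushforwards, the terminal measures satisfy $\mu_1^{(\tau,\mathrm{learn,II})}=(\mathcal{R}_f\times\mathcal{R}_g)^{-1}_{\#}\,\mu_1^{(\tau,\mathrm{learn,I})}$, which is precisely the ``same joint pushforward after mapping representations appropriately'' claimed in the statement. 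The Lipschitz hypothesis on $\mathcal{R}_f,\mathcal{R}_g$ ensures the Wasserstein-1 distances estimated in Theorem~\ref{theorem: error bound} transfer across the reparametrization with only a multiplicative Lipschitz constant.

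The main obstacle will be verifying that $\mathcal{R}_g$ actually preserves the conditional path measure at every intermediate interpolation time $t\in(0,1)$, and not only at the boundary states $g_t$ and $g_1$ used in training. Naively, the reparametrization is defined from endpoint-to-endpoint along the frozen B-flow, so the conjugation argument at the flow level needs the interpolation trajectories to be sent to interpolation trajectories of the other scheme. I expect to resolve this by restricting attention to the bounded invariant set $\mathcal{B}$ from Assumption~(A3), on which the component flows are globally Lipschitz and the reparametrizations commute with the linear interpolants up to a controllable error that can be absorbed into the approximation constants $\varepsilon_f,\varepsilon_g$. An alternative route, which I would take if the commutation fails pointwise, is to formulate the statement as an equivalence of equivalence classes of trajectories under the action of $\mathcal{R}_f\times\mathcal{R}_g$, thereby reducing the claim to a pushforward identity that follows directly from the abstract change-of-variables theorem on separable Hilbert spaces.
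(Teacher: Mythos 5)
Your proposal follows essentially the same route as the paper's proof: a change of variables via the reparametrization map in the training loss (combined with the assumed measure preservation) establishes the training-level equivalence, and the commutation of pushforwards with coordinate changes gives the inference-level equivalence. The obstacle you flag about preservation of the conditional path measure at intermediate interpolation times $t\in(0,1)$ is a genuine subtlety that the paper's proof glosses over with a single phrase, but it is largely absorbed by the theorem's hypothesis that $\mathcal{R}_f,\mathcal{R}_g$ ``preserve conditional path measures,'' so your proposed resolutions are more a matter of making that hypothesis precise than of closing a gap in the argument.
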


\begin{proof}
Let \(\Psi_g\) denote the representation map from scheme (A) to scheme (B), e.g. \(\Psi_g(g_t)=g_1\). By assumption, \(\Psi_g\) is bijective with Lipschitz inverse and pushes forward conditional measures correctly. For scheme (1) (pairs \((f_t,g_t)\)), the training loss is
\[
\mathcal J_1(\theta_f) = \mathbb E_{f}\,\mathbb E_{g\sim\mu_t^f}\big[\|v_t^f(g)-u_t^f(g;\theta_f)\|^2\big].
\]
Changing variables \(h=\Psi_g(g)\) gives
\[
\mathcal J_1(\theta_f) = \mathbb E_{f}\,\mathbb E_{h\sim \Psi_g\#\mu_t^f}\big[\|v_t^f(\Psi_g^{-1}(h))-u_t^f(\Psi_g^{-1}(h);\theta_f)\|^2\big].
\]
Define \(\tilde v_t^f(h):=v_t^f(\Psi_g^{-1}(h))\). Then this loss coincides with the scheme (2) objective (pairs \((f_t,g_1=h)\)) with target \(\tilde v_t^f\). Thus the two schemes are equivalent up to pullback by \(\Psi_g\). The same argument applies to \(v_t^g\). At inference time, pushforwards commute with coordinate changes, so the joint pushforward is identical under either parametrization.
\end{proof}

\section{Additional Details for dataset}\label{appen: detailed dataset}

\paragraph{Synthetical dataset.}The “Easy distribution” is defined as a two-component Gaussian mixture. One component (weight 0.6) is centered at the specified mean, while the other (weight 0.4) is slightly shifted. The covariance matrices capture different correlation structures, with one showing a positive correlation of 0.8 and the other a negative correlation of -0.42, along with distinct variances across dimensions.
The “Complex distribution” is constructed from multiple equally weighted Gaussian components arranged along a circle. These components feature angle-dependent correlations, varying standard deviations, and a small isotropic Gaussian perturbation that enhances multimodality.

To construct datasets that preserve conditional information, we avoid direct joint sampling and instead generate two complementary datasets: one focusing on $p(y|x)$ and the other on $p(x|y)$. A portion of samples is drawn directly from the mixture to preserve global structure. For the $p(y|x)$ dataset, $x$ is drawn uniformly from [-3, 3], and $y$ is sampled from the conditional distribution $p(y|x)$. Conversely, for the $p(x|y)$ dataset, $y$ is drawn from a scaled Beta distribution over [-2, 2], and $x$ is sampled from $p(x|y)$. To further perturb the marginals, we introduce a mixing parameter that blends the original $x$ marginal with a uniform distribution and the original $y$ marginal with a scaled Beta distribution, while keeping the conditionals intact. The conditionals for Gaussian distributions are computed exactly and those for Gaussian mixtures are obtained via mixture-conditioning.

\paragraph{Turek-Hron.} This simulation models a FSI scenario using the CFD solver (Lilypad \citep{weymouth2011boundary}) where a flexible Bernoulli beam is placed downstream of a circular cylinder in a two-dimensional viscous flow. The system features a Reynolds number of Re=1000. Let us make $D$ denotes the diameter of cylinder. The spatially computational domain is $20D\times 14D$. The flexible beam has a length of $2D$ and thickness of $0.06D$, characterized by an elastic modulus of approximately 336,000 and density of 10. The simulation captures the coupled dynamics between the fluid flow and structural deformation over a 10 time units episode with a fixed timestep of 0.001. Key outputs include velocity fields, pressure distributions, and beam displacement data saved at each time step for experiments. 

Normally, the original solver addresses a FSI problem, with one FSI trajectory computed as ground truth. To generate decoupled data, we implemented two solver modifications. First, for solid-to-fluid unidirectional coupling, we compute flow fields using only a single square cylinder (the side length is $D$) flow, then inversely calculate beam forces from this fluid data to control the beam, and finally compute resulting flow fields induced by its motion. Second, for fluid-to-solid unidirectional coupling, we generate flow fields using a $0.5D$ cylinder at each timestep to inversely compute beam positions. Both datasets contain no bidirectional coupling data, and crucially use flow conditions (square cylinder and reduced-diameter cylinder) completely distinct from the test set to prevent information leakage.

This scenario has extensive applications across diverse fields involving flexible structures in fluid environments. The methodology proves particularly valuable in aquatic ecosystem research, where it accurately models the dynamic behavior of submerged vegetation such as kelp forests and seagrass beds under varying current conditions, enabling scientists to understand their role in coastal protection and marine biodiversity conservation. In biomedical engineering, the framework facilitates critical hemodynamics studies by simulating blood flow interactions with deformable arterial walls, helping predict aneurysm rupture risks and optimize cardiovascular device designs like stents and artificial heart valves. The renewable energy sector benefits from applications in wave energy harvesting, where flexible oscillating elements mimic the motion of aquatic plants to efficiently capture ocean wave energy. 

\paragraph{Double Cylinder.} 
This simulation models a dual-cylinder fluid-structure interaction system where a fixed upstream cylinder is positioned ahead of a freely oscillating downstream cylinder that can vibrate in the y-direction. The flow is characterized by a Reynolds number Re=1000. Let us also make the diameter of cylinder $D$. The computational domain spans $8D\times 8D$ lengths ($128\times 128$ grid points). Both cylinders have the same diameter $D$. The fixed upstream cylinder is positioned at ($1.5D$, $4D$) while the oscillating downstream cylinder is located at ($4D$, $4D$), creating a center-to-center spacing of $2.5D$. The downstream cylinder exhibits a mass ratio is 15.0 (relative to displaced fluid mass), dimensionless natural frequency is 0.01, and damping ratio is 0.8.  The simulation employs a time step 0.1 over a total duration of 1000 time units, capturing characteristic vortex-induced vibration phenomena. Our method of generating decoupled data is similar to the Turek-Hron.

This scenario finds critical applications in underwater structural stability and multi-body hydrodynamic systems across various engineering domains. In offshore engineering, the model accurately predicts the complex wake interference effects between multiple platform columns subjected to ocean currents, where the upstream cylinder's vortex shedding can trigger vortex-induced vibrations in the downstream cylinder, leading to structural fatigue. Additionally, the simulation supports the design of underwater cable systems with multi-point anchoring configurations, deep-sea aquaculture net cage arrays where inter-structure spacing affects fish habitat stability, tidal energy converter farms requiring optimal turbine spacing to balance energy harvesting efficiency with structural safety, and autonomous underwater vehicle swarms where wake interference affects formation stability and energy consumption. 

It is worth noting that in our FSI experiments, when generating the structure-condition-on-fluid decoupled data, the prescribed fluid fields had strong periodicity, resulting in negligible structural deformation. This led to a limitation: the model never observed meaningful structural motion during training, and thus could not reasonably be expected to predict structural dynamics. To address this, in all experiments we replaced the structure-condition-on-fluid decoupled data with fully coupled data, while still using the fluid-condition-on-structure decoupled data (constructed as described above). Importantly, this ``half-decoupled'' setting is a fair compromise that does not undermine the core functionality of our method nor diminish the demonstration of its key advantages. 

\paragraph{Nculear Thermal Coupling.} 
The NT Coupling dataset is generated using the open-source MOOSE (Multiphysics Object-Oriented Simulation Environment) framework \citep{osti_1498270}. The physical fields are discretized on separate meshes: $64\times 8$ for the fuel temperature, $64\times 12$ for the fluid variables, and $64\times 20$ for the neutron field. Neutron physics and fuel temperature are solved using the Finite Element Method (FEM), while the fluid domain is computed using the Finite Volume Method (FVM). All fields are subsequently interpolated onto a common grid to ensure spatial alignment. An adaptive time-stepping scheme is employed, and 16 output frames are recorded to capture the transient evolution of the system.

We generate the data following the pre-iteration method \citep{zhang2025mpde}, ensuring the resulting dataset preserves the intrinsic multiphysics interactions. The sequence begins by holding the fluid variables and fuel temperature constant while solving for the neutron field. The resulting neutron distribution, together with the assumed constant fluid temperature, is then used to update the fuel temperature field. The process continues by computing the fluid fields using the partially coupled states, followed by a final fully coupled pass over the neutron and fuel temperature equations. For simplicity in the coupling process, we use spatially distributed fuel temperature fields as inputs in place of the explicit heat flux.

\begin{figure}
    \centering
    \includegraphics[width=0.5\linewidth]{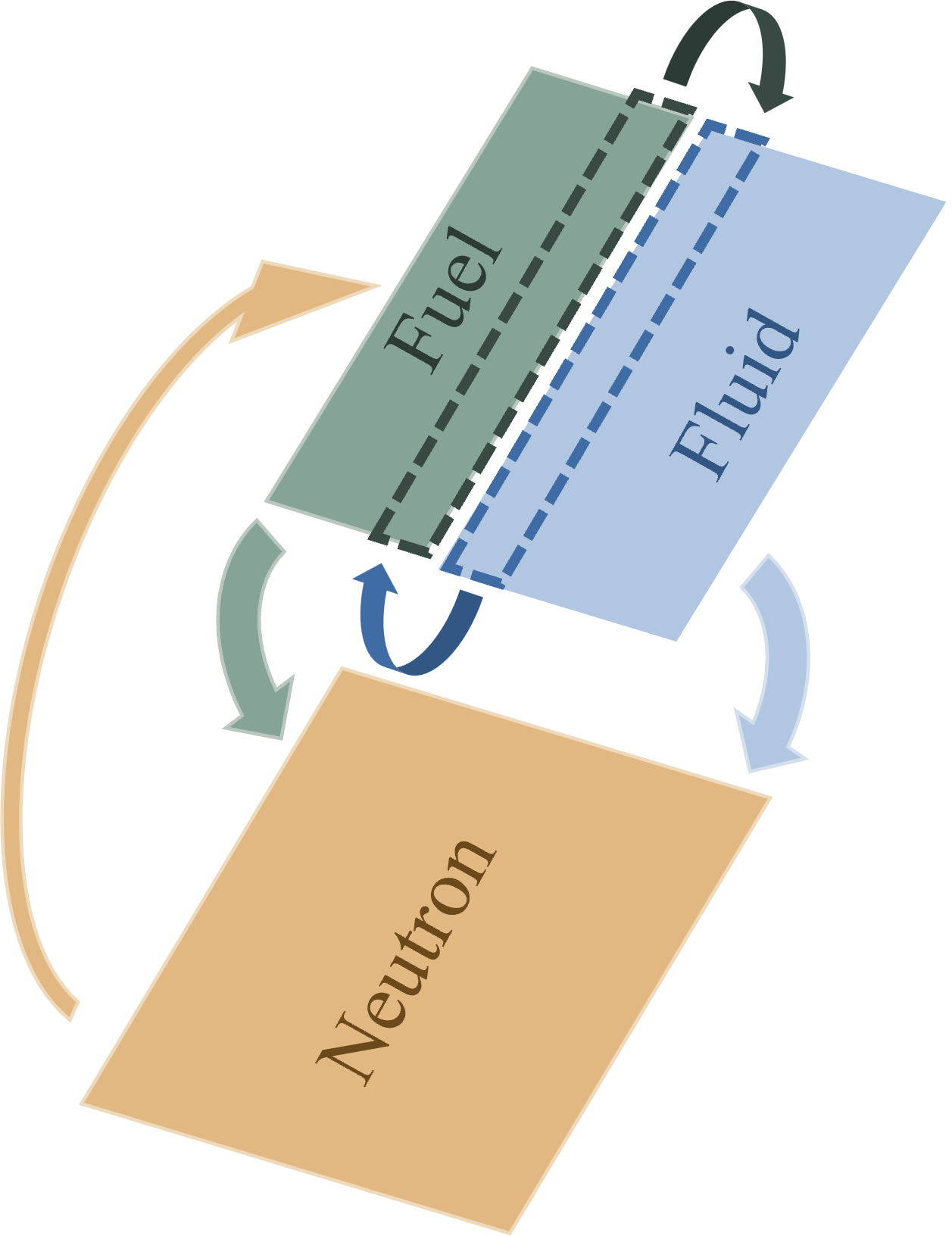}
    \caption{ \rebuttal{The geometric configuration and coupling mechanism of NT Coupling setting.}}

    \vspace{-3mm}
    \label{fig:ntcouple_geo}
\end{figure}

\section{Additional Visualization Results}
\label{app: additional visual}

In this section, we present additional visualizations that were omitted from the main paper due to space limitations. Figure \ref{fig:vis_cno} shows the coupled simulation results for the same sample as in Figure \ref{fig:turek_vis}, but using CNO as the backbone. Figure 5 provides analogous visualizations using FNO* as the backbone, on a sample from the Double-Cylinder dataset. In both cases—regardless of backbone or dataset, our proposed \proj consistently delivers visually superior results compared to the baselines.

Furthermore, to illustrate the difference between decoupled data used for training and coupled data used as inferring, Figure \ref{fig:compare_joint} reports predictions of four physical fields under three settings: (i) training and inference both on decoupled data, (ii) training on decoupled data but inferring on coupled data using \proj, and (iii) joint training and inference on coupled data. The results reveal that the coupled fields differ significantly in pattern from the decoupled ones, especially near structures. Nevertheless, our \proj successfully learns the underlying physical interactions from decoupled data and incorporates them into the flow-matching generative process, thereby producing predictions that closely approximate the truly coupled solutions.

\begin{figure}[htbp]
\centering

\resizebox{0.9\textwidth}{!}{
\begin{tabular}{@{} >{\centering\arraybackslash}m{0.2cm}  
                    m{0.55\textwidth}  
                    m{0.55\textwidth} @{}}
  % 第一组
  \methodrowA{Surrogate-CNO}{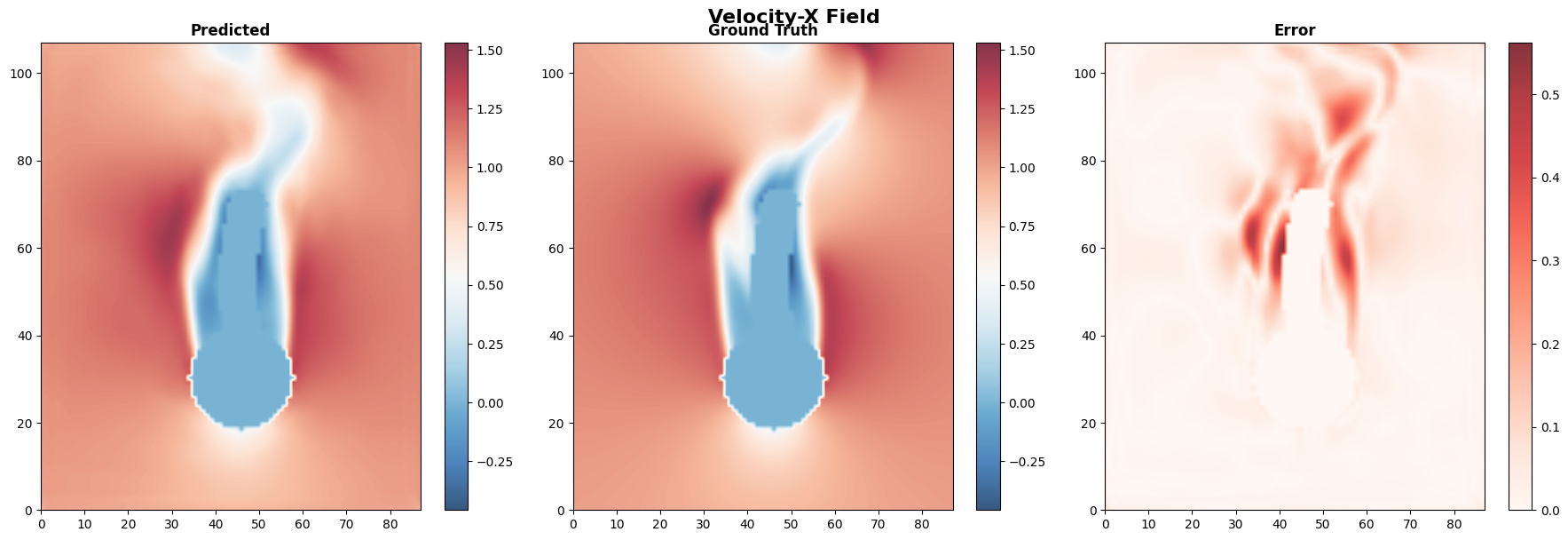}{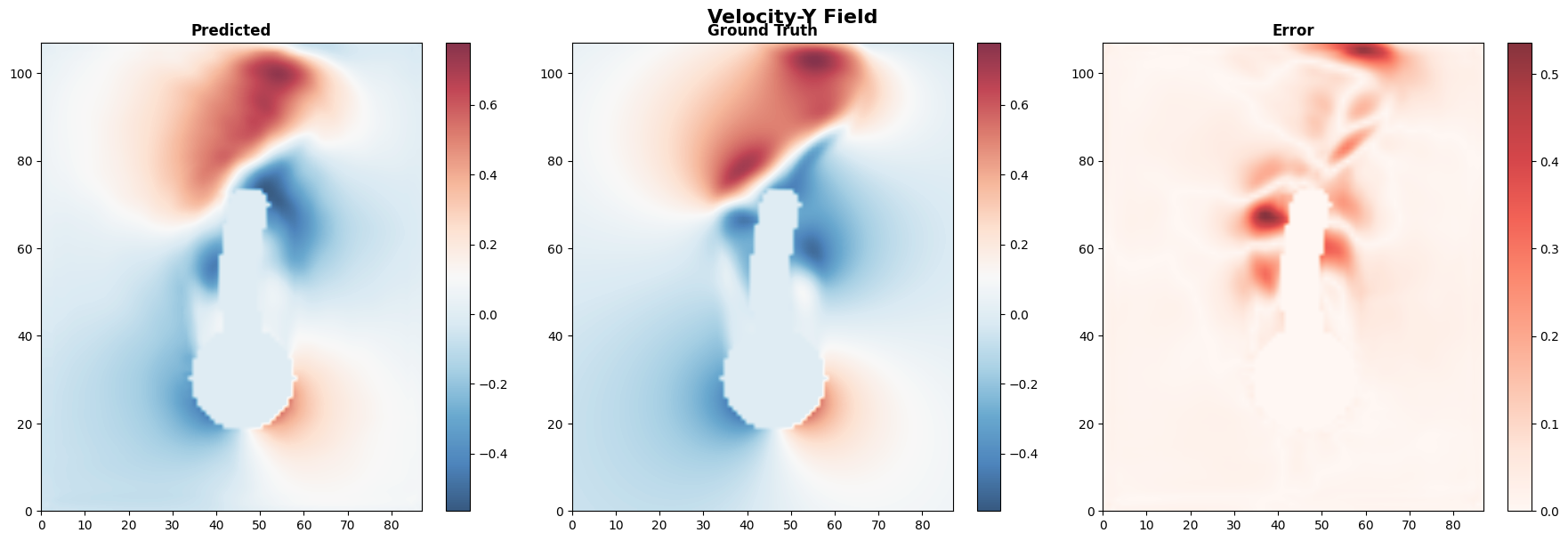}
  \methodrowB{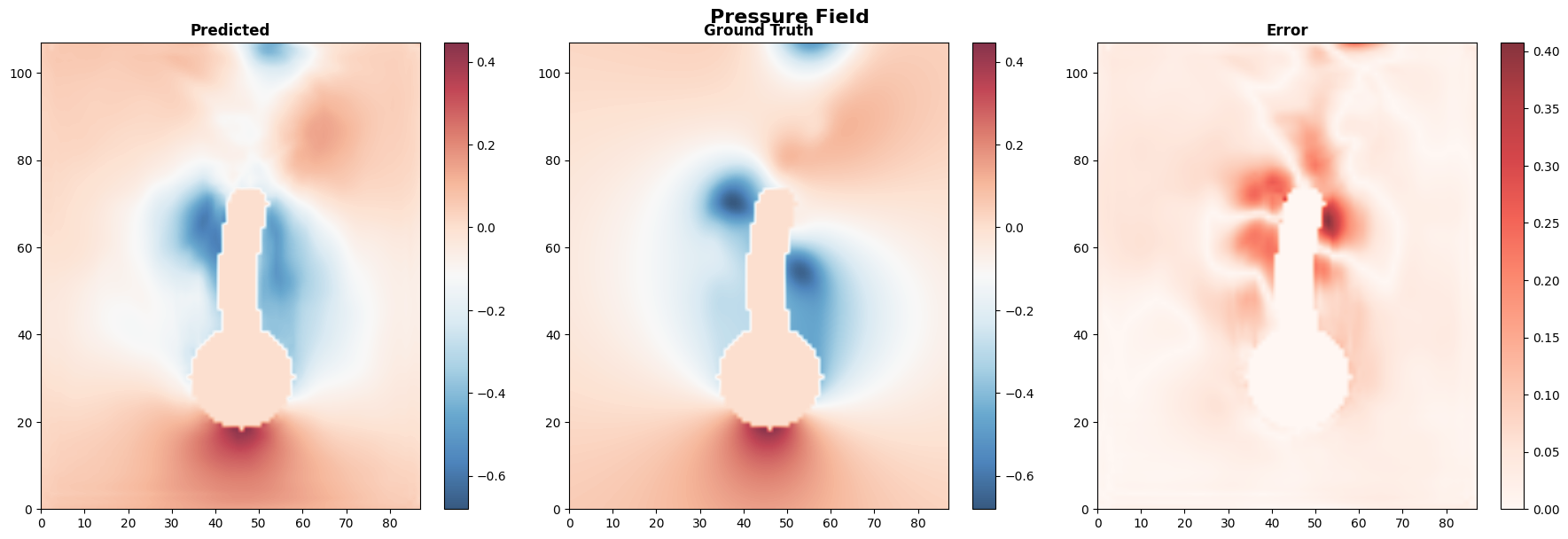}{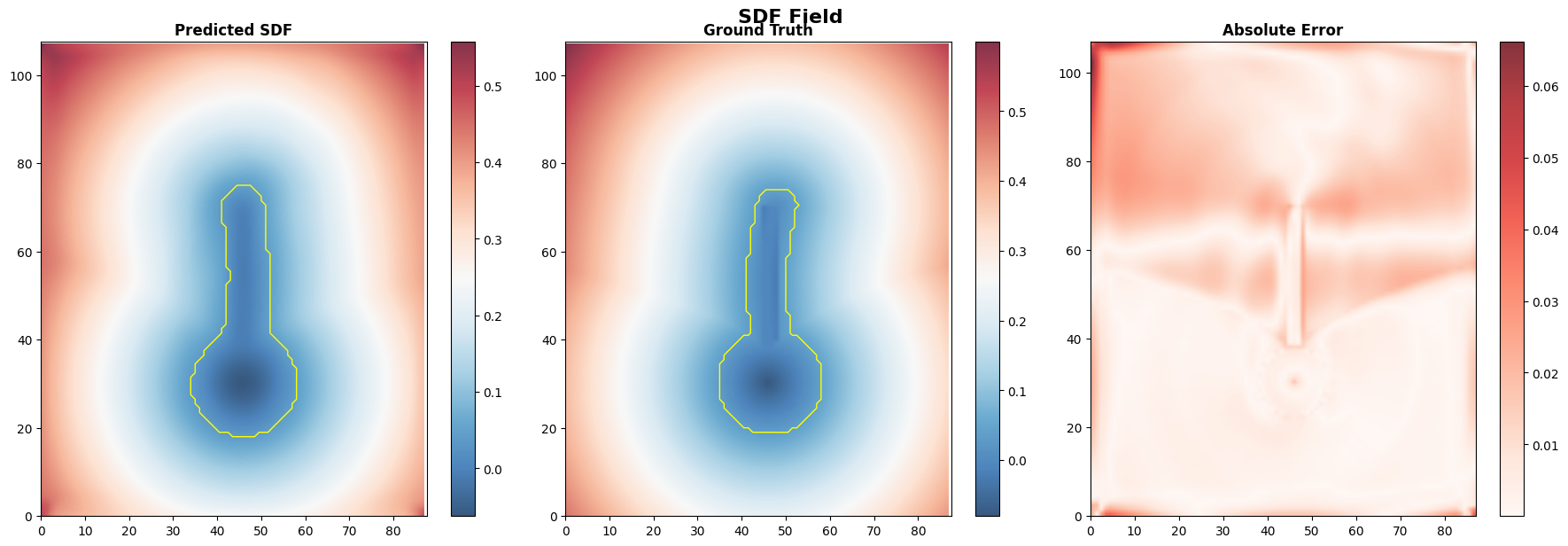}

  % 第二组
  \methodrowA{M2PDE-CNO}{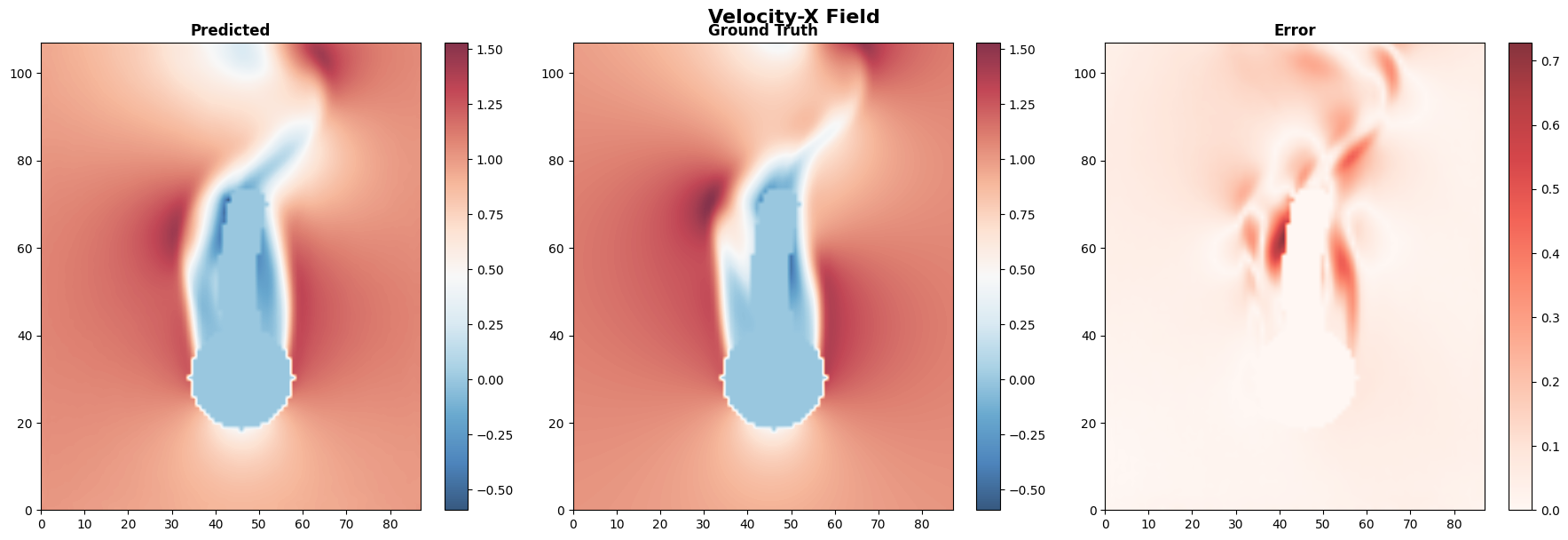}{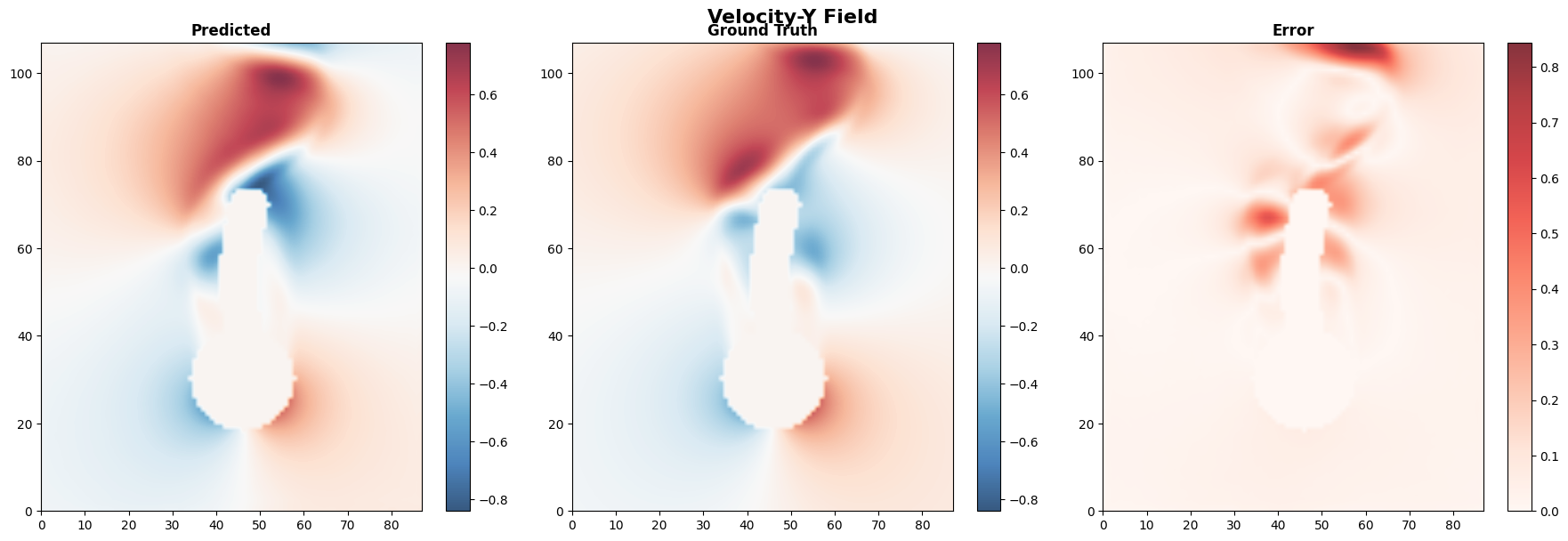}
  \methodrowB{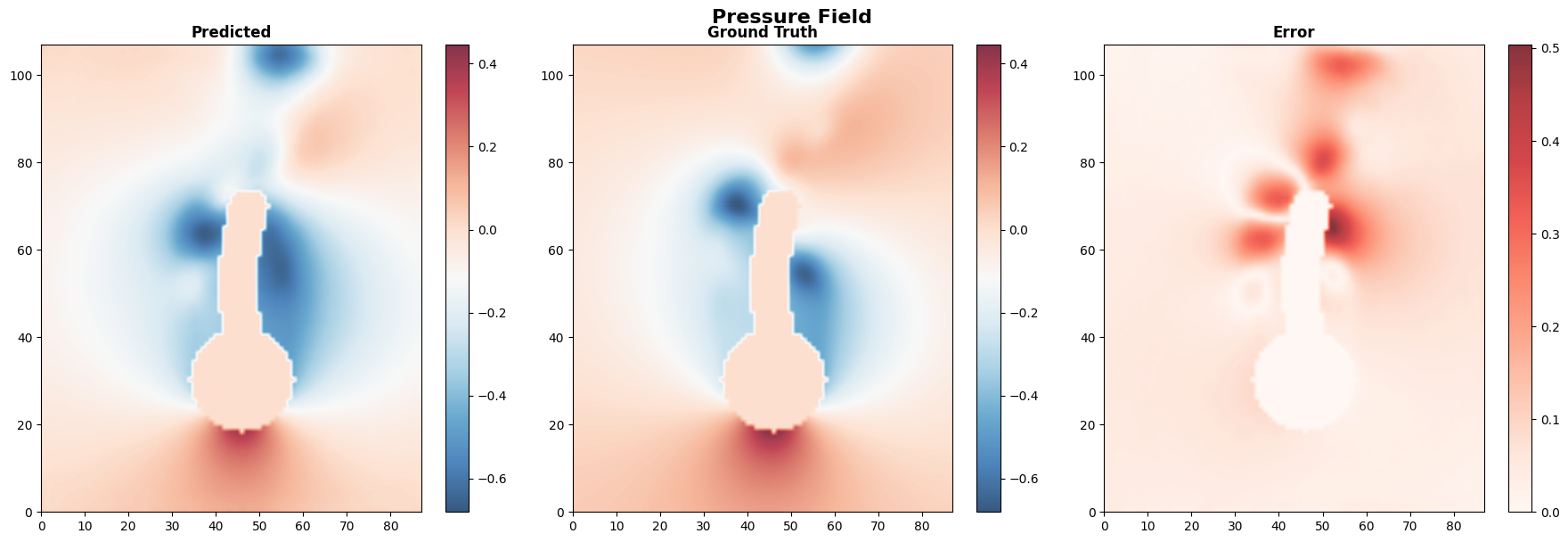}{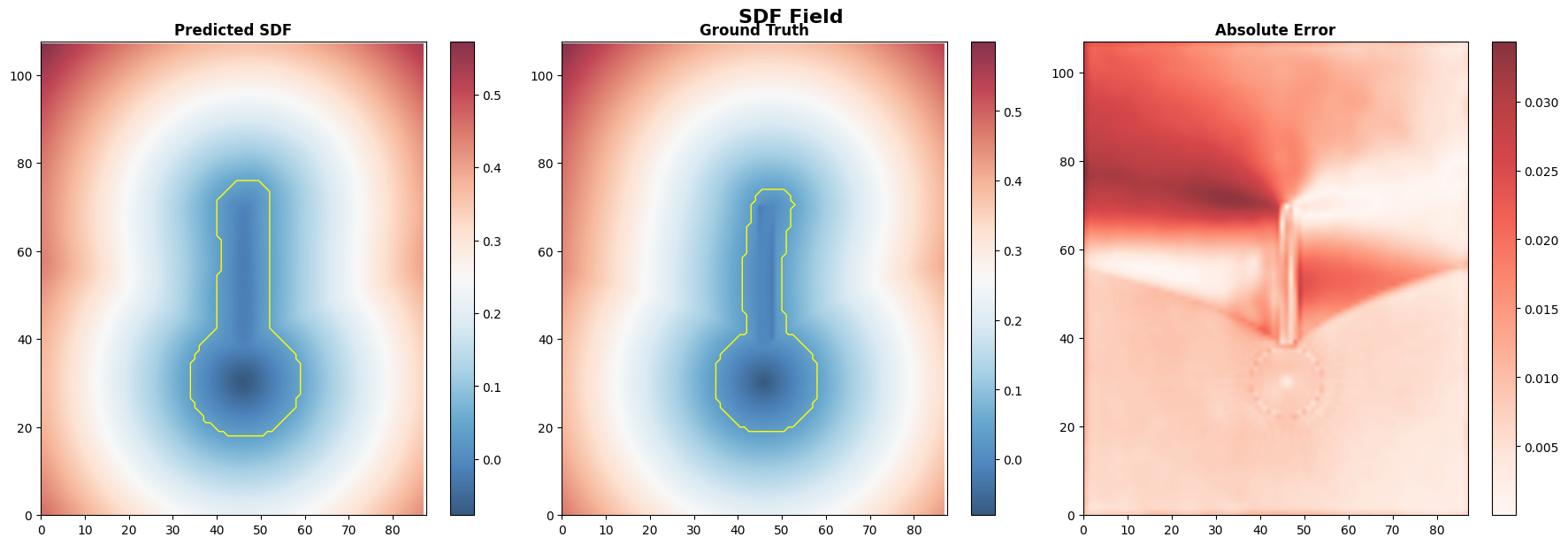}

  % 第三组
  \methodrowA{\proj-CNO}{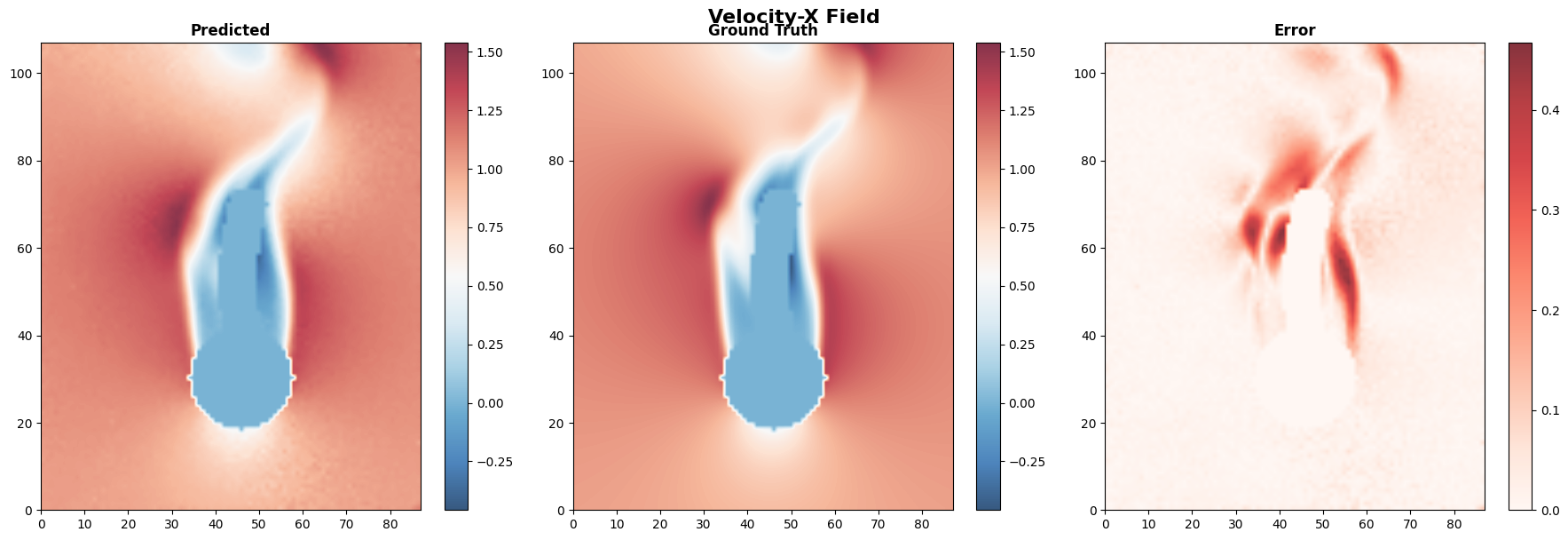}{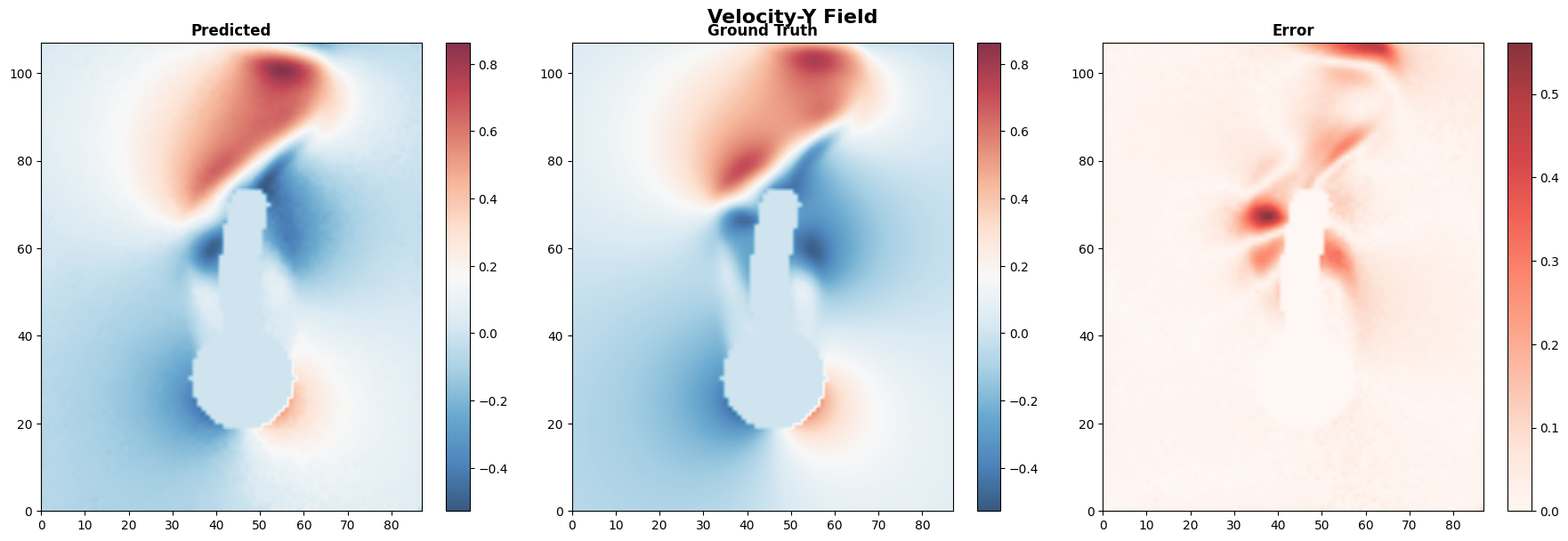}
  \methodrowB{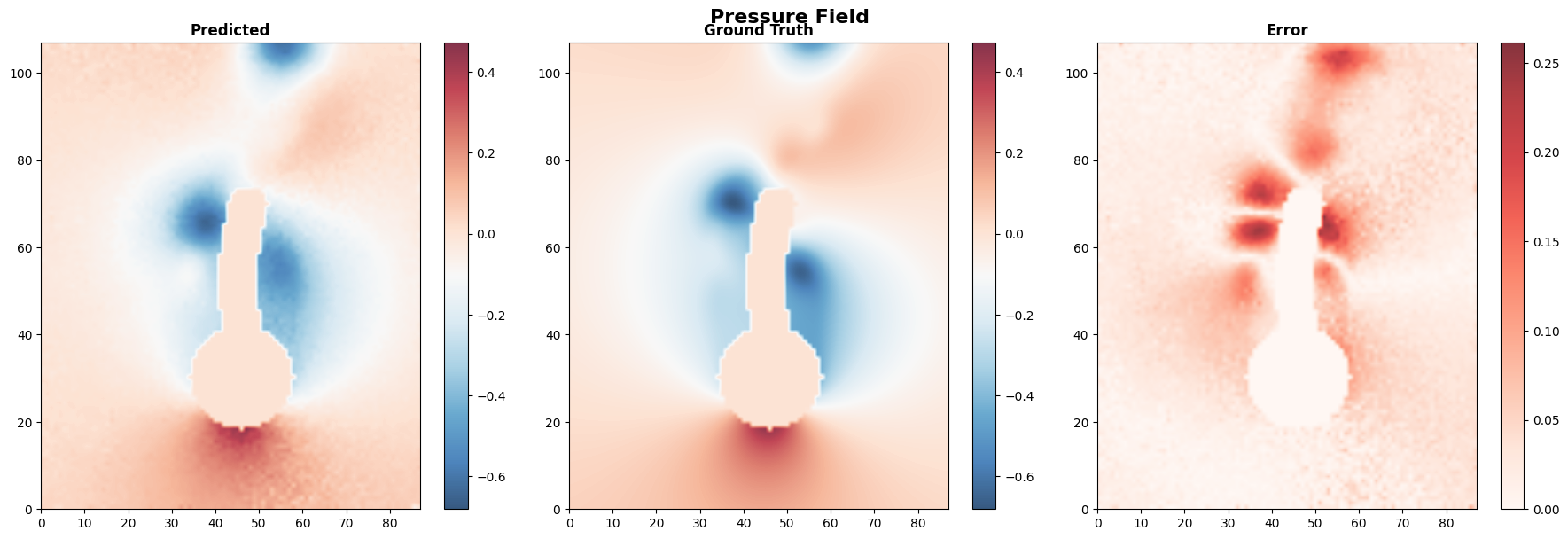}{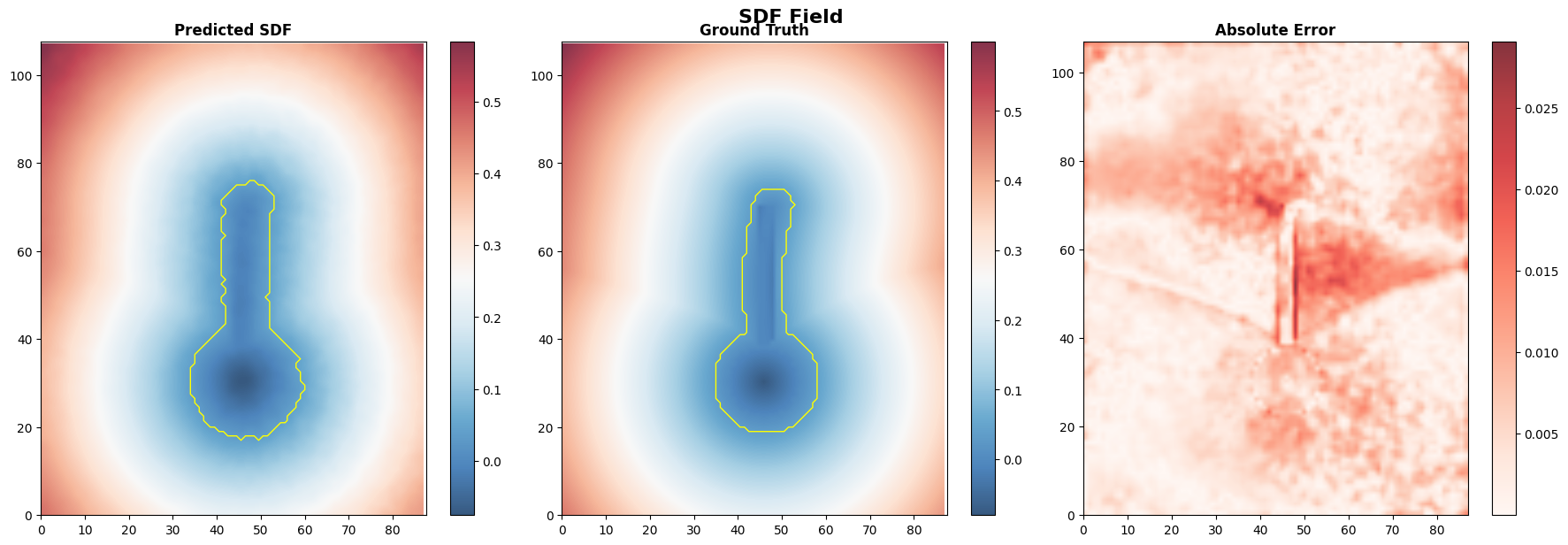}
\end{tabular}
}

\caption{The visualizations of results from different paradigms using CNO as backbone on Turek-Hron setting.}
\label{fig:vis_cno}
% \vspace{-5pt}
\end{figure}

\begin{figure}[htbp]
\centering

\resizebox{0.9\textwidth}{!}{
\begin{tabular}{@{} >{\centering\arraybackslash}m{0.2cm}  
                    m{0.55\textwidth}  
                    m{0.55\textwidth} @{}}
  % 第一组
  \methodrowA{Surrogate-FNO*}{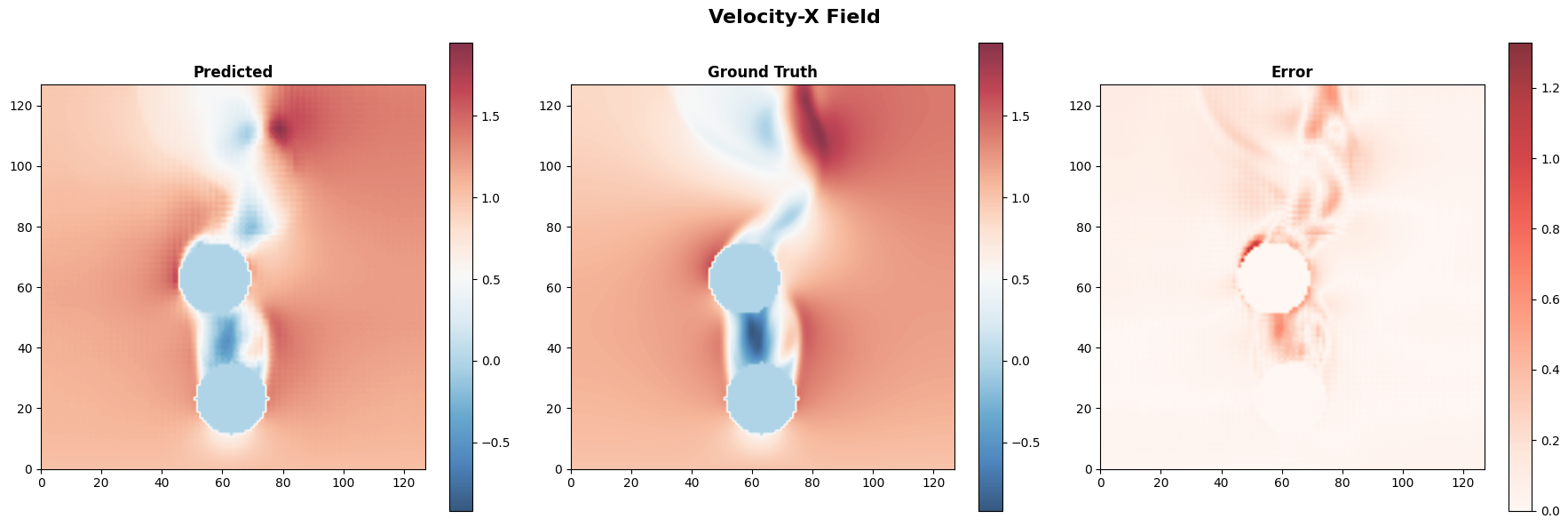}{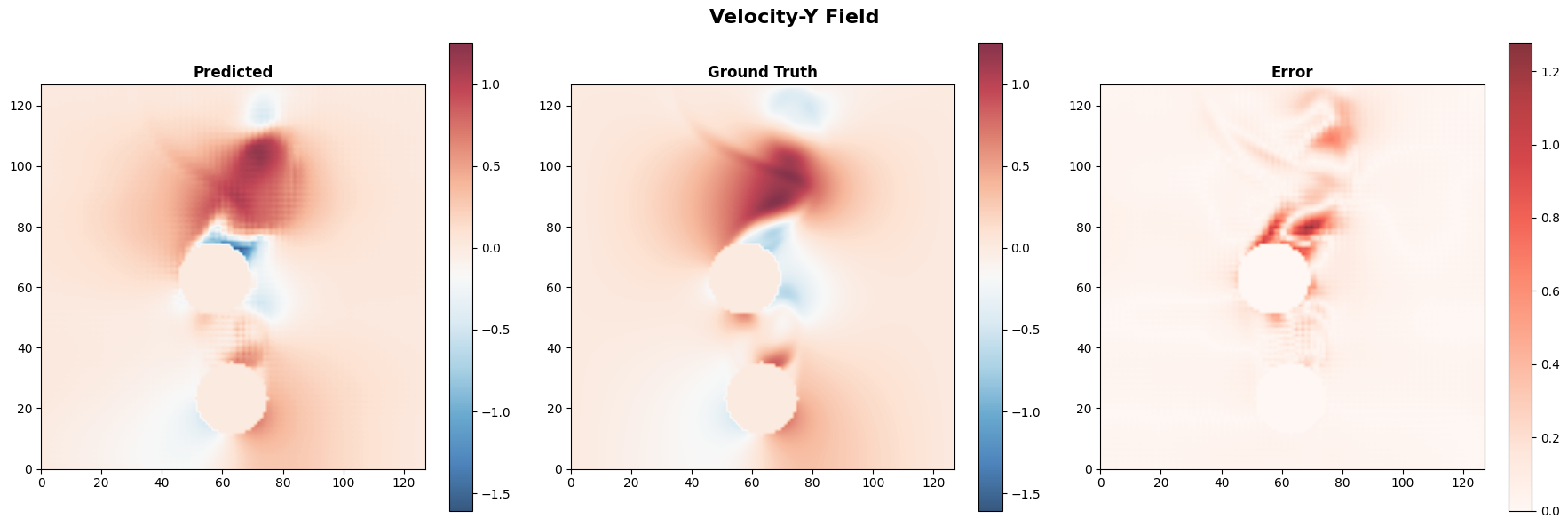}
  \methodrowB{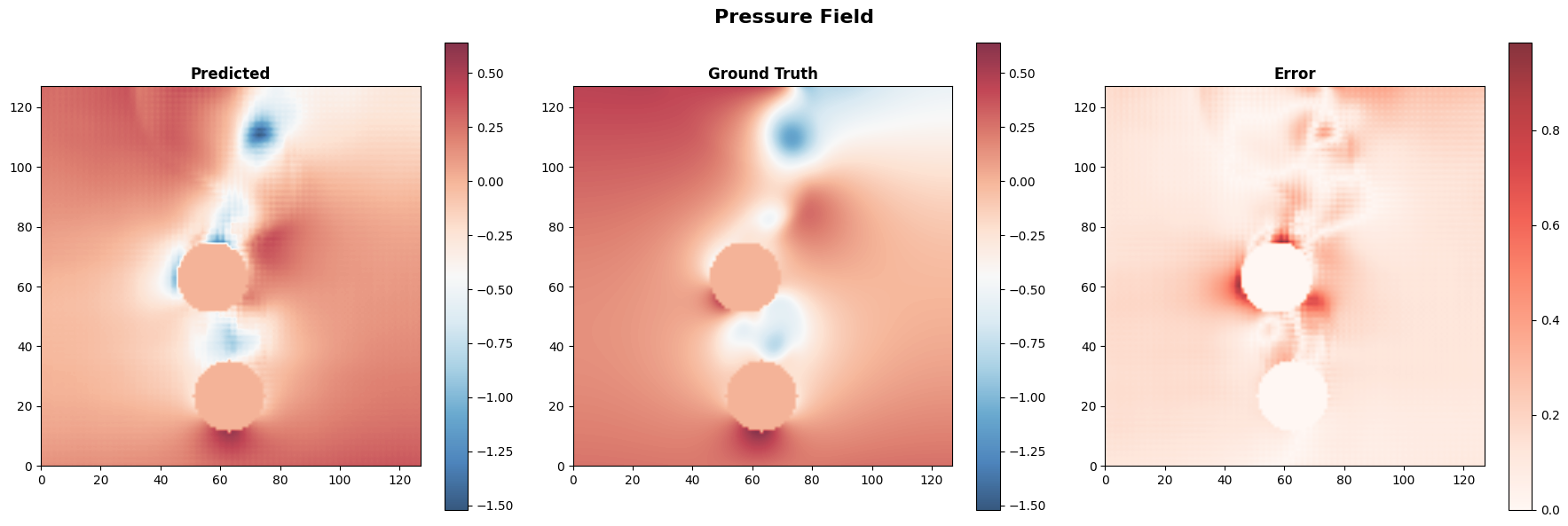}{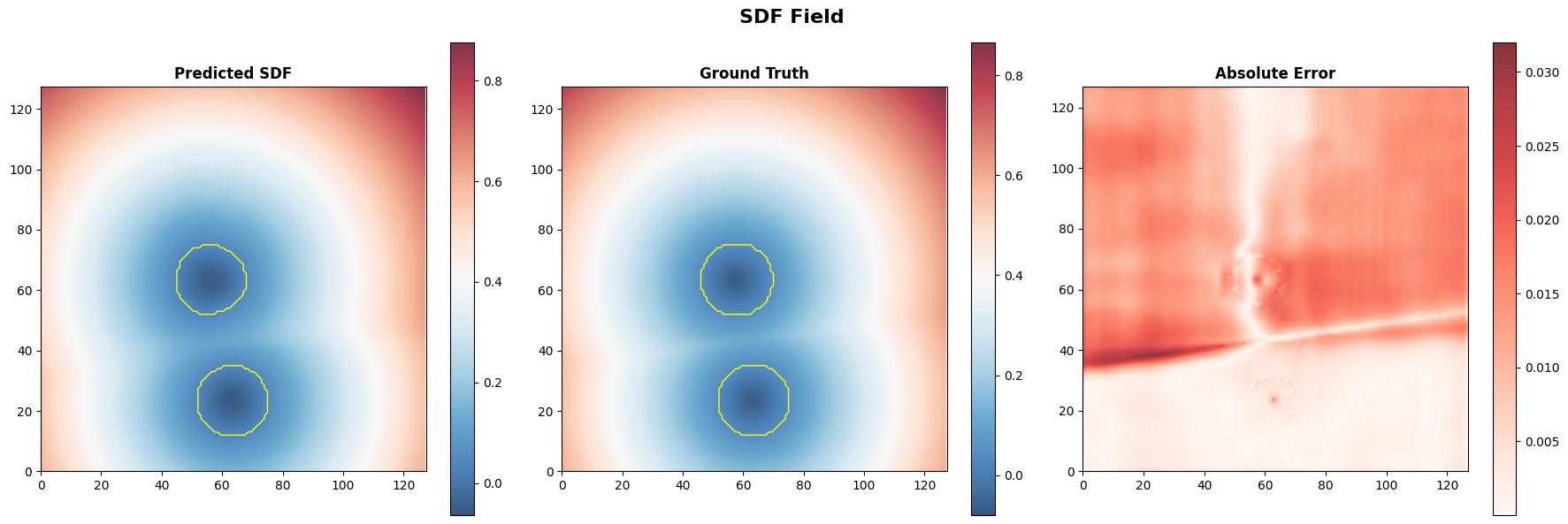}

  % 第二组
  \methodrowA{M2PDE-FNO*}{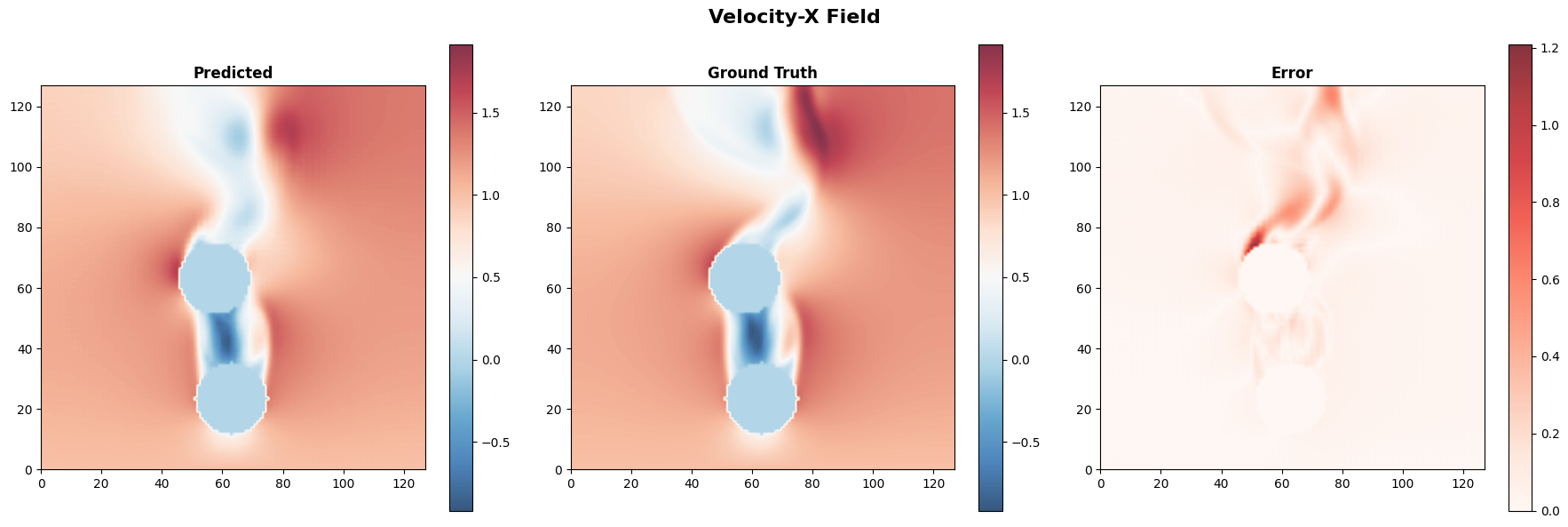}{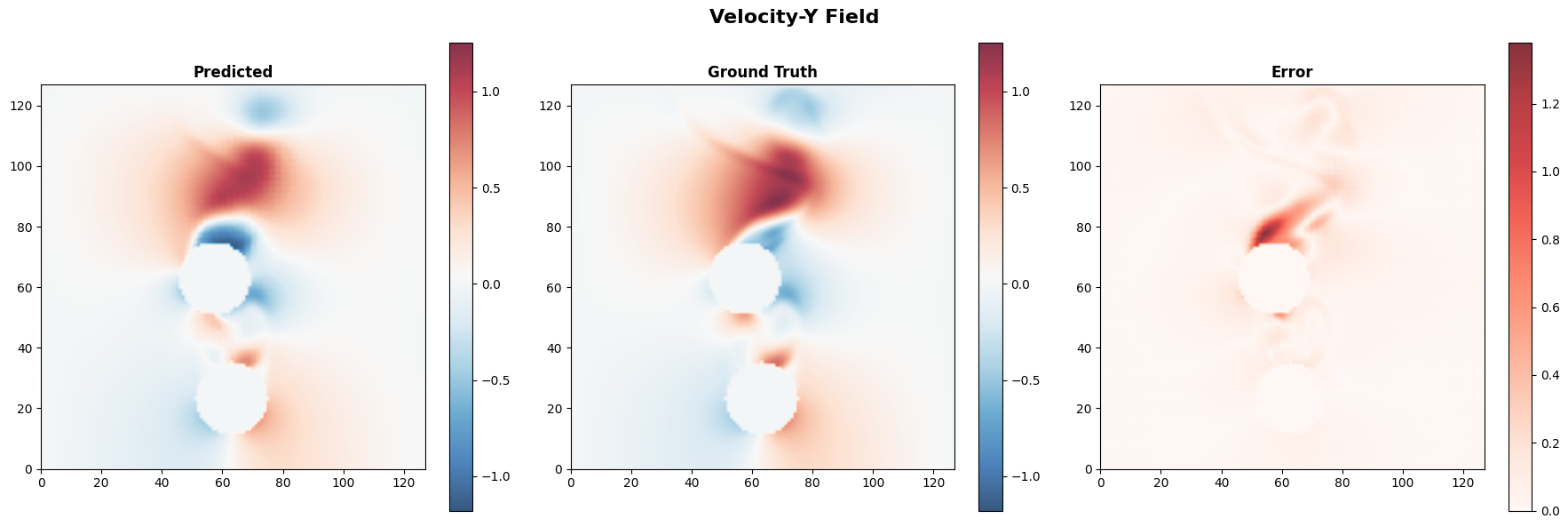}
  \methodrowB{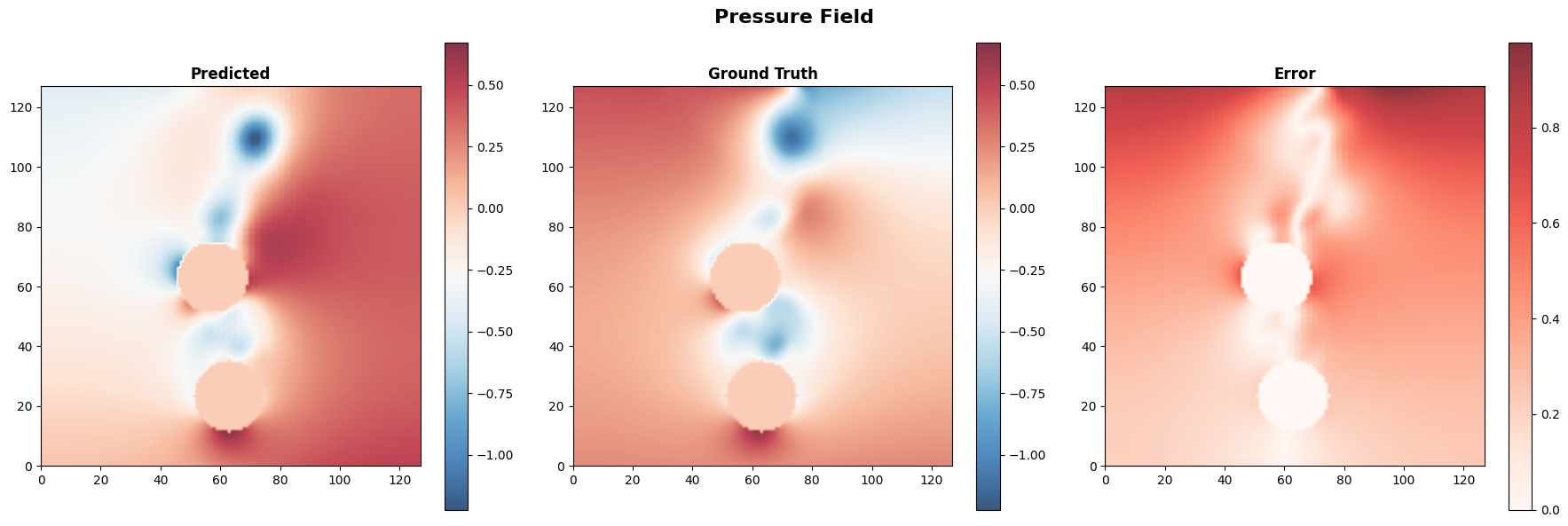}{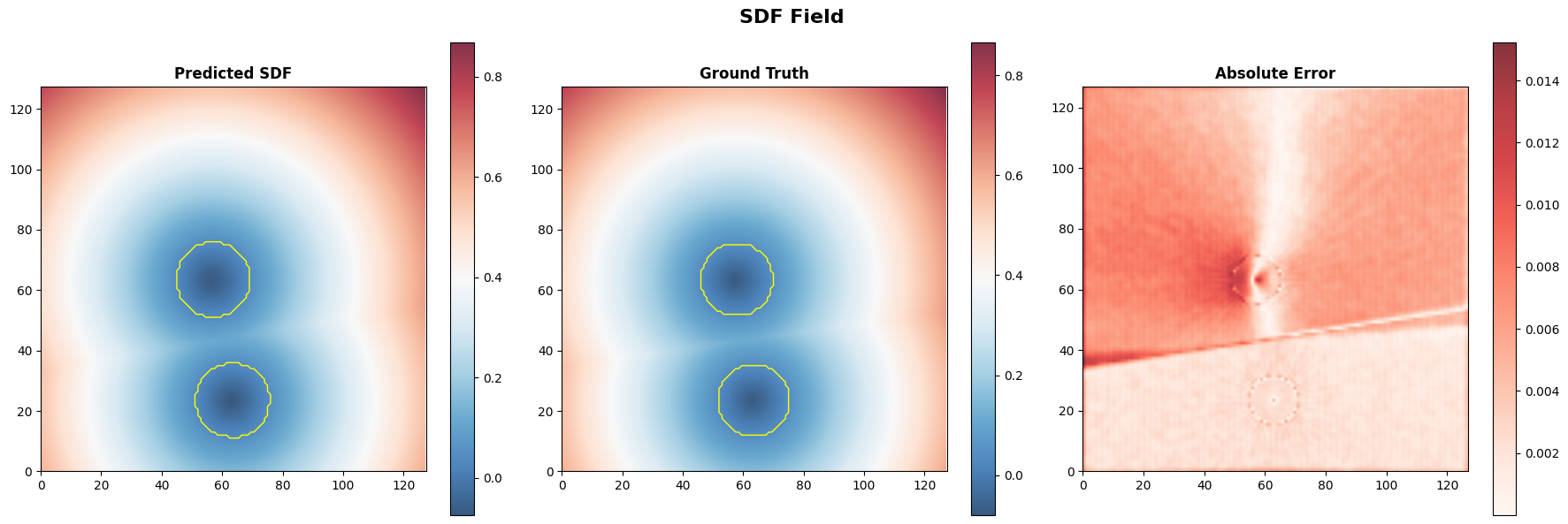}

  % 第三组
  \methodrowA{\proj-FNO*}{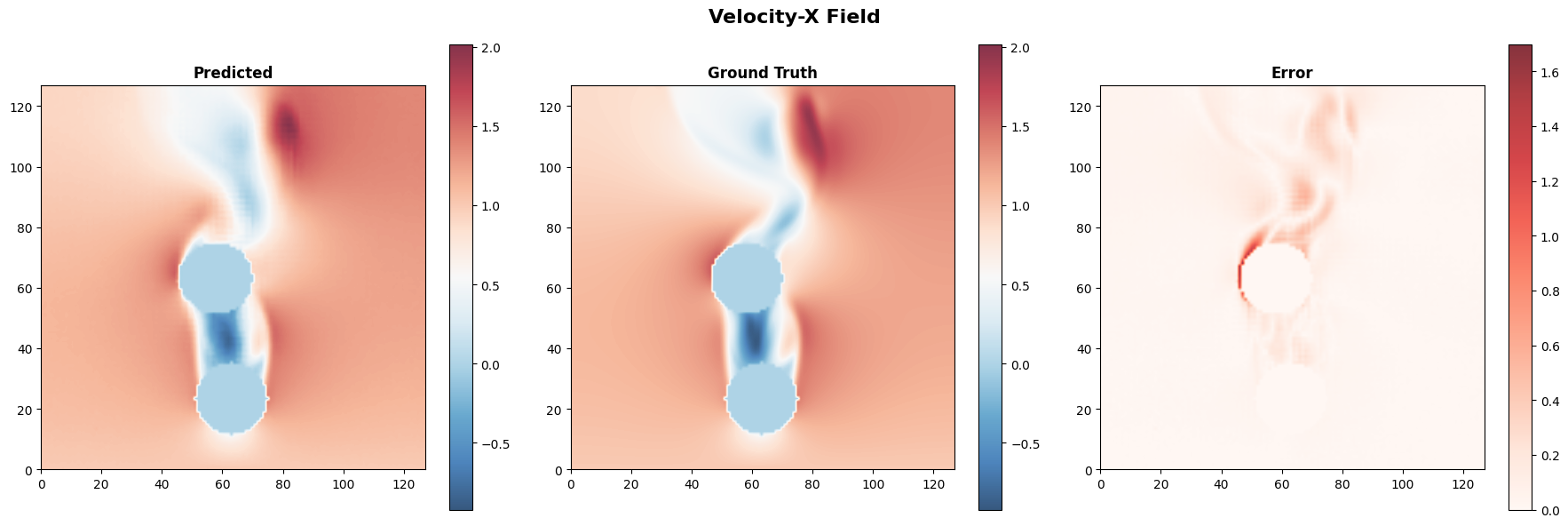}{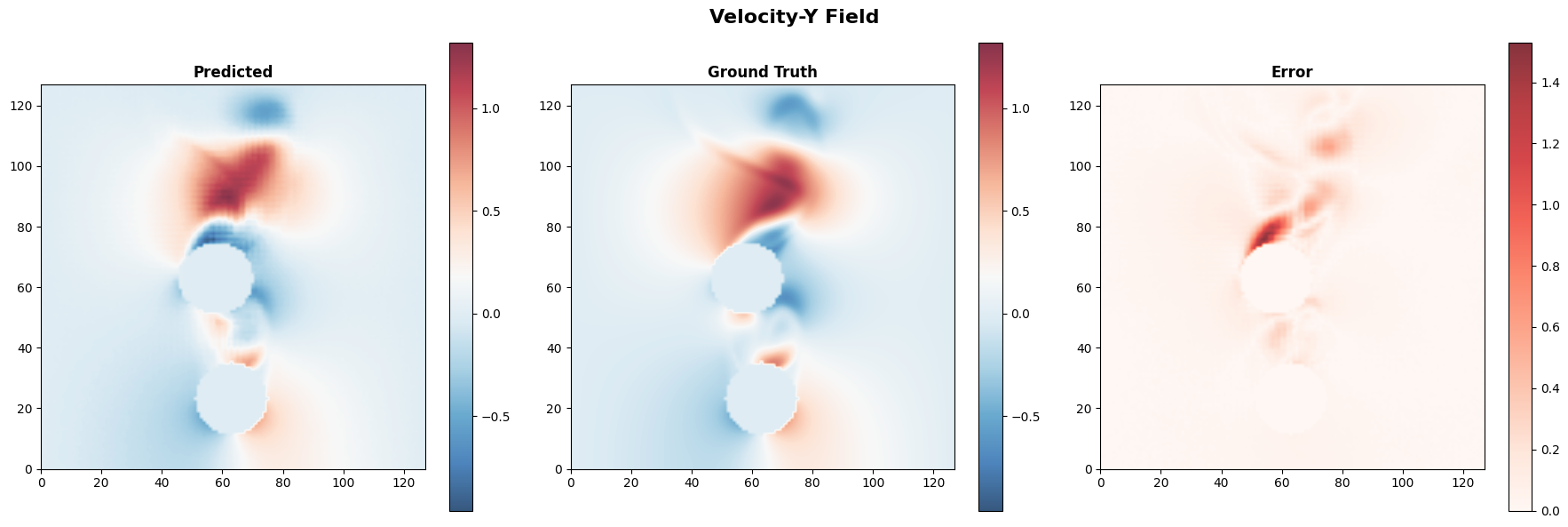}
  \methodrowB{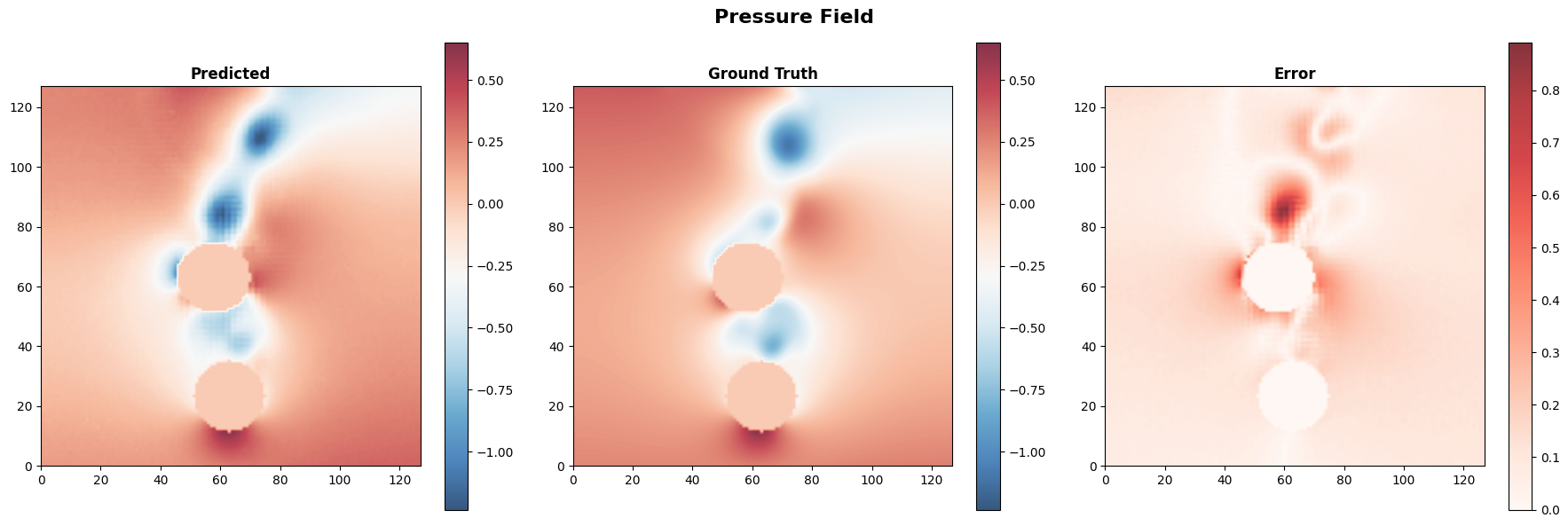}{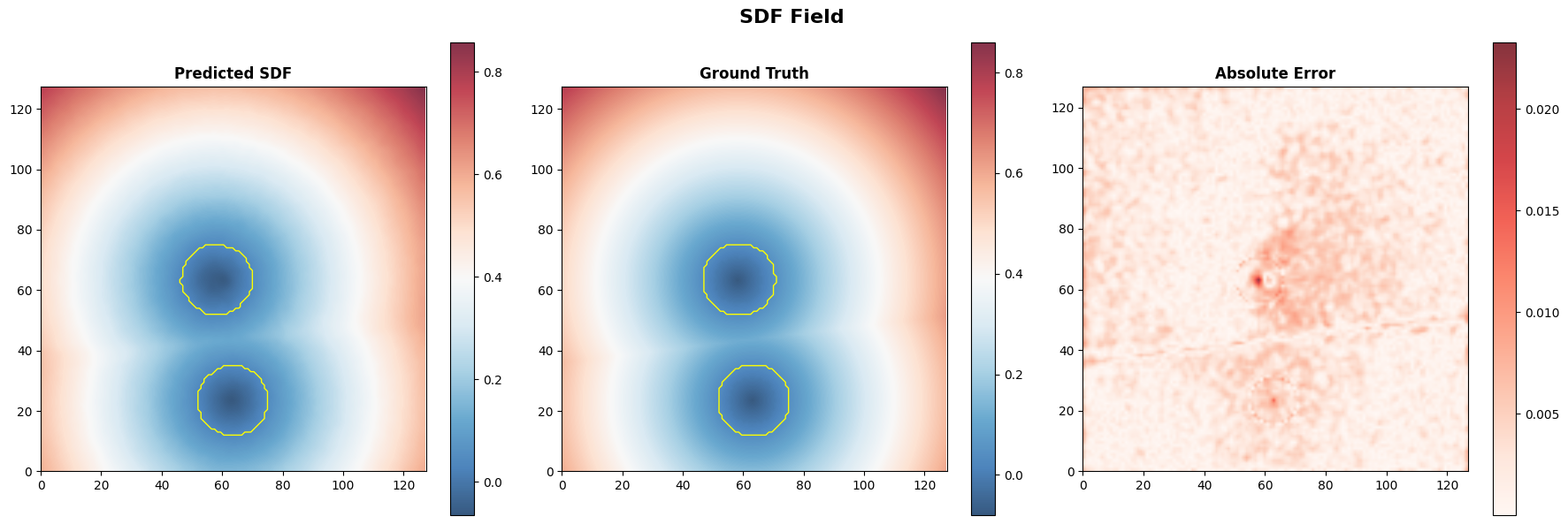}
\end{tabular}
}

\caption{The visualizations of results from different paradigms using FNO* as backbone on Double Cylinder setting.}
\label{fig:vis_double_cylinder}
% \vspace{-5pt}
\end{figure}

% 统一的单元格高度（按你的图大小微调）
\newlength{\cellH}
\setlength{\cellH}{4.0cm}  % ← 调这个就能整体上下微调

% 行标签与图片单元格：同高、盒内垂直居中
\newcommand{\rowlabel}[1]{%
  \parbox[c][\cellH][c]{0.1cm}{\centering\rotatebox{90}{\textbf{#1}}}%
}

\begin{figure}[htbp]
    \centering
    \resizebox{0.98\textwidth}{!}{
    \begin{tabular}{@{}c c c c c@{}} % 去掉竖线
      % 顶部列标题
      & \textbf{Field u} & \textbf{Field v} & \textbf{Field p} & \textbf{Field SDF} \\
      % 第1行
      \rowlabel{Decoupled} &
      \imgcell{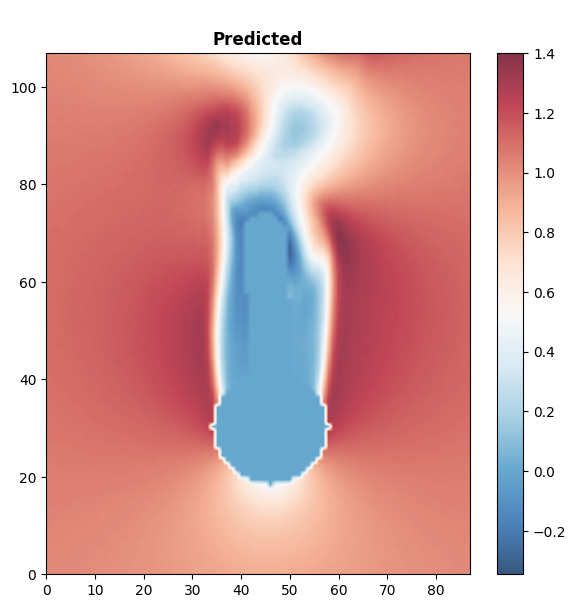} &
      \imgcell{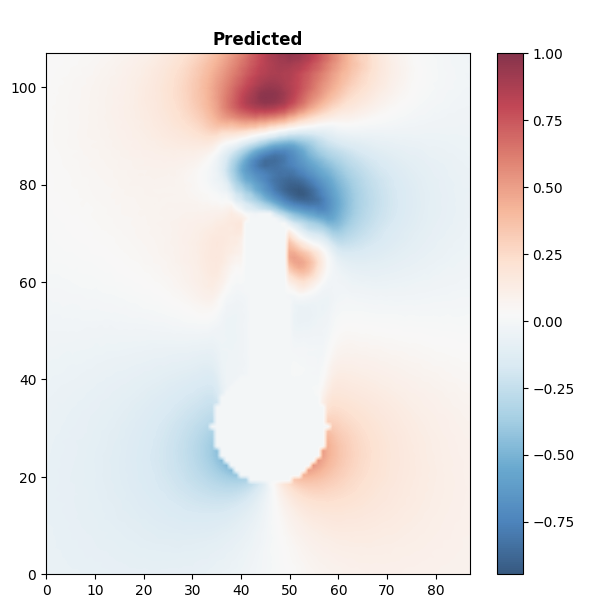} &
      \imgcell{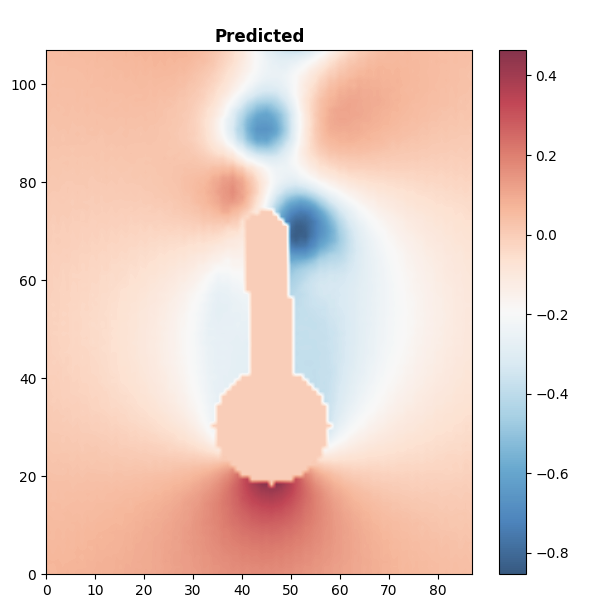} &
      \imgcell{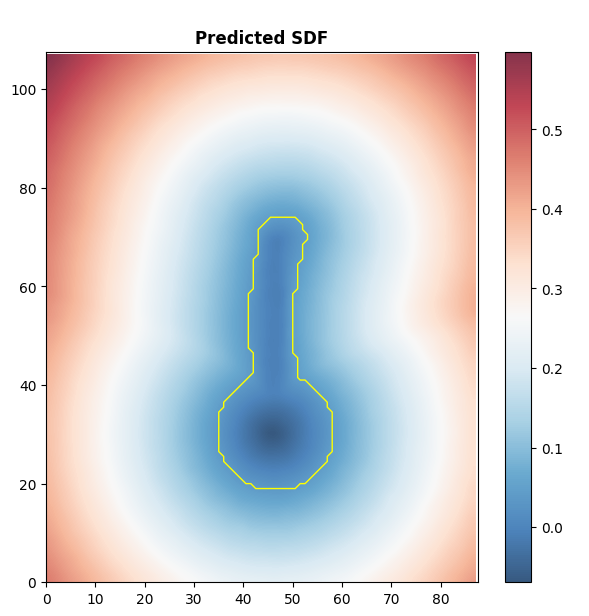} \\
      % 第2行
      \rowlabel{Coupled} &
      \imgcell{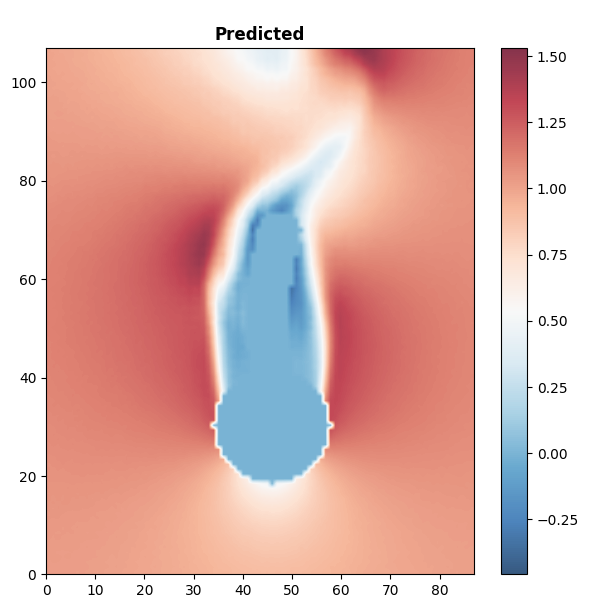} &
      \imgcell{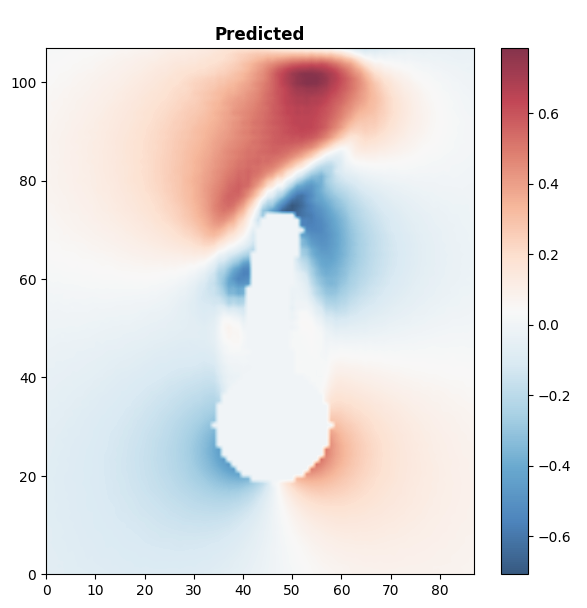} &
      \imgcell{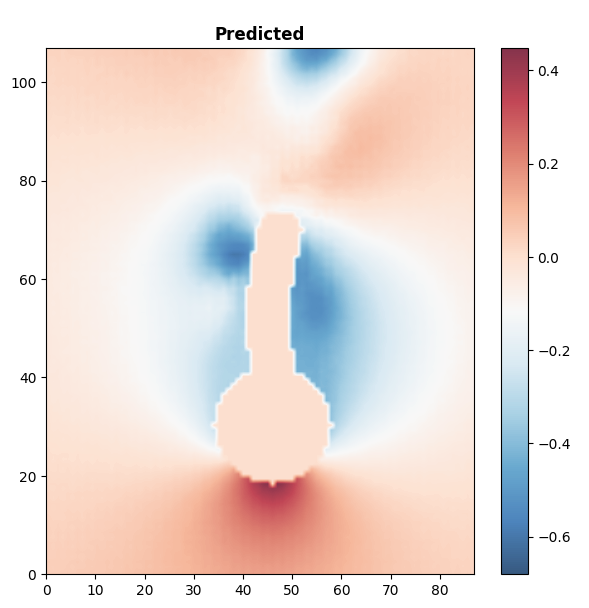} &
      \imgcell{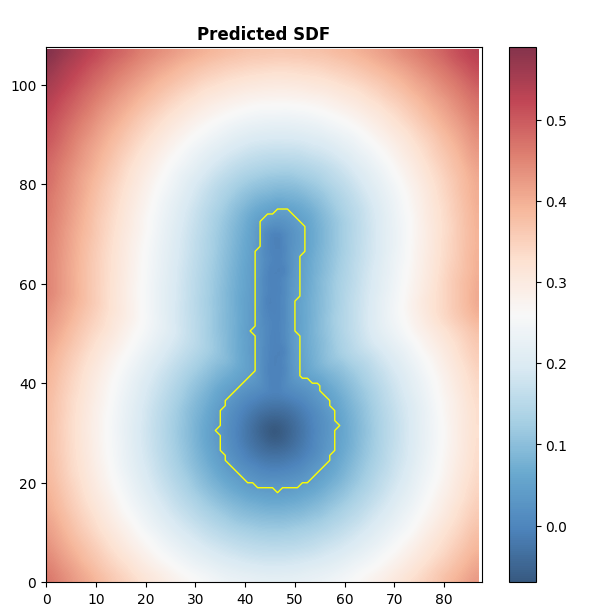} \\
      % 第3行
      \rowlabel{Joint} &
      \imgcell{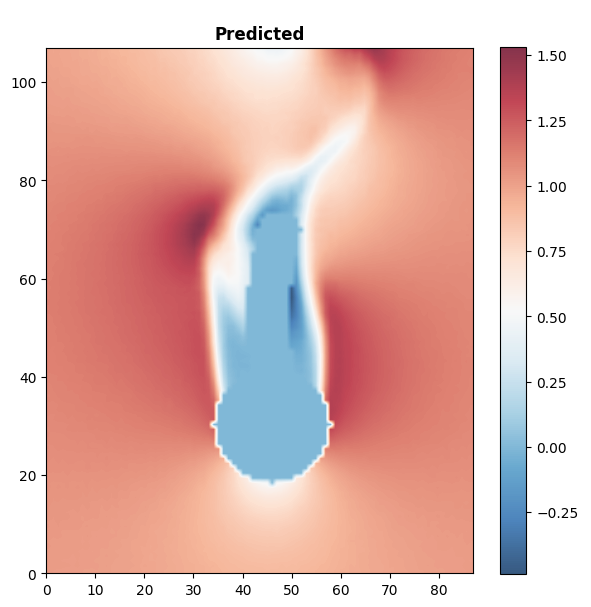} &
      \imgcell{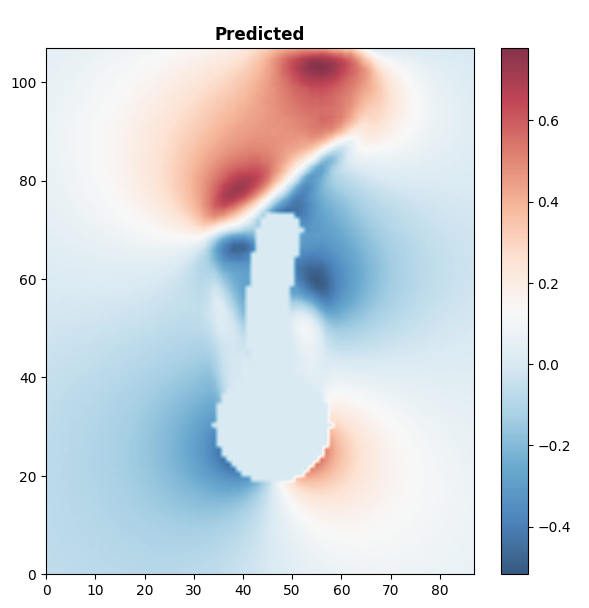} &
      \imgcell{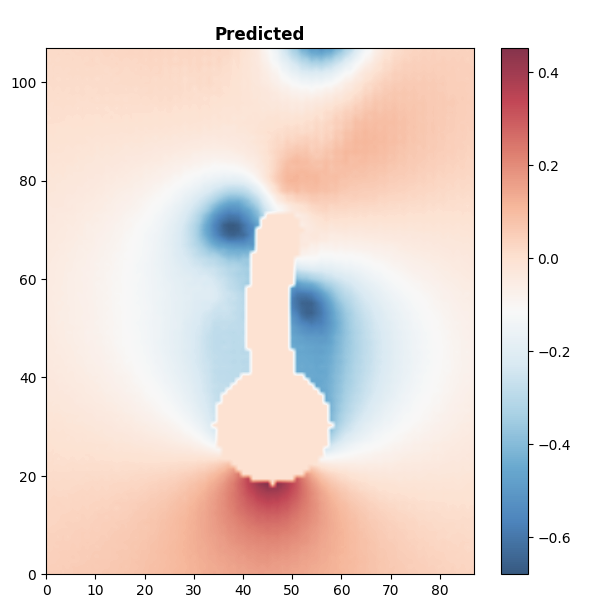} &
      \imgcell{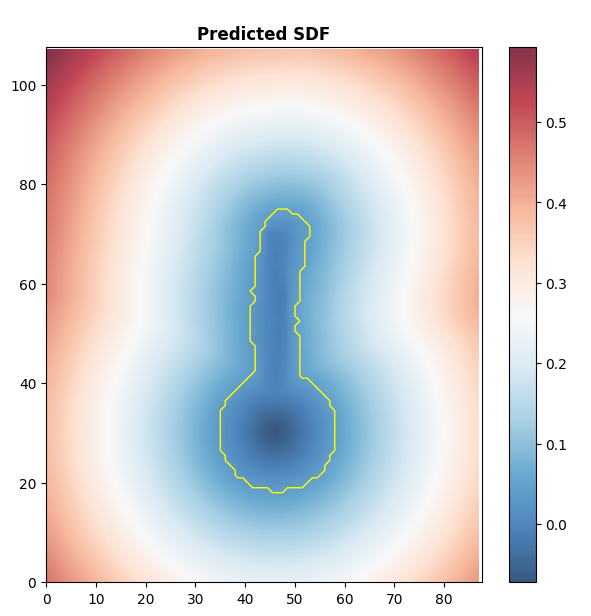} \\
    \end{tabular}
    }
    \caption{Comparison of results across different training strategies: Decoupled, Coupled, and Joint. Each column corresponds to a physical field (u, v, p, and SDF).}
    \label{fig:compare_joint}
\end{figure}

\section{Implementation Details}\label{appen: implementation details}

\subsection{Pseudocode for training}
\label{app:pseudocode_training}
\begin{algorithm}[H]
\caption{Training decoupled conditional velocity fields.}
\label{alg:training}
\begin{algorithmic}[1]
\State Input: decoupled datasets $\mathcal{D}_f,\mathcal{D}_g$
\State Initialize $\theta_f,\theta_g$
\For{epoch = 1 to $E$}
  \For{batch $(f_1,g_0)\sim\mathcal{D}_f$}
    \State Sample $t\sim\mathcal{U}[0,1]$, $z_f\sim\pi_{\mathcal{F}}, z_g\sim\pi_{\mathcal{G}}$
    \State Compute $f_t,g_t,\dot f_t$ and update $\theta_f$ to minimize $\|\dot f_t - \hat v_f(f_t,g_t,t;\theta_f)\|^2$
  \EndFor
  \For{batch $(f_0,g_1)\sim\mathcal{D}_g$}
    \State Sample $t\sim\mathcal{U}[0,1]$, $z_f'\sim\pi_{\mathcal{F}}, z_g'\sim\pi_{\mathcal{G}}$
    \State Compute $f_t,g_t,\dot g_t$ and update $\theta_g$ to minimize $\|\dot g_t - \hat v_g(f_t,g_t,t;\theta_g)\|^2$
  \EndFor
\EndFor
\end{algorithmic}
\end{algorithm}

\subsection{Backbone models}
This section provides detailed descriptions of the two backbone models used in our \proj framework: Fourier Neural Operator with modifications (FNO*) and Convolutional Neural Operator (CNO). Both models operate in functional spaces, making them particularly suitable for interaction problems where the solution fields are naturally represented as continuous functions over spatial domains.

\paragraph{FNO*.}

The FNO* model operates in the functional space by leveraging the Fourier transform to represent solution fields as combinations of Fourier modes, providing resolution independence, global receptive fields, and alignment with the spectral bias of neural networks toward smooth solutions typical in physical problems. Our implementation combines the Scalable Interpolant Transformer (SiT) framework with Fourier Neural Operator components, where the core computational unit is the spectral convolution layer operating in the Fourier domain as $\mathcal{F}^{-1}(\mathbf{R} \cdot (\mathcal{F}(v)))(\mathbf{x})$, with $\mathbf{R}$ being a learnable linear transformation and the model truncating high-frequency modes by only learning weights for the lowest $M_1 \times M_2 \times M_3$ modes. The architecture incorporates a compact 1D FNO-based final layer that operates along the patch dimension through the sequence $x = \text{OutputProj}(\text{GELU}(\text{BatchNorm}(\text{SpectralConv1d}(x) + \text{Conv1d}(x))))$, along with adaptive layer normalization for timestep conditioning $\text{AdaLN}(x, c) = \gamma(c) \cdot \text{LayerNorm}(x) + \beta(c)$, and both spatial and temporal sinusoidal positional embeddings for rectangular domains and temporal sequences.

Training of the FNO* model employs AdamW optimizer with learning rate $1 \times 10^{-4}$, cosine annealing with warm restarts, batch sizes of 8, gradient clipping with maximum norm of 1.0, Xavier uniform initialization for linear layers, and zero initialization for output layers to ensure stable convergence. The model processes inputs through spatial patch embedding combined with positional encoding, timestep embedding via sinusoidal encoding followed by MLP transformation, and alternating spatial-temporal transformer blocks where spatial attention operates on $(b \times f, \text{num\_patches}, \text{hidden\_dim})$ tensors while temporal attention processes $(b \times \text{num\_patches}, \text{frames}, \text{hidden\_dim})$ configurations, ultimately producing outputs through the compact FNO final layer that maintains spectral properties while reducing computational complexity through 1D spectral convolutions along the patch dimension.

\paragraph{CNO.}

The CNO model operates in functional space through a hierarchical U-Net-like architecture that preserves the infinite-dimensional nature of solution operators by employing anti-aliased operations, multi-scale processing, and explicit functional lifting and projection operations that map between finite-dimensional representations and infinite-dimensional function spaces. The architecture consists of CNO blocks with filtered Leaky ReLU activations to prevent aliasing during upsampling/downsampling operations, residual blocks incorporating Feature-wise Linear Modulation (FiLM) for timestep conditioning as $\text{FiLM}(x, t) = x \cdot (1 + \gamma(t)) + \beta(t)$, and a lift-project structure where the lifting operation $\text{Lift}: \mathbb{R}^{d_{\text{in}}} \to \mathbb{R}^{d_{\text{latent}}}$ maps input functions to higher-dimensional latent space while projection $\text{Project}: \mathbb{R}^{d_{\text{latent}}} \to \mathbb{R}^{d_{\text{out}}}$ maps back to output space. The network processes inputs through five distinct phases: lifting $x_0 = \text{Lift}(u_{\text{input}})$, encoding with downsampling CNO blocks and residual skip connections $x_{i+1} = \text{CNOBlock}^{(D)}(x_i)$ and $s_i = \text{ResBlock}^{N_{\text{res}}}(x_i)$, bottleneck processing $x_{\text{neck}} = \text{ResBlock}^{N_{\text{neck}}}(x_{\text{deepest}})$, decoding with upsampling and optional invariant blocks $x_i = \text{CNOBlock}^{(U)}(\text{Concat}(x_i, s_{N-i}))$, and final projection $u_{\text{output}} = \text{Project}(\text{Concat}(x_0, s_0))$.

Training of the CNO model utilizes Adam optimizer with learning rate $2 \times 10^{-4}$, exponential decay scheduler with $\gamma = 0.99$, batch size of 8 for 3D volumes, batch normalization without dropout for regularization, and MSE loss with gradient penalty for stability. The model integrates timestep information through sinusoidal encoding $t_{\text{emb}} = \text{MLP}([\sin(\omega_0 t), \cos(\omega_0 t), \ldots, \sin(\omega_{d/2} t), \cos(\omega_{d/2} t)])$ and employs carefully designed anti-aliasing filters with cutoff frequency $\omega_c = s / 2.0001$ slightly below the Nyquist frequency, half-width $\Delta = 0.8s - s/2.0001$, 6-tap Kaiser window filter design, and upsampling factor of 2 for LReLU operations to maintain the continuous function interpretation throughout the hierarchical processing while capturing both fine-grained local features and global structure necessary for accurate multiphysics interaction modeling.

\subsection{Hyperparameters of backbone models}
To ensure a fair comparison and minimize bias from hyperparameter tuning, we standardized the backbone hyperparameters for each task. Minor adjustments were made only to accommodate the inherent modeling complexity of different methods, as detailed in Tabel \ref{tab:hyperparameters_proj_fno}, \ref{tab:hyperparameters_m2pde_fno}, \ref{tab:hyperparameters_surrogate_fno}, \ref{tab:hyperparameters_proj_cno}, \ref{tab:hyperparameters_m2pde_cno}, \ref{tab:hyperparameters_surrogate_cno}.

\begin{table}[htbp]  
    \centering  
    \caption{Hyperparameters of \proj-FNO* for different datasets.}  
    \label{tab:hyperparameters_proj_fno}  
    \begin{tabular}{@{}lccccccc@{}}  
        \toprule  
        & \multicolumn{2}{c}{Turek Hron} 
        & \multicolumn{2}{c}{Double Cylinder}
        & \multicolumn{3}{c}{NT Coupling} \\
        \cmidrule(lr){2-3} \cmidrule(lr){4-5} \cmidrule(lr){6-8}
        Hyperparameter name & fluid & structure & fluid & structure & fluid & fuel & neutron \\
        \midrule  
        Number of Blocks & 4 & 4 & 4 & 4 & 4 & 4 & 4 \\
        FNO mode & 4 & 4 & 4 & 4 & 4 & 4 & 4 \\
        Hidden size & 256 & 128 & 256 & 128 & 256 & 128 & 128 \\
        Patch size & [2, 2] & [2, 2] & [2, 2] & [2, 2] & [2, 2] & [2, 2] & [2, 2] \\
        Number of blocks & 4 & 4 & 4 & 4 & 4 & 4 & 4 \\
        MLP ratios & 4 & 4 & 4 & 4 & 4 & 4 & 4 \\
        \bottomrule  
    \end{tabular}  
\end{table}  

\begin{table}[htbp]  
    \centering  
    \caption{Hyperparameters of M2PDE-FNO* for different datasets.}  
    \label{tab:hyperparameters_m2pde_fno}  
    \begin{tabular}{@{}lccccccc@{}}  
        \toprule  
        & \multicolumn{2}{c}{Turek Hron} 
        & \multicolumn{2}{c}{Double Cylinder}
        & \multicolumn{3}{c}{NT Coupling} \\
        \cmidrule(lr){2-3} \cmidrule(lr){4-5} \cmidrule(lr){6-8}
        Hyperparameter name & fluid & structure & fluid & structure & fluid & fuel & neutron \\
        \midrule  
        Number of Blocks & 4 & 4 & 4 & 4 & 4 & 4 & 4 \\
        FNO mode & 4 & 4 & 4 & 4 & 4 & 4 & 4 \\
        Hidden size & 256 & 128 & 256 & 128 & 256 & 128 & 128 \\
        Patch size & [2, 2] & [2, 2] & [2, 2] & [2, 2] & [2, 2] & [2, 2] & [2, 2] \\
        Number of blocks & 4 & 4 & 4 & 4 & 4 & 4 & 4 \\
        MLP ratios & 4 & 4 & 4 & 4 & 4 & 4 & 4 \\
        \bottomrule  
    \end{tabular}  
\end{table}  

\begin{table}[htbp]  
    \centering  
    \caption{Hyperparameters of Surrogate-FNO* for different datasets.}  
    \label{tab:hyperparameters_surrogate_fno}  
    \begin{tabular}{@{}lccccccc@{}}  
        \toprule  
        & \multicolumn{2}{c}{Turek Hron} 
        & \multicolumn{2}{c}{Double Cylinder}
        & \multicolumn{3}{c}{NT Coupling} \\
        \cmidrule(lr){2-3} \cmidrule(lr){4-5} \cmidrule(lr){6-8}
        Hyperparameter name & fluid & structure & fluid & structure & fluid & fuel & neutron \\
        \midrule  
        Number of Blocks & 4 & 4 & 4 & 4 & 4 & 4 & 4 \\
        FNO mode & 4 & 4 & 4 & 4 & 4 & 4 & 4 \\
        Hidden size & 128 & 64 & 128 & 64 & 128 & 64 & 64 \\
        Patch size & [2, 2] & [2, 2] & [2, 2] & [2, 2] & [2, 2] & [2, 2] & [2, 2] \\
        Number of blocks & 4 & 4 & 4 & 4 & 4 & 4 & 4 \\
        MLP ratios & 4 & 4 & 4 & 4 & 4 & 4 & 4 \\
        \bottomrule  
    \end{tabular}  
\end{table}  

\begin{table}[htbp]  
    \centering  
    \caption{Hyperparameters of \proj-CNO for different datasets.}  
    \label{tab:hyperparameters_proj_cno}  
    \begin{tabular}{@{}lccccccc@{}}  
        \toprule  
        & \multicolumn{2}{c}{Turek Hron} 
        & \multicolumn{2}{c}{Double Cylinder}
        & \multicolumn{3}{c}{NT Coupling} \\
        \cmidrule(lr){2-3} \cmidrule(lr){4-5} \cmidrule(lr){6-8}
        Hyperparameter name & fluid & structure & fluid & structure & fluid & fuel & neutron \\
        \midrule  
        Number of layers & 4 & 2 & 4 & 2 & 4 & 2 & 2 \\
        Number of residual blocks per level & 1 & 1 & 1 & 1 & 1 & 1 & 1 \\
        Number of residual blocks in the neck & 6 & 6 & 6 & 6 & 6 & 6 & 6 \\
        Channel Multiplier & 16 & 16 & 16 & 16 & 16 & 16 & 16 \\
        Size of Conv Kernel & 3 & 3 & 3 & 3 & 3 & 3 & 3 \\
        latent dimension in the lifting layer & 64 & 64 & 64 & 64 & 64 & 64 & 64 \\
        \bottomrule  
    \end{tabular}  
\end{table}  

\begin{table}[htbp]  
    \centering  
    \caption{Hyperparameters of M2PDE-CNO for different datasets.}  
    \label{tab:hyperparameters_m2pde_cno}  
    \begin{tabular}{@{}lccccccc@{}}  
        \toprule  
        & \multicolumn{2}{c}{Turek Hron} 
        & \multicolumn{2}{c}{Double Cylinder}
        & \multicolumn{3}{c}{NT Coupling} \\
        \cmidrule(lr){2-3} \cmidrule(lr){4-5} \cmidrule(lr){6-8}
        Hyperparameter name & fluid & structure & fluid & structure & fluid & fuel & neutron \\
        \midrule  
        Number of layers & 4 & 2 & 4 & 2 & 4 & 2 & 2 \\
        Number of residual blocks per level & 1 & 1 & 1 & 1 & 1 & 1 & 1 \\
        Number of residual blocks in the neck & 6 & 6 & 6 & 6 & 6 & 6 & 6 \\
        Channel Multiplier & 16 & 16 & 16 & 16 & 16 & 16 & 16 \\
        Size of Conv Kernel & 3 & 3 & 3 & 3 & 3 & 3 & 3 \\
        latent dimension in the lifting layer & 64 & 64 & 64 & 64 & 64 & 64 & 64 \\
        \bottomrule  
    \end{tabular}  
\end{table}  

\begin{table}[htbp]  
    \centering  
    \caption{Hyperparameters of Surrogate-CNO for different datasets.}  
    \label{tab:hyperparameters_surrogate_cno}  
    \begin{tabular}{@{}lccccccc@{}}  
        \toprule  
        & \multicolumn{2}{c}{Turek Hron} 
        & \multicolumn{2}{c}{Double Cylinder}
        & \multicolumn{3}{c}{NT Coupling} \\
        \cmidrule(lr){2-3} \cmidrule(lr){4-5} \cmidrule(lr){6-8}
        Hyperparameter name & fluid & structure & fluid & structure & fluid & fuel & neutron \\
        \midrule  
        Number of layers & 4 & 2 & 4 & 2 & 4 & 2 & 2 \\
        Number of residual blocks per level & 1 & 1 & 1 & 1 & 1 & 1 & 1 \\
        Number of residual blocks in the neck & 6 & 6 & 6 & 6 & 6 & 6 & 6 \\
        Channel Multiplier & 8 & 8 & 8 & 8 & 8 & 8 & 8 \\
        Size of Conv Kernel & 3 & 3 & 3 & 3 & 3 & 3 & 3 \\
        latent dimension in the lifting layer & 64 & 64 & 64 & 64 & 64 & 64 & 64 \\
        \bottomrule  
    \end{tabular}  
\end{table}  

\subsection{Inference scheme of baseline methods}

\paragraph{Surrogate-based scheme}
The surrogate model in this article refers to a neural network that directly maps system inputs to outputs in one step, unlike diffusion models that generate outputs step-by-step through denoising. For multiphysics problems, surrogate models are trained separately for each physical field using decoupled data and obtain coupled solutions through iteration.

Our work follows the convention from M2PDE \citep{zhang2025mpde} for implementing the surrogate model combination algorithm in a Picard-iteration approach. The surrogate models' combination algorithm are identical, as demonstrated in Alg 3. The relaxation factor $\alpha$ is set to 0.5 by default, and the tolerance $\epsilon_{\max}$ is set to 0 by default, effectively running the iteration for the maximum number of steps without early stopping based on convergence.

\begin{algorithm}
\caption{Surrogate Model Combination Algorithm for Multiphysics Simulation.}
\begin{algorithmic}[1]
\Require Compositional set of surrogate model $\epsilon^{i}_\theta(z_{\neq i}, C)$, $i = 1, 2, \dots, N$, outer inputs $C$, maximum number of iterations $M$, tolerance $\epsilon_{\max}$, relaxation factor $\alpha$.
\State Initialize constant fields $z_i$, $i = 1, \dots, N$, $m = 0$
\While{$m < M$ and $\epsilon > \epsilon_{\max}$}
\State $m = m + 1$
\For{$i = 1, \dots, N$}
\State $\hat{z}_i = z_i$
\State $z_i = \alpha \epsilon^{i}_\theta(z_{\neq i}, C) + (1 - \alpha) \hat{z}_i$
\EndFor
\State $\epsilon = L1(z_i - \hat{z}_i)$, $i = 1, \dots, N$
\EndWhile
\Ensure $z_1, z_2, \dots, z_N$
\end{algorithmic}
\end{algorithm}

The notation used in Algorithm 3 differs from this paper's notations and is briefly explained as follows: $\epsilon^{i}_\theta$ represents the trained surrogate model, such as a neural network like FNO, for the $i$-th physical field; $z_{\neq i}$ denotes the predictions from all other physical fields used as conditions; $C$ represents the outer system inputs, such as initial conditions or boundary values; $z_i$ are the predicted fields for each physical process; $\hat{z}_i$ is a temporary copy of the previous prediction; $\alpha$ is the relaxation factor, a scalar between 0 and 1; $M$ is the maximum number of iterations; and $\epsilon_{\max}$ is the convergence tolerance threshold. The L1 norm ($L1$) computes the absolute difference as a convergence metric, which can be adapted to other norms like L2 if needed.

The algorithm takes trained single-field surrogate models, one per physical process and trained on decoupled data, and iteratively combines them to approximate a coupled multiphysics solution. It starts with initial constant fields $z_i$, such as a uniform value like 0.5 or problem-specific priors, along with outer inputs $C$. In each iteration, it updates each field's prediction conditioned on the others and applies a moving average for stabilization. The loop ends when the maximum iterations $M$ are reached or the convergence metric $\epsilon$, based on the L1 norm of prediction changes, falls below $\epsilon_{\max}$.

The moving average with relaxation factor $\alpha$ prevents divergence or instability in iterative updates, particularly in strongly coupled systems where direct updates ($\alpha=1$) might oscillate or fail to converge. This approach resembles under-relaxation in traditional numerical methods like Picard iteration, blending new predictions with previous ones weighted by $\alpha$ and $1-\alpha$ to ensure smoother convergence.

\paragraph{M2PDE scheme}
% psuedocode 抄m2pde
The M2PDE framework composes multiple diffusion models, each trained on a single physical process using decoupled data, to simulate coupled multiphysics systems through iterative conditional sampling during inference. Unlike surrogate models that directly map inputs to outputs, M2PDE generates solutions via a reverse diffusion process, implicitly handling interactions between fields.

\begin{algorithm}
\caption{Inference of Multiphysics Simulation with the Composition of Diffusion Models.}
\begin{algorithmic}[1]
\Require Compositional set of diffusion models $\epsilon^{i}_\theta(z_{\neq i}, C, t)$, $i = 1, 2, \dots, N$, outer inputs $C$, number of reverse diffusion steps $T$, number of outer iterations $K$.
\For{$k = 1, \dots, K$}
\State $z_T \sim \mathcal{N}(0, I)$
\For{$t = T, \dots, 1$}
\For{$i = 1, \dots, N$}
\State $\hat{z}_i^0 = \epsilon^{i}_\theta(z_{\neq i}^t, C, t)$  \Comment{Estimate clean sample for component $i$}
\EndFor
\State Compose $\hat{z}^0 = (\hat{z}_1^0, \dots, \hat{z}_N^0)$
\State $z^{t-1} = p_\theta(z^t | \hat{z}^0, t)$  \Comment{Perform conditional sampling step}
\EndFor
\State $z^{(k)} = z^0$
\EndFor
\Ensure $z^{(K)}_1, z^{(K)}_2, \dots, z^{(K)}_N$
\end{algorithmic}
\end{algorithm}

The composition algorithm is identical, as demonstrated in Alg.~4. The number of outer iterations $K$ is set to 2 by default. As introduced in the surrogate model combination algorithm, M2PDE schema shares the same notations: $\epsilon^{i}_\theta$ represents the trained diffusion model (score network) for the $i$-th physical field; $z_{\neq i}^t$ denotes the current noisy estimates from all other physical fields at diffusion timestep $t$; $C$ represents the outer system inputs, such as initial conditions or boundary values; $z^t$ is the noisy state at timestep $t$; $\hat{z}_i^0$ is the estimated clean (denoised) sample for field $i$; $T$ is the total number of reverse diffusion steps; $K$ is the number of outer iterations; and $p_\theta$ denotes the conditional sampling function in the diffusion process.

The algorithm takes trained single-field diffusion models, one per physical process and trained on decoupled data, and iteratively composes them during reverse diffusion to generate a coupled multiphysics solution. It starts by sampling initial noise $z_T$ from a standard normal distribution, along with outer inputs $C$. In each outer iteration $k$, it performs a full reverse diffusion process: for each timestep $t$ from $T$ to 1, it estimates denoised samples for each field conditioned on the others' current states, composes them, and advances the sampling step. The outer loop refines the solution over $K$ iterations, with the final output being the denoised fields after the last iteration.

The outer iterations ($K$) enable progressive refinement of field interactions, improving accuracy in strongly coupled systems. Conditional sampling ensures that each field's generation accounts for others, implicitly capturing multiphysics couplings without explicit iteration during training.

\section{Limitations and future work}\label{appen: limitation}

\proj has been evaluated on two-dimensional synthetic distribution and multiphysical problems, without yet being applied to more complex systems like 3D or real-world engineering problems. Additionally, it has not considered irregular geometric boundaries, which is another notable limitation. These limitations arise because we primarily focus on introducing our core paradigm, and are further constrained by limited data availability, which also opens up exciting and broad avenues for future research. Besides, we have not yet explored incorporating physical priors into our framework. Such integration may further improve our performance on out-of-distribution data, thereby enhancing both the accuracy and robustness of coupled predictions after training on decoupled data. Looking ahead, we also plan to leverage \proj to unlock the value of abundant historical single-physics data and integrate cross-institutional datasets for learning complex coupled physics.

\end{document}